\crefname{hypothesis}{Hypothesis}{Hypotheses}
\newtheorem{assumption}[theorem]{Assumption}
\newcommand\xbm{{\ensuremath{\bm{x}}}}
\newcommand\kbm{{\ensuremath{\bm{k}}}}
\newcommand\Kbm{{\ensuremath{\bm{K}}}}
\newcommand\fbm{{\ensuremath{\bm{f}}}}
\newcommand\ybm{{\ensuremath{\bm{y}}}}
\newcommand\Ibm{{\ensuremath{\bm{I}}}}
\title{On the convergence rate of noisy Bayesian Optimization with Expected Improvement}
\author{Jingyi Wang\thanks{Center for Applied Scientific Computing, Lawrence Livermore National Laboratory,
Livermore, CA 
  (\email{wang125@llnl.gov}).}
\and Haowei Wang \thanks{National University of Singapore (\email{haowei\_wang@u.nus.edu)}}
\and Nai-Yuan Chiang\footnotemark[1]
\and Cosmin G. Petra\footnotemark[1]
}
\begin{document}
\newcommand{\Rbb}{\ensuremath{\mathbb{R} }}
\newcommand{\Pbb}{\ensuremath{\mathbb{P} }}
\newcommand{\Cbb}{\ensuremath{\mathbb{C} }}
\newcommand{\Ebb}{\ensuremath{\mathbb{E} }}
\newcommand{\Vbb}{\ensuremath{\mathbb{V} }}
\newcommand{\Nbb}{\ensuremath{\mathbb{N} }}
\newcommand{\norm}[1]{\left\lVert {#1} \right\rVert}
\newcommand\epsbold{{\ensuremath{\boldsymbol{\epsilon}}}}

\maketitle

\begin{abstract}
  Expected improvement (EI) is one of the most widely used acquisition functions in Bayesian optimization (BO). 
  Despite its proven success in applications for decades, important open questions remain on the theoretical convergence behaviors and rates for EI. In this paper, we contribute to the convergence theory of EI in three novel and critical areas.
  First, we consider objective functions that fit under the Gaussian process (GP) prior assumption, whereas existing works mostly focus on functions in the reproducing kernel Hilbert space (RKHS). 
  Second, we establish for the first time the asymptotic error bound and its corresponding rate for GP-EI with noisy observations under the GP prior assumption.
  Third, by investigating the exploration and exploitation properties of the non-convex EI function, we establish improved error bounds of GP-EI for both the noise-free and noisy cases. 
%  Numerical experiments of benchmark problems are presented to demonstrate the theories.
\end{abstract}
\begin{keywords}
   Bayesian optimization, global optimization, expected improvement, convergence rate, Gaussian process, noisy observations, non-convex, exploration, exploitation
\end{keywords}

% REQUIRED
%\begin{AMS}
%62L10, 90C26 
%\end{AMS}

\section{Introduction}
Bayesian optimization (BO) is a derivative-free optimization method for black-box functions~\cite{lizotte2008,jones2001taxonomy,bosurvey2023} that enjoys enormous success in many applications such as hyperparameter optimization in machine learning~\cite{wu2019hyperparameter}, structural design~\cite{mathern2021}, robotics~\cite{calandra2016}, additive manufacturing~\cite{wang2023optimization}, inertial confinement fusion design~\cite{wang2023multifidelity}, etc. In recent years, BO has been increasingly applied to novel area such as expensive integral objectives~\cite{frazier2022}, distributionally robust optimization~\cite{shapiro2023}, risk-averse optimization~\cite{cakmak2020bayesian}, etc.
In the classic form, BO aims to solve the optimization problem 
\begin{equation} \label{eqn:opt-prob}
 \centering
  \begin{aligned}
	  &\underset{\substack{\xbm}\in C}{\text{minimize}} 
	  & & f(\xbm), \\
  \end{aligned}
\end{equation}
where $\xbm$ is the decision variable, compact set $C\subset \Rbb^d$, and $f:\Rbb^d\to \Rbb$ is the objective function. 

A surrogate model, often a Gaussian process (GP), is used to approximate the black-box objective function \eqref{eqn:opt-prob} ~\cite{frazier2018}. An acquisition function guides the selection of sequential points for observations. One of the most successful and widely used acquisition functions is the expected improvement (EI)~\cite{schonlau1998global}.
EI calculates the conditional expectation of an improvement function such that both the mean value and variance of the GP are used in search of the next sample. 
With a closed form, it is simple to implement in that only the cumulative distribution function (CDF) and probability density function (PDF) of the standard normal distribution are required. 
Thanks to its effectiveness and efficiency, EI has witnessed rich extensions in literature, \textit{e.g.}, 
%EI in the noisy setting~\cite{letham2019constrained}, 
scalable methods~\cite{eriksson2021scalable}, constrained EI~\cite{gardner2014}, etc. 
The classic noise-free BO algorithm using EI is also referred to as the efficient global optimization (EGO) algorithm~\cite{jones1998efficient}.

Despite its wide adoption and success, existing works on the theoretical convergence of Bayesian optimization with expected improvement, hereby referred to as GP-EI throughout the paper, have clear gaps. 
The two main streams of research study the asymptotic convergence and the cumulative regret bound of BO algorithms, the latter focusing on the 	performance over the entire optimization path. 
The asymptotic convergence analysis of GP-EI is limited by the assumptions on the objective and whether observation noise is allowed.
Under the assumptions that $f$ is in the RKHS of the GP kernel, also called the frequentist setting, with no observation noise,
the asymptotic convergence rates of GP-EI is established in~\cite{bull2011convergence}, where the convergence rates differ based on the kernels used in the GP. 
Specifically, define the error $r_t^0:=f(\xbm_t^+)-f^*$, where $\xbm_t^+$ is the sample that produces the best current observation among $t$ samples and $f^*=\underset{\substack{\xbm\in C}}{\text{argmin}} f(\xbm)$.
The error bound is $\mathcal{O}(t^{-\frac{1}{d}})$ for squared exponential (SE) kernels and $\mathcal{O}(t^{-\frac{\min\{\nu,1\}}{d}}\log^{\alpha}(t))$ for Matérn kernels, where $\nu$ is the parameter of the Matérn kernel and $\alpha=\frac{1}{2}$ for $\nu\in\Nbb$ and $\alpha=0$ otherwise. 
However, to the best of our knowledge, the extensions of~\cite{bull2011convergence} to \textit{noisy} GP-EI or when $f$ follows different assumptions, \textit{e.g.}, $f$ is sampled from a GP, are yet to be established.

The other stream of research has focused on the cumulative regret bound, based on the seminal work of~\cite{srinivas2009gaussian}.
In~\cite{wang2014theoreybo}, the authors added additional steps and auxiliary hyper-parameters to modify EI. Further, they assumed bounded conditions in RKHS on the hyper-parameters to derive a cumulative regret bound.
In~\cite{nguyen17a}, the authors used a stopping criterion to help bound the prediction error. However, the analysis relies on a critical yet not established result that the standard deviation at the optimal point decreases at the same rate as the sampled points. In~\cite{hu2022adjustedeiregret}, the authors attempted to bound the standard deviation at the optimal point via structured initial sampling and achieved an improvement on the regret bound for SE kernels. Their method requires an infinite number of initial samples as the total number of samples increases.
In~\cite{tran2022regret}, the authors proved no-regret for their modified EI, which includes additional control parameters for EI that could become unbounded as $t$ increases.

In most literature mentioned above, $f$ is assumed to be in RKHS of the kernel (see Definition~\ref{def:rkhs}).
Notably however, in~\cite{srinivas2009gaussian,lederer2019uniform}, the authors analyzed functions $f$ that are sampled from GP priors, also called the Bayesian setting. 
%as well, an important class of functions absent from theoretical BO-EI literature. 
While the RKHS assumption is critical and general, limitations exist in the functions it represents. 
Functions in RKHS and functions sampled from GP priors do not encompass each other~\cite{srinivas2009gaussian}.
In~\cite{narcowich2006sobolev}, the authors stated that the smoother the kernel, the smaller the RKHS could be for which the convergence theories apply. In cases such as infinite-dimensional Gaussian,~\cite{van2011information} calls RKHS small relative to the support of the GP prior it is based on. 
Additionally, in~\cite{lederer2019uniform}, the authors argued that the constants in RKHS are difficult to compute in practice. 
Meanwhile, with proper choices of kernels, a function sampled from a GP prior can learn continuous functions with arbitrary precision~\cite{steinwart2001influence,van2011information}.
In~\cite{bect2019supermartingale}, the authors claimed that sample paths of GP often do not satisfy the RKHS assumption. 
Given its wide adoption in literature, the error bound and convergence analysis of GP-EI where the objective function $f$ is sampled from a GP prior is therefore of utmost importance and interest. 

Other works related to asymptotic convergence of GP-EI include~\cite{vazquez2010convergence}, where the authors showed that  the sequence of iterates generated by GP-EI are dense
In~\cite{ryzhov2016convergence}, the author showed that for the ranking and selection problems, a modified EI is identical to optimal computing budget allocation (OCBA) algorithm, via the asymptotic sampling ratios.
In~\cite{bect2019supermartingale}, the author proved consistency of EI for noiseless GP with continuous sample paths.

%In~\cite{lederer2019uniform}, the author pointed out that the RKHS analysis of GP is similar to the work of error bounds for radial basis function interpolation of scattered data~\cite{wu1993local,wendland2004scattered}.
%In~\cite{wu1993local}, the local error estimate is shown to be bounded above by the densities of data points, which is defined at a given point $\xbm$ by the maximum Euclidean distance of closest points in a neighborhood of $\xbm$. This quantity is also referred to as filled distance in~\cite{stuart2018posterior} and is directly correlated with the standard deviation at $\xbm$. In~\cite{hu2022adjustedeiregret}, the authors established the upper bound of global posterior variance with filled distance.

%To the best of our knowledge, the following major gaps remain in the asymptotic convergence analysis for GP-EI. 
%First, under the RKHS assumption, the convergence rate of GP-EI with noisy observation, \textit{i.e.}, an extension of~\cite{bull2011convergence}, has yet to be established. 
%We detail some of the the technical challenges in remark~\ref{remark:EI-convergence-bull}.
%Second, the asymptotic convergence behavior and rate of GP-EI under the assumption that $f$ is sampled from a GP prior remains an open question, with or without noise. 
%Third, the trade-off between exploration and exploitation of EI and its impact on the convergence have rarely been studied.  
 
In this paper, we address gaps in the asymptotic convergence analysis of GP-EI mentioned above. 
Our contributions can be summarized as follows. 
First, we prove for the first time the asymptotic convergence rate for noisy GP-EI under the GP prior assumption for $f$ (see Assumption~\ref{assp:gp}).
Given $\delta\in(0,1)$, we prove that the upper bound for the noisy error measure $r_t:=y^+_t-f^*$, where $y^+_t$ is the best current observation among $t$ samples, is 
$\mathcal{O} \left(t^{-\frac{1}{2}}\log(t)^{\frac{d+1}{2}}\right)$ for SE kernels and $\mathcal{O}(t^{\frac{-\nu}{2\nu+d}}\log^{\frac{\nu}{2\nu+d}}(t))$ for Matérn kernels, with probability greater than $1-\delta$. 
In the noisy GP-EI case, the convergence rate refers to the rate of decrease for $r_t$, or equivalently, the rate at which $y_t^+$ reduces to values no larger than $f^*$.
Second, we present the asymptotic convergence rates without noise under the GP prior assumption, thereby extending the results in~\cite{bull2011convergence}, which are obtained under the RKHS assumption. Notably, we achieve the same convergence rates as~\cite{bull2011convergence} when using the same kernels.
Third, we prove improved error bounds for GP-EI with or without noise under the GP prior assumption.
The improved error bounds stem from our novel quantification of the trade-off between exploitation and exploration of EI, the latter being reflected in the posterior standard deviation, at non-sample points. The emerging analysis technique is general and can be used to analyze other convergence measure, \textit{e.g.}, the cumulative regret of GP-EI. As an example, we prove a better error bound compared to~\cite{bull2011convergence}, under the RKHS assumption and noiseless case.

This paper is organized as follows. In Section~\ref{se:bo}, we describe the basis of the GP-EI algorithm and other background information. In Section~\ref{se:analysis}, we prove multiple properties of EI and the preliminary theoretical results under GP prior assumptions, which are the foundation of the  analysis in Section~\ref{se:convergence}.
We present our asymptotic convergence analysis on noisy GP-EI under GP prior assumptions in Section~\ref{se:convergence}.
In Section~\ref{se:rkhs}, we improve asymptotic convergence results to noiseless GP-EI under RKHS assumptions.
Numerical illustrations of our theoretical developments are presented in Section~\ref{se:examples}, while conclusions are offered in Section~\ref{se:conclusion}.

\section{Bayesian optimization and expected improvement}\label{se:bo}
There are two main components of Bayesian optimization algorithms: a GP surrogate model for the objective function $f$, and an acquisition function that measures the benefit of the next sample and guides the sequential search.
We introduce both in the subsequent subsections.
\subsection{Gaussian process}\label{se:gp}
A GP takes the prior distribution of samples to be multivariate Gaussian distribution. Consider a zero mean GP with the kernel (\textit{i.e.}, covariance function) $k(\xbm,\xbm'):\Rbb^d\times\Rbb^d\to\Rbb$, denoted as $GP(0,k(\xbm,\xbm'))$. 
At each sample $\xbm_i\in C, i=1,2,\dots$, we consider observation noise $\epsilon_i$ for $f$ so that the observed function value is 
$y_i = f(\xbm_i)+\epsilon_i$.  
Thus, the prior distribution on 
$\xbm_{1:t}=[\xbm_1,\dots,\xbm_t]^T$, $\fbm_{1:t}=[f(\xbm_1),\dots,f(\xbm_t)]^T$ and $\ybm_{1:t}=[y_1,\dots,y_t]^T$
is $\fbm_{1:t}\sim\mathcal{N} (\mathbf{0}, \Kbm_t)$, 
where $\mathcal{N}$ denotes the normal distribution, 
and $\Kbm_t = [k(\xbm_1,\xbm_1),\dots,k(\xbm_1,\xbm_t); \dots; k(\xbm_t,\xbm_1),\dots,k(\xbm_t,\xbm_t)]$. 

We assume that the noise follows an independent zero mean Gaussian distribution with variance $\sigma^2$, \textit{i.e.}, $\epsilon_i \sim\mathcal{N}(0,\sigma^2), i=1,\dots,t$. 
%In section~\ref{se:rkhs}, $\epsilon_t$ will admit a more general assumption.
The posterior distribution of  $f(\xbm) | \xbm_{1:t},\ybm_{1:t} \sim \mathcal{N} (\mu(\xbm),\sigma^2(\xbm))$ can be inferred using Bayes' rule. 
The posterior mean $\mu_t$ and variance  $\sigma^2_t$ are  
\begin{equation} \label{eqn:GP-post}
 \centering
  \begin{aligned}
  &\mu_t(\xbm)\ =\ \kbm_t(\xbm) \left(\Kbm_t+ \sigma^2 \Ibm\right)^{-1} \ybm_{1:t} \\
  &\sigma^2_t(\xbm)\ =\
k(\xbm,\xbm)-\kbm_t(\xbm)^T \left(\Kbm_t+\sigma^2 \Ibm\right)^{-1}\kbm_t(\xbm)\ ,
\end{aligned}
\end{equation}
where $\kbm_t(\xbm)= [k(\xbm_1,\xbm),\dots,k(\xbm_t,\xbm)]^T$.

We make standard assumptions that $k(\xbm,\xbm')\leq 1$ and $k(\xbm,\xbm)=1$, $\forall \xbm,\xbm'\in C$.
Choices of the kernels include the SE function, Matérn functions, etc~\cite{frazier2018}.
The improvement function of $f$ given $t$ samples is defined as 
\begin{equation} \label{eqn:improvement}
 \centering
  \begin{aligned}
    I_t(\xbm) = \max\{ y^+_{t} -f(\xbm),0  \}, 
  \end{aligned}
\end{equation}
where $y^+_{t}= \underset{\substack{y_i\in \ybm_{1:t}}}{\text{argmin}} \ y_i$ is the best existing observed objective.

\subsection{Expected improvement}
In this paper, we consider the probability space $(\Omega, \mathcal{F}, \Pbb)$, where $\Omega$ is the sample
space, $\mathcal{F}$ the $\sigma$-algebra generated by the subspace of $\Omega$ and $\Pbb$ the probability measure on $\mathcal{F}$. 
We denote a filtration (ordered increasing $\sigma$-algebra) of $\mathcal{F}$ as $\mathcal{F}_t$, which is often made equivalent to the history until $t$ defined by: $\mathcal{H}_t=\{(\xbm_s,y_s):s=1,\dots,t\}$ in BO literature~\cite{chowdhury2017kernelized}.
The EI acquisition function is defined as the expectation of~\eqref{eqn:improvement} conditioned on $t$ samples and GP, with a closed form expression:
\begin{equation} \label{eqn:EI-1}
 \centering
  \begin{aligned}
       EI_t(\xbm) =   (y_{t}^+-\mu_{t}(\xbm))\Phi(z_{t}(\xbm))+\sigma_{t}(\xbm)\phi(z_{t}(\xbm)), 
  \end{aligned}
\end{equation}
where 
  $z_{t}(\xbm) = \frac{y^+_{t}- \mu_{t}(\xbm)}{\sigma_{t}(\xbm)}$.
The functions 
$\phi$ and~$\Phi$ are the PDF and CDF of the standard normal distribution, respectively.

To further analyze the properties of EI, we distinguish between its exploration term, \textit{e.g.}, $\sigma_t(\xbm)$ in~\eqref{eqn:EI-1}, and exploitation term, \textit{e.g.}, $y_t^+-\mu_t(\xbm)$ in~\eqref{eqn:EI-1}, and define $EI(a,b):\Rbb\times\Rbb\to\Rbb$ as
\begin{equation} \label{eqn:EI-ab}
 \centering
  \begin{aligned}
       EI(a,b) = a \Phi\left(\frac{a}{b}\right)+b \phi\left(\frac{a}{b}\right).  
  \end{aligned}
\end{equation}
From the assumptions on kernels in Section~\ref{se:gp}, the exploration parameter $b$ is constrained by $b\in(0,1]$. We view $a$ and $b$ as two independent variables.
For a given $\xbm$, if $a_t=y^+_{t}-\mu_{t}(\xbm)$ and $b_t=\sigma_{t}(\xbm)\in [0,1]$, $EI(a_t,b_t)=EI_t(\xbm)$. 

Another commonly used function in the analysis of EI is  $\tau:\Rbb\to\Rbb$ defined as 
\begin{equation} \label{def:tau}
 \centering
  \begin{aligned}
   \tau(z) = z\Phi(z) + \phi(z). 
  \end{aligned}
\end{equation}
Thus, EI can be written as $EI_t(\xbm) = \sigma_t(\xbm) \tau(z_t(\xbm))$.
The next sample $\xbm_{t+1}$ in the GP-EI algorithm is chosen by maximizing the acquisition function over $C$, \textit{i.e.},  
\begin{equation} \label{eqn:acquisition-1}
 \centering
  \begin{aligned}
      \xbm_{t+1} = \underset{\substack{x\in C}}{\text{argmax}}  EI_t (\xbm).
  \end{aligned}
\end{equation}
In order to solve~\eqref{eqn:acquisition-1}, optimization algorithms such as L-BFGS or random 
search can be used.
The GP-EI algorithm is given in Algorithm~\ref{alg:boei}, where a stopping criterion can be used, \textit{e.g.}, a prescribed computational budget.
\begin{algorithm}[H]
 \caption{GP-EI algorithm}\label{alg:boei}
  \begin{algorithmic}[1]
	  \STATE{Choose $\mu_0=0$, $k(\cdot,\cdot)$, $\alpha$, and $T_0$ initial samples $\xbm_i, i=0,\dots,T_0$. Observe $y_i$.} 
	  \STATE{Train the Gaussian process surrogate model for $f$ on the initial samples.}
  \FOR{$t=T_0,T_0+1,\dots$}
	  \STATE{Find $\xbm_{t+1}$ based on~\eqref{eqn:acquisition-1}.}
	  \STATE{Observe $y_{t+1}=f(\xbm_{t+1})+\epsilon_{t+1}$. \;}
	  \STATE{Train the surrogate model with the addition of $\xbm_{t+1}$ and $y_{t+1}$.\;}
	  \IF {Stopping criterion satisfied} 
              \STATE{Exit}
	  \ENDIF
  \ENDFOR
  \end{algorithmic}
\end{algorithm}
\subsection{Additional background}
We present the remaining necessary background information and our assumptions in this section. 
Our analysis uses repeatedly union bound, or Boole's inequality, which we provide in the following lemma for completeness.
\begin{lemma}\label{lem:unionbound}
  For a countable set of events $A_1,A_2,\dots$, we have 
 \begin{equation*} \label{eqn:union-bound-1}
  \centering
  \begin{aligned}
    \Pbb(\bigcup_{i=1}^{\infty} A_i) \leq \sum_{i=1}^{\infty} \Pbb(A_i).
  \end{aligned}
\end{equation*}
\end{lemma}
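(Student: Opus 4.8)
The plan is to reduce the claimed inequality to the countable additivity axiom of the probability measure $\Pbb$, the only subtlety being that the events $A_i$ may overlap. The standard device is to replace the $A_i$ by a \emph{disjointified} sequence that preserves the union but is pairwise disjoint, after which additivity and monotonicity finish the argument.

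First I would define $B_1 = A_1$ and, for $i \geq 2$,
\begin{equation*}
  B_i = A_i \setminus \bigcup_{j=1}^{i-1} A_j .
\end{equation*}
Since $\mathcal{F}$ is a $\sigma$-algebra, it is closed under countable unions and complements, so each $B_i$ lies in $\mathcal{F}$ and $\Pbb(B_i)$ is well defined. By construction $B_i \subseteq A_i$ for every $i$.

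Next I would verify the two structural properties that make the construction useful. For pairwise disjointness, take $i < k$: because $A_i \subseteq \bigcup_{j<k} A_j$, the set $B_k = A_k \setminus \bigcup_{j<k} A_j$ is disjoint from $A_i$, and since $B_i \subseteq A_i$ we get $B_i \cap B_k = \emptyset$. For equality of the unions, $\bigcup_i B_i \subseteq \bigcup_i A_i$ is immediate from $B_i \subseteq A_i$; conversely, given $\omega \in \bigcup_i A_i$, let $i^\star$ be the smallest index with $\omega \in A_{i^\star}$, so that $\omega \notin A_j$ for all $j < i^\star$ and hence $\omega \in B_{i^\star}$. Thus $\bigcup_i B_i = \bigcup_i A_i$.

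Finally I would combine these facts. Applying countable additivity of $\Pbb$ to the disjoint family $\{B_i\}$ and then monotonicity $\Pbb(B_i) \le \Pbb(A_i)$ (from $B_i \subseteq A_i$) gives
\begin{equation*}
  \Pbb\Bigl(\bigcup_{i=1}^{\infty} A_i\Bigr)
  = \Pbb\Bigl(\bigcup_{i=1}^{\infty} B_i\Bigr)
  = \sum_{i=1}^{\infty} \Pbb(B_i)
  \le \sum_{i=1}^{\infty} \Pbb(A_i),
\end{equation*}
which is the desired bound. There is no genuine obstacle here; the one point deserving care is ensuring the $B_i$ are measurable and pairwise disjoint so that countable additivity legitimately applies, and this is precisely what the disjointification construction guarantees.
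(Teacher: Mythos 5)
Your proof is correct and complete: the disjointification $B_1=A_1$, $B_i=A_i\setminus\bigcup_{j<i}A_j$ is the standard textbook argument for Boole's inequality, and each step (measurability of the $B_i$, pairwise disjointness, equality of the unions, countable additivity plus monotonicity) is justified properly. The paper itself states this lemma without proof, treating it as a known fact included only for completeness, so there is no in-paper argument to compare against; your write-up would serve as a valid proof if one were required.
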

In order to find the rate of decrease of the error bound in the noisy case, we use the following well-established lemma for GP based on information theory~\cite{srinivas2009gaussian}. 
%The maximum information gain is defined below.
%\begin{definition}\label{def:infogain}
%  Given $\xbm_{1:t}=[\xbm_1,\dots,\xbm_t]^T$ and $S_t:=\ybm_{1:t}=[\ybm_1,\dots,\ybm_t]^T$, the mutual information between $f$ and $\ybm_{1:t}$ is $I(\ybm_S;f)=H(\ybm_S)-H(\ybm_S|f)$, where $H$ is the entropy. The maximum information gain $\gamma_T$ after $T$ samples is $\gamma_T = \max_{S\subset C,|S|=T} I(\ybm_S;\fbm_S)$. 
%\end{definition}
%The entropy $H$ in information theory 
%The following Lemmas are well-established results from~\cite{srinivas2009gaussian} on the information gain and variances.
\begin{lemma}\label{lem:variancebound}
 The sum of GP posterior variance $\sigma_{t}$ at next sample $\xbm_{t+1}$ satisfies
 \begin{equation} \label{eqn:var-1}
  \centering
  \begin{aligned}
    \sum_{i=1}^t  \sigma_{i-1}^2(\xbm_i) \leq C_{\gamma} \gamma_t, 
  \end{aligned}
\end{equation}
 where $C_{\gamma} = \frac{2}{log(1+\sigma^{-2})}$ and $\gamma_t$ is the maximum information gain after $t$ samples.
\end{lemma}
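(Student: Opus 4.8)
The plan is to identify the left-hand sum with the information-theoretic quantity that defines $\gamma_t$, and then control each per-step contribution by an elementary inequality. First I would recall that for a GP with i.i.d.\ Gaussian observation noise of variance $\sigma^2$, the mutual information between the observations $\ybm_{1:t}$ and the latent values $\fbm_{1:t}$ has the closed form
\begin{equation*}
I(\ybm_{1:t};\fbm_{1:t}) = \tfrac{1}{2}\sum_{i=1}^t \log\!\left(1+\sigma^{-2}\sigma_{i-1}^2(\xbm_i)\right).
\end{equation*}
This follows from the chain rule for differential entropy, writing $I(\ybm_{1:t};\fbm_{1:t}) = H(\ybm_{1:t}) - H(\ybm_{1:t}\mid \fbm_{1:t})$ and decomposing $H(\ybm_{1:t})=\sum_{i=1}^t H(y_i\mid \ybm_{1:i-1})$. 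Because each conditional $y_i \mid \ybm_{1:i-1}$ is Gaussian with variance $\sigma_{i-1}^2(\xbm_i)+\sigma^2$, where $\sigma_{i-1}^2(\xbm_i)$ is the predictive variance of~\eqref{eqn:GP-post} formed from the first $i-1$ samples, while $H(\ybm_{1:t}\mid\fbm_{1:t})=\tfrac{t}{2}\log(2\pi e\sigma^2)$, the per-step increment is exactly $\tfrac{1}{2}\log(1+\sigma^{-2}\sigma_{i-1}^2(\xbm_i))$.

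Next I would bound each posterior variance by this logarithmic term. The assumption $k(\xbm,\xbm)=1$ guarantees $\sigma_{i-1}^2(\xbm_i)\in[0,1]$, hence $u_i:=\sigma^{-2}\sigma_{i-1}^2(\xbm_i)\in[0,\sigma^{-2}]$. Since $\log(1+u)/u$ is decreasing on $(0,\infty)$, its value over $[0,\sigma^{-2}]$ is minimized at the right endpoint, giving $\log(1+u_i)\ge \tfrac{\log(1+\sigma^{-2})}{\sigma^{-2}}\,u_i$. Rearranging yields
\begin{equation*}
\sigma_{i-1}^2(\xbm_i)\ \le\ \frac{1}{\log(1+\sigma^{-2})}\,\log\!\left(1+\sigma^{-2}\sigma_{i-1}^2(\xbm_i)\right).
\end{equation*}

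Finally I would sum this inequality over $i=1,\dots,t$, recognize the resulting sum of logarithms as $2\,I(\ybm_{1:t};\fbm_{1:t})$ via the first step, and invoke the definition of $\gamma_t$ as the maximum information gain to replace $I(\ybm_{1:t};\fbm_{1:t})$ by its upper bound $\gamma_t$. This gives $\sum_{i=1}^t \sigma_{i-1}^2(\xbm_i)\le \tfrac{2}{\log(1+\sigma^{-2})}\gamma_t = C_{\gamma}\gamma_t$, as claimed. I expect the main obstacle to be the first step: justifying the closed-form expression for the mutual information and correctly identifying that the relevant predictive variance at step $i$ is $\sigma_{i-1}^2(\xbm_i)$, which requires care with the sequential conditioning and with the noise term that separates $H(\ybm_{1:t})$ from $H(\ybm_{1:t}\mid\fbm_{1:t})$. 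The remaining steps are elementary, and the bounding inequality is tight at $u=\sigma^{-2}$, which is precisely what produces the stated constant $C_{\gamma}$.
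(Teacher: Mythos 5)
Your proposal is correct and is essentially the standard argument: the paper does not prove this lemma itself but cites \cite{srinivas2009gaussian}, and your derivation (closed form of the mutual information via the entropy chain rule, the elementary bound $u\le\frac{\sigma^{-2}}{\log(1+\sigma^{-2})}\log(1+u)$ for $u\in[0,\sigma^{-2}]$ using $\sigma_{i-1}^2(\xbm_i)\le k(\xbm,\xbm)=1$, and then bounding the information gain of the sampled set by $\gamma_t$) is exactly the proof given there. No gaps.
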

Information gain measures the informativeness of a set of sampling points in $C$ about $f$. Readers are referred to~\cite{cover1999elements,srinivas2009gaussian} for a detailed definition of the maximum information gain $\gamma_t>0$. 
The rate of increase for $\gamma_t$ is dependent on the property of the kernel.
For common kernels such as the SE kernel and the Matérn kernel, $\gamma_t$ and its order of increase have been widely studied in literature~\cite{vakili2021information}.
The state-of-the-art rates of $\gamma_t$ for two commonly-used kernels are summarized below~\cite{vakili2021information}.
\begin{lemma}\label{lem:gammarate}
  For a GP with $t$ samples, the SE kernel has $\gamma_t=\mathcal{O}(\log^{d+1}(t))$, and 
  the Matérn kernel with smoothness parameter $\nu>0$ has $\gamma_t=\mathcal{O}(t^{\frac{d}{2\nu+d}}(\log^{\frac{2\nu}{2\nu+d}} (t)))$.
\end{lemma}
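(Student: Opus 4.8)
The plan is to bound the maximum information gain through the eigenspectrum of the kernel integral operator, following the spectral approach of~\cite{vakili2021information}. First I would rewrite $\gamma_t$ as a log-determinant. For a GP corrupted by independent Gaussian noise of variance $\sigma^2$, the mutual information between the observations $\ybm_A$ and the latent values $\fbm_A$ on a finite design $A\subset C$ of size $t$ equals $\tfrac12\log\det(\Ibm+\sigma^{-2}\Kbm_A)$, so that
\begin{equation*}
  \gamma_t=\max_{A\subset C,\;|A|=t}\tfrac12\log\det\!\left(\Ibm+\sigma^{-2}\Kbm_A\right).
\end{equation*}
The task thus reduces to controlling this determinant uniformly over all $t$-point designs.

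Second, I would pass from the Gram matrix $\Kbm_A$ to the Mercer eigenvalues of the kernel. By Mercer's theorem the kernel admits the expansion $k(\xbm,\xbm')=\sum_{m\ge 1}\lambda_m\phi_m(\xbm)\phi_m(\xbm')$ with a nonincreasing sequence $\lambda_1\ge\lambda_2\ge\cdots\ge 0$. The central estimate, which is where the real work lies, is a truncation bound: for every cutoff $D\in\Nbb$,
\begin{equation*}
  \gamma_t\;\le\;\tfrac12\sum_{m=1}^{D}\log\!\left(1+\sigma^{-2}t\,\lambda_m\right)+\tfrac{\sigma^{-2}t}{2}\sum_{m>D}\lambda_m .
\end{equation*}
The head captures the few dominant eigendirections, while the tail is a trace remainder that must be shown to be small; establishing this split with the correct constants, and relating the empirical eigenvalues of $\Kbm_A$ (whose trace is exactly $t$ since $k(\xbm,\xbm)=1$) to the operator eigenvalues $\lambda_m$, is the crux of the argument.

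Finally I would substitute the known eigenvalue decay for each kernel and optimize the cutoff $D$. For the SE kernel the eigenvalues decay like $\lambda_m=\mathcal{O}(\exp(-c\,m^{1/d}))$, so the head contributes $\mathcal{O}(D\log t)$ and the tail is negligible once $D$ is a constant multiple of $\log^{d}t$; this choice yields $\gamma_t=\mathcal{O}(\log^{d+1}t)$. For the Matérn kernel with smoothness $\nu$ the eigenvalues decay only polynomially, $\lambda_m=\mathcal{O}(m^{-(2\nu+d)/d})$, so the head contributes $\mathcal{O}(D\log t)$ and the tail $\mathcal{O}(t\,D^{-2\nu/d})$; balancing the two by taking $D\asymp (t/\log t)^{d/(2\nu+d)}$ produces $\gamma_t=\mathcal{O}(t^{d/(2\nu+d)}\log^{2\nu/(2\nu+d)}t)$, matching the stated rate.

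The main obstacle is the spectral analysis underlying the tail bound together with the sharp eigenvalue asymptotics for each kernel family. The log-determinant reformulation and the final optimization over $D$ are routine, but controlling the trace remainder with tight constants—rather than the cruder bounds that give only suboptimal exponents—is exactly the ingredient that distinguishes the state-of-the-art rates of~\cite{vakili2021information} from earlier work, and it is what I would expect to occupy the bulk of a self-contained proof.
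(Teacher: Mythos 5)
The paper does not prove this lemma at all: it is stated as a known result imported verbatim from the cited reference \cite{vakili2021information}, so there is no in-paper argument to compare against. Your proposal is a faithful reconstruction of exactly how that reference establishes the rates — the log-determinant characterization of $\gamma_t$, the Mercer eigendecomposition, the head/tail truncation bound with cutoff $D$, and the substitution of exponential eigenvalue decay for the SE kernel versus polynomial decay $\lambda_m=\mathcal{O}(m^{-(2\nu+d)/d})$ for the Mat\'ern kernel, followed by optimizing $D$. Your final balancing computations are correct and recover both stated rates. The one honest caveat, which you flag yourself, is that the central truncation inequality relating the empirical Gram eigenvalues to the operator eigenvalues is asserted rather than derived; that step is the substantive content of the cited work and would need to be filled in (or cited) for a complete proof. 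Given that the paper itself treats the lemma as a citation, your sketch is at least as complete as what the paper provides.
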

The formal definition of RKHS is given below.
\begin{definition}\label{def:rkhs}
   Consider a positive definite kernel $k:C\times C\to\Rbb $ with respect to a finite Borel measure supported on $C$. A Hilbert space $H_k$ of functions on $C$ with an inner product $\langle \cdot,\cdot \rangle_{H_k}$ is called a RKHS with kernel $k$ if $k(\cdot,\xbm)\in H_k$ for all $\xbm\in C$, and $\langle f,k(\cdot,\xbm)\rangle_{H_k}=f(\xbm)$ for all $\xbm\in C, f\in H_k$. The induced RKHS norm $\norm{f}_{H_k}=\sqrt{\langle f,f\rangle_{H_k}}$ measures the smoothness of $f$ with respect to $k$.
\end{definition}
%Theoretical analysis for BO algorithms often rely on information theory results that are well-established in literature.
%Here, we recall the necessary background in information theory. The maximum information gain is defined below.
%\begin{definition}\label{def:infogain}
%  Given $\xbm_{1:t}=[\xbm_1,\dots,\xbm_t]$ and $A_t:=\ybm_{1:t}=[\ybm_1,\dots,\ybm_t]$, the mutual information between $f$ and $\ybm_{1:t}$ is $I(\ybm_A;f)=H(\ybm_A)-H(\ybm_A|f)$, where $H$ is the entropy. The maximum information gain $\gamma_T$ after $T$ samples is $\gamma_T = \max_{A\subset C,|A|=T} I(\ybm_A;\fbm_A)$. 
%\end{definition}
%The properties of union bound, or Boole's inequality, is used often in the analysis. For a countable set of events $A_1,A_2,\cdots$, we have 
%\begin{equation} \label{def:union}
% \centering
%  \begin{aligned}
%    P(\bigcup_{i=1}^{\infty} A_i) \leq \sum_{i=1}^{\infty} P(A_i).
%  \end{aligned}
%\end{equation}
For ease of reference, we list the assumptions used in our analysis below, all of which are common in  literature.
First, we have the GP prior assumption for $f$, also called the Bayesian setting. 
We further assume that $f$ is Lipschitz continuous, as in~\cite{srinivas2009gaussian}.
\begin{assumption}\label{assp:gp}
%The objective function $f$ is Lipschitz continuous with Lipschitz constant $L$. 
The objective function $f$ is a sample from the Gaussian process $GP(0,k(\xbm,\xbm'))$, where $k(\xbm,\xbm')\leq 1$ and $k(\xbm,\xbm)=1$. 
Further, $f$ is Lipschitz continuous with constant $L$.
\end{assumption}
%Since our results are given with a probability, Assumption~\ref{assp:gp} can be further generalized to $f$ being Lipschitz with a guaranteed probability.
The RKHS assumption of $f$ is given below, also called the frequentist setting.
\begin{assumption}\label{assp:rkhs}
The objective function $f$ is in the RKHS $\mathcal{H}_k$ associated with the kernel $k(\xbm,\xbm')\leq 1$ and $k(\xbm,\xbm)=1$. Further, the RKHS norm of $f$ is bounded $\norm{f}_{\mathcal{H}_k}\leq B$ for some $B\geq 1$. 
Moreover, $f$ is Lipschitz continuous with constant $L$.
\end{assumption}
Without losing generality, the set constraint assumption is given below.
\begin{assumption}\label{assp:constraint}
   The set $C\subseteq [0,r]^d$ is compact ($r>0$).
\end{assumption}
The Gaussian assumption on the noise is given below.
\begin{assumption}\label{assp:gaussiannoise}
  The observation noise are i.i.d. random samples from a zero mean Gaussian, \textit{i.e.}, $\epsilon_t\sim\mathcal{N}(0,\sigma^2),\sigma>0$ for all $t$.
\end{assumption}

\section{Preliminary results}\label{se:analysis}
We present the properties of the non-convex EI function in Section~\ref{se:EIproperty} and the preliminary lemmas for GP-EI under the GP prior assumption in Section~\ref{se:preliminaryGP}, which are both important for the convergence analysis in Section~\ref{se:convergence}. 
We use union bound repeatedly to obtain the probability of multiple events or inequalities occurring. 
Without losing generality, we consider the case $\sigma_t(\xbm) >0$, as the results for the case $\sigma_t(\xbm)=0$ hold trivially. 

\subsection{EI properties}\label{se:EIproperty}
First, we state some basic properties of $\phi$, $\Phi$, and $\tau$ in the following lemma.
\begin{lemma}\label{lem:phi}
The PDF and CDF of standard normal distribution satisfy $0< \phi(x)\leq \phi(0), \Phi(x)\in(0,1)$, 
for any $x\in\Rbb$. 
 Given a random variable $\xi$ sampled from the standard normal distribution, \textit{i.e.}, $\xi\sim\mathcal{N}(0,1)$, we have $\Pbb\{\xi>c|c>0\}\leq \frac{1}{2}e^{-c^2/2}$. 
 Similarly, for $c<0$, $\Pbb\{\xi<c|c<0\}\leq \frac{1}{2}e^{-c^2/2}$. 
\end{lemma}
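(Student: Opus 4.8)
The plan is to dispatch the four claims separately, the first two being immediate consequences of the explicit formula for the standard normal density and the last two reducing to a single Gaussian tail estimate. Throughout I write $\phi(x)=\frac{1}{\sqrt{2\pi}}e^{-x^2/2}$ and $\Phi(x)=\int_{-\infty}^x \phi(t)\,dt$.

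For the bound $0<\phi(x)\le\phi(0)$, I would simply observe that $e^{-x^2/2}$ is strictly positive for every $x\in\Rbb$ and that $x^2\ge 0$ forces $e^{-x^2/2}\le e^0=1$, with equality exactly at $x=0$; multiplying by the constant $\frac{1}{\sqrt{2\pi}}=\phi(0)$ gives the claim. For $\Phi(x)\in(0,1)$, positivity of $\phi$ on all of $\Rbb$ makes $\int_{-\infty}^x\phi>0$ for every finite $x$, while $\int_{-\infty}^\infty\phi=1$ together with the strictly positive tail mass $\int_x^\infty\phi>0$ yields $\Phi(x)=1-\int_x^\infty\phi<1$.

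The substantive step is the tail estimate $\Pbb\{\xi>c\}\le\frac12 e^{-c^2/2}$ for $c>0$; here the difficulty is securing the sharp constant $\frac12$ rather than the crude factor $1$ that a direct Chernoff/Markov bound $\Pbb\{\xi>c\}\le e^{-\lambda c}\,\Ebb[e^{\lambda\xi}]$ would produce. My plan is a monotonicity argument: define $h(c)=e^{c^2/2}\,\Pbb\{\xi>c\}$ and show $h$ is non-increasing on $[0,\infty)$ with $h(0)=\frac12$. Differentiating gives $h'(c)=e^{c^2/2}\bigl(c\,\Pbb\{\xi>c\}-\phi(c)\bigr)$, so it suffices to prove $c\,\Pbb\{\xi>c\}\le\phi(c)$. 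This follows from the elementary bound $c\le t$ on the region of integration together with the identity $\int_c^\infty t\,\phi(t)\,dt=\phi(c)$ (which holds since $\phi'(t)=-t\,\phi(t)$): indeed $c\,\Pbb\{\xi>c\}=c\int_c^\infty\phi(t)\,dt\le\int_c^\infty t\,\phi(t)\,dt=\phi(c)$. Because $h(0)=\frac12$ and $h$ does not increase, $h(c)\le\frac12$ for all $c\ge0$, which is exactly the asserted bound.

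Finally, the case $c<0$ follows from the symmetry of the standard normal: $\Pbb\{\xi<c\}=\Pbb\{\xi>-c\}$ with $-c>0$, so applying the previous step gives $\Pbb\{\xi<c\}\le\frac12 e^{-(-c)^2/2}=\frac12 e^{-c^2/2}$. The only genuinely delicate point is extracting the constant $\frac12$ in the tail bound, and the monotonicity reduction above is precisely what isolates and resolves that difficulty; everything else is bookkeeping with the density formula.
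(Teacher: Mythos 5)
Your proof is correct. Note that the paper itself does not actually prove this lemma: the first two claims are treated as immediate from the formulas for $\phi$ and $\Phi$, and the tail bound is cited as a well-known result, deferring to the proof of Lemma 5.1 in Srinivas et al. So there is no in-paper argument to match; what you supply is a self-contained derivation. Your route to the constant $\tfrac12$ — showing $h(c)=e^{c^2/2}\,\Pbb\{\xi>c\}$ is non-increasing on $[0,\infty)$ via the inequality $c\,\Pbb\{\xi>c\}\le\phi(c)$, which itself follows from $\int_c^\infty t\,\phi(t)\,dt=\phi(c)$ — is sound: the comparison $c\,\phi(t)\le t\,\phi(t)$ on $[c,\infty)$ does require $c\ge 0$, which you have, and $h(0)=\tfrac12$ anchors the bound. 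The standard textbook argument (the one in the cited reference) instead completes the square, writing $\Pbb\{\xi>c\}=e^{-c^2/2}\int_c^\infty\phi(t-c)\,e^{-c(t-c)}\,dt\le e^{-c^2/2}\,\Pbb\{\xi>0\}$ using $e^{-c(t-c)}\le 1$ for $t\ge c$, $c\ge 0$; that version is a one-line computation, whereas yours trades the change of variables for a monotonicity/ODE-style argument of the same length. Both are equally rigorous, and your reduction of the $c<0$ case by symmetry is exactly right.
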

The last statement in Lemma~\ref{lem:phi} is a well-known result (\textit{e.g.}, see proof of Lemma 5.1 in~\cite{srinivas2009gaussian}). Figure~\ref{fig:tauandphi} (left) also illustrates this point.
The property of $\tau(\cdot)$ in~\eqref{def:tau} is given below.
\begin{lemma}\label{lem:tau}
  The function $\tau(\cdot)$ is monotonically increasing  and $\tau(z)>0$ for $\forall z\in \Rbb$.
  Moreover, $\frac{d \tau(z)}{d z}=\Phi(z)$.
\end{lemma}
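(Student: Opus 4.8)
The plan is to treat the two assertions in the natural order: first nail down the derivative formula, which hands us monotonicity almost for free, and then bootstrap from monotonicity plus the behavior of $\tau$ at $-\infty$ to obtain strict positivity. The whole argument is essentially one differentiation followed by a limit computation, so no heavy machinery is needed beyond the elementary facts about $\phi$ and $\Phi$ collected in Lemma~\ref{lem:phi}.

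First I would compute $\tfrac{d\tau}{dz}$ directly from $\tau(z)=z\Phi(z)+\phi(z)$. Differentiating the first term by the product rule uses $\Phi'(z)=\phi(z)$, giving $\Phi(z)+z\phi(z)$; differentiating the second uses the standard-normal identity $\phi'(z)=-z\phi(z)$. Adding these, the two $\pm z\phi(z)$ contributions cancel and we are left with
\begin{equation*}
  \frac{d\tau(z)}{dz} = \Phi(z) + z\phi(z) - z\phi(z) = \Phi(z),
\end{equation*}
which is exactly the claimed formula. Since Lemma~\ref{lem:phi} guarantees $\Phi(z)\in(0,1)$ for every $z\in\Rbb$, the derivative is strictly positive on all of $\Rbb$, and hence $\tau$ is (strictly) monotonically increasing. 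This disposes of the derivative claim and the monotonicity claim simultaneously.

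For positivity it suffices, by monotonicity, to control the infimum of $\tau$, which for an increasing function is its limit as $z\to-\infty$. The goal is therefore to show $\lim_{z\to-\infty}\tau(z)=0$. The term $\phi(z)$ clearly tends to $0$, so the only real content is showing $z\Phi(z)\to 0$ as $z\to-\infty$; this is where I expect the single nontrivial estimate. I would handle it using the Gaussian tail bound from Lemma~\ref{lem:phi}: for $z<0$ we have $\Phi(z)=\Pbb\{\xi<z\}\le \tfrac12 e^{-z^2/2}$, so $\lvert z\Phi(z)\rvert \le \tfrac12\lvert z\rvert e^{-z^2/2}\to 0$ since the Gaussian factor dominates the linear one. (Equivalently, a Mills-ratio inequality $\lvert z\rvert\Phi(z)<\phi(z)$ for $z<0$ would give positivity pointwise, but routing through the limit keeps everything tied to Lemma~\ref{lem:phi}.)

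Combining the two pieces finishes the argument: $\tau$ is strictly increasing with $\inf_{z}\tau(z)=\lim_{z\to-\infty}\tau(z)=0$, so $\tau(z)>0$ for every finite $z\in\Rbb$. The main obstacle is thus not conceptual but the asymptotic estimate $z\Phi(z)\to 0$; once the tail bound is invoked the rest is immediate, and the derivative cancellation is the clean step that makes the monotonicity proof trivial.
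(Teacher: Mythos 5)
Your proposal is correct. The derivative computation and the monotonicity conclusion are identical to the paper's: both use $\phi'(z)=-z\phi(z)$ so that the $z\phi(z)$ terms cancel and $\tau'(z)=\Phi(z)>0$. Where you diverge is the positivity claim. The paper proves $\tau(z)>0$ directly and independently of monotonicity, via the pointwise comparison $z\Phi(z)=z\int_{-\infty}^{z}\phi(u)\,du>\int_{-\infty}^{z}u\phi(u)\,du=-\phi(z)$, which is exactly the Mills-ratio inequality you mention only parenthetically. You instead route through monotonicity plus the asymptotic statement $\lim_{z\to-\infty}\tau(z)=0$, using the Gaussian tail bound from Lemma~\ref{lem:phi} to kill the $z\Phi(z)$ term. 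Both arguments are sound and elementary; the paper's is self-contained in one line and does not need the derivative formula, while yours makes the structure "increasing with infimum $0$" explicit, which is a slightly more informative statement (it identifies the infimum as a limit rather than merely bounding $\tau$ below). The only point worth making explicit in your write-up is the final step from $\inf_z\tau(z)=0$ to strict positivity at every finite $z$: it follows because strict monotonicity gives $\tau(z)>\tau(z')\geq 0$ for any $z'<z$.
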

\begin{proof}
   From the definition~\eqref{def:tau} of $\tau(z)$, we can write
  \begin{equation*} \label{eqn:tau-1}
  \centering
   \begin{aligned}
   \tau(z)=z\Phi(z)+\phi(z) > \int_{-\infty}^{z} u\phi(u) du + \phi(z) = -\phi(u)|_{-\infty}^z+\phi(z) =0.
  \end{aligned}
  \end{equation*}
  Given the definition of $\phi(u)$, 
  \begin{equation} \label{eqn:tau-2}
 \centering
  \begin{aligned}
      \frac{d \phi(u)}{d u} = \frac{1}{\sqrt{2\pi}} e^{-\frac{u^2}{2}} (-u) = -\phi(u)u.
   \end{aligned}
\end{equation}
   Thus, direct differentiation of $\tau(z)$ gives 
   \begin{equation*} \label{eqn:tau-3}
 \centering
  \begin{aligned}
      \frac{d\tau(z)}{d z} = \Phi(z)+z\phi(z) -\phi(z)z = \Phi(z)>0.
   \end{aligned}
\end{equation*}
\end{proof}
Another lemma for $\tau(\cdot)$ is given below, with an illustrative plot in Figure~\ref{fig:tauandphi} (right).
\begin{lemma}\label{lem:tauvsPhi}
  Given $z>0$, $\Phi(-z)>\tau(-z)$. 
\end{lemma}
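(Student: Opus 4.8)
The plan is to reduce the claim to a single-variable inequality and dispatch it by elementary calculus. Since $\phi$ is even, we have $\tau(-z) = -z\Phi(-z) + \phi(z)$, so that $\Phi(-z) - \tau(-z) = (1+z)\Phi(-z) - \phi(z)$. It therefore suffices to define $h(z) := (1+z)\Phi(-z) - \phi(z)$ and show $h(z) > 0$ for all $z > 0$. First I would record the boundary value $h(0) = \Phi(0) - \phi(0) = \tfrac{1}{2} - \tfrac{1}{\sqrt{2\pi}} > 0$.

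Next I would differentiate. Using $\frac{d}{dz}\Phi(-z) = -\phi(z)$ together with $\phi'(z) = -z\phi(z)$ from~\eqref{eqn:tau-2} (equivalently $\tau'(z) = \Phi(z)$ from Lemma~\ref{lem:tau}), a direct computation gives $h'(z) = \Phi(-z) - \phi(z)$ and then $h''(z) = (z-1)\phi(z)$. Hence $h'$ is strictly decreasing on $(0,1)$ and strictly increasing on $(1,\infty)$, attaining its global minimum at $z = 1$. Since $h'(0) = \tfrac12 - \tfrac{1}{\sqrt{2\pi}} > 0$, while $h'(1) = \Phi(-1) - \phi(1) < 0$ and $h'(z) \to 0$ as $z\to\infty$, the derivative $h'$ has a unique zero $z_0 \in (0,1)$, is positive on $(0,z_0)$, and is negative on $(z_0,\infty)$. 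Consequently $h$ increases on $(0,z_0)$ and decreases on $(z_0,\infty)$.

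To finish I would control the behavior at infinity. The standard Mills-ratio bounds $\big(\tfrac1z - \tfrac1{z^3}\big)\phi(z) < \Phi(-z) < \tfrac1z\phi(z)$ for $z>0$ sandwich $h$ between $\phi(z)\frac{z^2 - z - 1}{z^3}$ and $\frac{\phi(z)}{z}$, both of which tend to $0$; hence $\lim_{z\to\infty} h(z) = 0$. Combining the pieces: on $(0, z_0]$ the function $h$ exceeds its left endpoint value $h(0) > 0$, while on $(z_0,\infty)$ it is strictly decreasing toward the limit $0$ and therefore remains strictly positive. This yields $h(z) > 0$ for every $z > 0$, which is exactly $\Phi(-z) > \tau(-z)$.

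I expect the main obstacle to be the decreasing branch $(z_0,\infty)$: monotonicity alone does not certify positivity there, so the argument genuinely needs the asymptotic fact that $h$ decays to $0$ rather than to a negative value, which is where the Mills-ratio estimate enters. An alternative that avoids quoting the limit is to use only the lower Mills bound to obtain $h(z) > 0$ directly for $z \ge \tfrac{1+\sqrt5}{2}$, and then invoke the increase-then-decrease shape of $h$ on the compact interval $[0, \tfrac{1+\sqrt5}{2}]$, where the minimum is attained at an endpoint and both endpoint values are positive.
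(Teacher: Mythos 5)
Your proof is correct and follows essentially the same route as the paper: the same function $q(z)=\Phi(-z)-\tau(-z)=(1+z)\Phi(-z)-\phi(z)$, the same first and second derivatives, and the same unimodality argument locating a unique interior maximum via the sign change of $q'$. The only difference is how the tail is closed — you use the two-sided Mills-ratio bounds to show $h(z)\to 0$ and conclude positivity on the decreasing branch, whereas the paper uses the lower Mills bound to get positivity directly for $z\ge\frac{1+\sqrt5}{2}$ and then argues on the compact remainder; your stated ``alternative'' is in fact exactly the paper's proof.
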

\begin{proof}
   Define $q(z) = \Phi(-z)-\tau(-z)$.
   Using integration by parts, we have
   \begin{equation} \label{eqn:tvp-pf-1}
 \centering
  \begin{aligned}
   \Phi(z) = \int_{-\infty}^z \phi(u)du > \int_{-\infty}^{z}\phi(u)(1-\frac{3}{u^4})du=-\frac{\phi(z)}{z}+\frac{\phi(z)}{z^3}. 
   \end{aligned}
\end{equation}
  Replacing $z$ with $-z$ in~\eqref{eqn:tvp-pf-1},
   \begin{equation} \label{eqn:tvp-pf-1.5}
 \centering
  \begin{aligned}
   \phi(-z)\left(\frac{1}{z}-\frac{1}{z^3}\right)<\Phi(-z).
   \end{aligned}
\end{equation}
   Multiplying both sides in~\eqref{eqn:tvp-pf-1.5} by $1+z$,
\begin{equation} \label{eqn:tvp-pf-2}
 \centering
  \begin{aligned}
   (1+z)\Phi(-z)> \phi(-z)\frac{z^2-1}{z^3}(1+z)= \phi(-z)\left(1+\frac{z^2-z-1}{z^3}\right). 
   \end{aligned}
\end{equation}
   Thus, if $z > \frac{1+\sqrt{5}}{2}$, then the right-hand side of~\eqref{eqn:tvp-pf-2} is greater than $\phi(-z)$ and  
 \begin{equation} \label{eqn:tvp-pf-3}
 \centering
  \begin{aligned}
   q(z)=\Phi(-z)-\tau(-z) = (1+z)\Phi(-z) - \phi(-z)>0.
   \end{aligned}
\end{equation}
   Consequently, we focus on $z\in(0,\frac{1+\sqrt{5}}{2}]$. 
   Taking the derivative of $q(z)$, by Lemma~\ref{lem:tau},  
   \begin{equation} \label{eqn:tvp-pf-4}
 \centering
  \begin{aligned}
     q'(z)= \frac{d q(z)}{d z} = -\phi(-z)+\Phi(-z).
   \end{aligned}
\end{equation}
  Further, the derivative of $q'(z)$ is 
  \begin{equation} \label{eqn:tvp-pf-5}
 \centering
  \begin{aligned}
      \frac{d^2 q(z)}{d z^2} = \frac{d q'(z)}{d z} = -\phi(-z)+\phi(-z) z=\phi(z)(z-1).
   \end{aligned}
\end{equation}
  For $z>1$, $\frac{d^2 q(z)}{d z^2}>0$. For $z\in (0,1)$ $\frac{d^2 q(z)}{d z^2}<0$. 
  Thus, $q'(z)$ is monotonically increasing for $z>1$ and decreasing for $z\in (0,1)$.
   For $z>1$, from Lemma~\ref{lem:tau} and definition of $\tau$,  
    \begin{equation} \label{eqn:tvp-pf-6}
 \centering
  \begin{aligned}
    q'(z) =  \Phi(-z)-\phi(-z) < z\Phi(-z) - \phi(-z) = - \tau(-z) < 0.
   \end{aligned}
\end{equation}
   Further, $q'(0)=\Phi(0)-\phi(0)>0$ and $q'(1)=\Phi(-1)-\phi(-1)<0$. Thus, there exists unique $\bar{z}\in (0,1)$ so that $q'(\bar{z})=0$.
   For $z\in (0,\bar{z})$, $q(z)$ is monotonically increasing. For $z\in [\bar{z},\frac{1+\sqrt{5}}{2}]$, $q(z)$ is monotonically decreasing. Therefore, $q(z) >\min\{q(0), q(\frac{1+\sqrt{5}}{2})\}$ for $z\in (0,\frac{1+\sqrt{5}}{2})$. Since $q(0)>0$ and $q(\frac{1+\sqrt{5}}{2})>0$, we have $q(z)>0$ for $z\in (0,\frac{1+\sqrt{5}}{2})$. 
   Combined with~\eqref{eqn:tvp-pf-3}, the proof is complete.
\end{proof}
The next lemma proves basic properties for $EI_t$.
\begin{lemma}\label{lem:EI}
 For $\forall \xbm\in C$, 
    $EI_t(\xbm) \geq 0$ and $EI_t(\xbm) \geq y^+_{t}- \mu_{t}(\xbm)$.
Moreover, 
\begin{equation} \label{eqn:EI-property-2}
 \centering
  \begin{aligned}
   z_{t}(\xbm)\leq  \frac{EI_{t}(\xbm)}{\sigma_{t}(\xbm) } < \begin{cases}  \phi(z_{t}(\xbm)), \ &z_{t}(\xbm)<0\\
             z_{t}(\xbm) +\phi(z_{t}(\xbm)), \ &z_{t}(\xbm)\geq 0.
      \end{cases}
  \end{aligned}
\end{equation}

\end{lemma}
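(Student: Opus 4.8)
The plan is to reduce all four inequalities to scalar facts about the single-variable function $\tau$, exploiting the two identities $EI_t(\xbm) = \sigma_t(\xbm)\tau(z_t(\xbm))$ and $y_t^+ - \mu_t(\xbm) = \sigma_t(\xbm)\, z_t(\xbm)$. Throughout I work in the case $\sigma_t(\xbm) > 0$ fixed at the start of the section. First I would dispose of $EI_t(\xbm) \geq 0$: since $\sigma_t(\xbm) > 0$ and $\tau(z) > 0$ for every $z \in \Rbb$ by Lemma~\ref{lem:tau}, the product $EI_t(\xbm) = \sigma_t(\xbm)\tau(z_t(\xbm))$ is strictly positive.

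Next, the bound $EI_t(\xbm) \geq y_t^+ - \mu_t(\xbm)$ and the left inequality $z_t(\xbm) \leq EI_t(\xbm)/\sigma_t(\xbm)$ both follow from a single scalar estimate, namely $\tau(z) > z$ for all $z \in \Rbb$. Indeed, dividing $EI_t = \sigma_t \tau(z_t)$ by $\sigma_t > 0$ gives $EI_t/\sigma_t = \tau(z_t) > z_t$, and multiplying $\tau(z_t) > z_t$ by $\sigma_t$ gives $EI_t > \sigma_t z_t = y_t^+ - \mu_t$. To establish $\tau(z) > z$, I would set $g(z) := \tau(z) - z$ and use Lemma~\ref{lem:tau} to compute $g'(z) = \Phi(z) - 1 < 0$, so $g$ is strictly decreasing. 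Rewriting $g(z) = \phi(z) - z\,\Phi(-z)$ and letting $z \to +\infty$ shows $g(z) \to 0$; a strictly decreasing function with infimum $0$ is everywhere positive, giving $g(z) > 0$. Alternatively, for $z > 0$ one may invoke the standard Mills-ratio bound $1 - \Phi(z) < \phi(z)/z$ to get $\tau(z) - z = \phi(z) - z\,(1-\Phi(z)) > 0$ directly, while for $z \leq 0$ the inequality is immediate from $z\Phi(z) \geq z$ together with $\phi(z) > 0$.

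For the upper bounds I would argue directly from $\tau(z) = z\Phi(z) + \phi(z)$. When $z < 0$, the term $z\Phi(z)$ is negative because $\Phi(z) \in (0,1)$ by Lemma~\ref{lem:phi}, so $\tau(z) < \phi(z)$, which is the first branch. When $z > 0$, the strict bound $\Phi(z) < 1$ from Lemma~\ref{lem:phi} yields $z\Phi(z) < z$, whence $\tau(z) < z + \phi(z)$, the second branch; at the single endpoint $z = 0$ one has the harmless equality $\tau(0) = \phi(0)$. Substituting $z = z_t(\xbm)$ and recalling $EI_t(\xbm)/\sigma_t(\xbm) = \tau(z_t(\xbm))$ assembles the full chain of inequalities.

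The case analysis for the upper bounds is routine. The only delicate point is proving $\tau(z) > z$ \emph{uniformly} in $z$: monotonicity of $g$ alone does not fix the sign, and one genuinely needs the tail behaviour as $z \to +\infty$ (equivalently, the Mills-ratio estimate) to identify the limit $g(z) \to 0$ and thereby conclude positivity. This is the main obstacle, and I expect it to be the step worth spelling out in full.
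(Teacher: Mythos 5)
Your proof is correct, and for the upper bounds in \eqref{eqn:EI-property-2} it coincides with the paper's: both arguments divide the closed form \eqref{eqn:EI-1} by $\sigma_t(\xbm)$ and split on the sign of $z_t(\xbm)$, using $\Phi(z)\in(0,1)$ to drop or bound the term $z\Phi(z)$. Where you genuinely diverge is the inequality $EI_t(\xbm)\geq y_t^+-\mu_t(\xbm)$ (and hence the left inequality $z_t\leq EI_t/\sigma_t$). The paper obtains this in one line from the probabilistic definition: $EI_t$ is the conditional expectation of $I_t=\max\{y_t^+-f,0\}\geq y_t^+-f$, so taking expectations gives $EI_t\geq y_t^+-\mu_t$ immediately, with no analysis of $\tau$ required. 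You instead work entirely from the closed form and reduce the claim to the scalar inequality $\tau(z)>z$, which you prove via the monotone-decrease-to-zero argument for $g(z)=\tau(z)-z$ (equivalently the Mills-ratio bound $1-\Phi(z)<\phi(z)/z$ for $z>0$). Your route is self-contained at the level of the formula \eqref{eqn:EI-1} and never appeals to $EI_t$ being an expectation, which is a modest robustness advantage; the cost is that you must import a nontrivial Gaussian tail estimate that the paper's proof of this lemma avoids entirely (though the paper does use a sharper estimate of the same flavor later, in Lemma~\ref{lem:tauvsPhi}). Your identification of $\tau(z)>z$ as the one step needing care is accurate, and the argument you give for it is sound.
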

\begin{proof}
     From the definition of $I_t$ and $EI_t$, the first statement follows immediately.
     By~\eqref{eqn:EI-1},  
\begin{equation} \label{eqn:EI-property-pf-1}
 \centering
  \begin{aligned}
     \frac{EI_t(\xbm)}{\sigma_{t}(\xbm) } = z_{t}(\xbm) \Phi(z_{t}(\xbm)) + \phi(z_{t}(\xbm)). 
  \end{aligned}
\end{equation}
If $z_{t}(\xbm)<0$, or equivalently $y^+_{t} -\mu_{t}(\xbm)<0$,~\eqref{eqn:EI-property-pf-1} leads to
     $\frac{EI_t(\xbm)}{\sigma_{t}(\xbm)} < \phi(z_{t}(\xbm))$. 
If $z_{t}(\xbm)\geq 0$, $\Phi(\cdot)<1$ gives us
     $\frac{EI_t(\xbm)}{\sigma_{t}(\xbm)} < z_{t}(\xbm) +\phi(z_{t}(\xbm))$. 
 The left inequality in~\eqref{eqn:EI-property-2} is a direct consequence of $EI_t(\xbm) \geq y^+_{t}- \mu_{t}(\xbm)$.
\end{proof}
The monotonicity of the exploration and exploitation of form~\eqref{eqn:EI-ab} is given next.
\begin{lemma}\label{lem:EI-ms}
  $EI(a,b)$ is monotonically increasing with respect to $a$ and $b$ for $b\in(0,1]$.
\end{lemma}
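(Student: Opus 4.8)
The plan is to exploit the scaling identity $EI(a,b) = b\,\tau(a/b)$, where $\tau$ is the function defined in~\eqref{def:tau}, and then reduce both monotonicity claims to a short one-variable differentiation. First I would verify this identity directly from~\eqref{eqn:EI-ab}: substituting $z = a/b$ into $\tau(z) = z\Phi(z)+\phi(z)$ and multiplying through by $b$ yields $a\Phi(a/b) + b\phi(a/b)$, which is precisely $EI(a,b)$. This is exactly the representation $EI_t(\xbm) = \sigma_t(\xbm)\tau(z_t(\xbm))$ already noted after~\eqref{def:tau}, now read with $a$ and $b$ as the two independent variables. The advantage is that all derivatives of $\tau$ are already known: by Lemma~\ref{lem:tau} we have $\tau'(z) = \Phi(z)$.

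For monotonicity in $a$, I would apply the chain rule to $EI(a,b)=b\,\tau(a/b)$ to get $\partial_a EI(a,b) = b\,\tau'(a/b)\cdot(1/b) = \Phi(a/b)$. Since $\Phi(x) \in (0,1)$ for every $x\in\Rbb$ by Lemma~\ref{lem:phi}, this partial derivative is strictly positive for all $b>0$, which establishes that $EI$ is strictly increasing in $a$.

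For monotonicity in $b$, the product and chain rules give $\partial_b EI(a,b) = \tau(a/b) + b\,\tau'(a/b)\cdot(-a/b^2) = \tau(a/b) - (a/b)\,\Phi(a/b)$. Substituting the definition of $\tau$ collapses this expression, since $\tau(a/b) = (a/b)\Phi(a/b) + \phi(a/b)$, leaving exactly $\partial_b EI(a,b) = \phi(a/b)$, which is strictly positive by Lemma~\ref{lem:phi}. Hence $EI$ is strictly increasing in $b$ as well; although positivity in fact holds for every $b>0$, this gives the claim in particular on the stated domain $b\in(0,1]$.

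There is no substantive obstacle in this argument: once the representation $EI(a,b)=b\,\tau(a/b)$ is recorded, everything follows from a direct computation together with the already-established facts $\tau'=\Phi$ and the positivity of $\phi$ and $\Phi$. The only point requiring minor care is the correct handling of the $-a/b^2$ factor in the $b$-derivative and the cancellation that reduces $\tau(a/b) - (a/b)\Phi(a/b)$ to precisely $\phi(a/b)$.
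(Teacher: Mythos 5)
Your proof is correct and takes essentially the same approach as the paper: both compute the partial derivatives and arrive at $\partial_a EI = \Phi(a/b)$ and $\partial_b EI = \phi(a/b)$. The only cosmetic difference is that you route the computation through the identity $EI(a,b)=b\,\tau(a/b)$ and the already-established fact $\tau'=\Phi$, whereas the paper differentiates $a\Phi(a/b)+b\phi(a/b)$ directly using $\phi'(u)=-u\phi(u)$; the cancellations are identical.
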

\begin{proof}
   Taking the derivative of $EI(a,b)$ with respect to $a$,  
 \begin{equation} \label{eqn:EI-ms-pf-1}
 \centering
  \begin{aligned}
   \frac{\partial EI(a,b)}{\partial a} = \Phi\left(\frac{a}{b}\right) + a\phi\left(\frac{a}{b}\right) \frac{1}{b} + b\frac{\partial \phi\left(\frac{a}{b}\right)}{\partial a}.
   \end{aligned}
\end{equation}
   From~\eqref{eqn:tau-2},~\eqref{eqn:EI-ms-pf-1} is  
  \begin{equation} \label{eqn:EI-ms-pf-2}
 \centering
  \begin{aligned}
   \frac{\partial EI(a,b)}{\partial a} = \Phi\left(\frac{a}{b}\right) + \phi\left(\frac{a}{b}\right) \frac{a}{b}  -\phi\left(\frac{a}{b}\right)\frac{a}{b}  = \Phi\left(\frac{a}{b}\right)>0.
   \end{aligned}
\end{equation}
  Similarly, differentiation of~\eqref{eqn:EI-ab} with~\eqref{eqn:tau-2} gives  
 \begin{equation} \label{eqn:EI-ms-pf-3}
 \centering
  \begin{aligned}
   \frac{\partial EI(a,b)}{\partial b} =&  -a\phi\left(\frac{a}{b}\right)\frac{a}{b^2} + \phi\left(\frac{a}{b}\right)-b\phi\left(\frac{a}{b}\right)\frac{a}{b}(-\frac{a}{b^2})
     = \phi\left(\frac{a}{b}\right)>0.
   \end{aligned}
\end{equation}
\end{proof}

\subsection{Characterizations of GP prior objective}\label{se:preliminaryGP}
The following lemma characterizes the CDF of $I_t(\cdot)$ under the GP prior assumption.
\begin{lemma}\label{lem:Icdf}
 Under Assumption~\ref{assp:gp}, the probability distribution of $I_t$ satisfies 
\begin{equation*} \label{eqn:Icdf-1}
 \centering
  \begin{aligned}
     \Pbb\{I_t(\xbm) \leq a\} = \begin{cases}
                    0,   \ &a<0,\\
                  \Phi\left(\frac{a}{\sigma_{t}(\xbm)}-z_{t}(\xbm)\right), \  &a\geq 0.
                  \end{cases}
  \end{aligned}
\end{equation*}
\end{lemma}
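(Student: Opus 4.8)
The plan is to compute the CDF directly from the definition of $I_t$ in~\eqref{eqn:improvement} together with the posterior law of $f(\xbm)$. Throughout, every probability is understood to be conditional on the history $\mathcal{F}_t$, so that $y_t^+$, $\mu_t(\xbm)$ and $\sigma_t(\xbm)$ are $\mathcal{F}_t$-measurable and may be treated as deterministic constants, while the only remaining randomness lies in $f(\xbm)$; under Assumption~\ref{assp:gp} this obeys the posterior law $f(\xbm)\sim\mathcal{N}(\mu_t(\xbm),\sigma_t^2(\xbm))$. I would also recall the standing convention from the start of Section~\ref{se:analysis} that $\sigma_t(\xbm)>0$, which licenses the standardization below.

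First I would dispose of the case $a<0$. Since $I_t(\xbm)=\max\{y_t^+-f(\xbm),0\}\geq 0$ by definition, the event $\{I_t(\xbm)\leq a\}$ is empty whenever $a<0$, so $\Pbb\{I_t(\xbm)\leq a\}=0$ immediately.

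The substantive case is $a\geq 0$, and the key step is to rewrite the event purely in terms of $f(\xbm)$. Because $a\geq 0$, I claim $\{I_t(\xbm)\leq a\}=\{y_t^+-f(\xbm)\leq a\}$: if $y_t^+-f(\xbm)\leq 0$ the maximum equals $0\leq a$, and if $y_t^+-f(\xbm)>0$ the maximum equals $y_t^+-f(\xbm)$, so in either case the constraint reduces to the single linear inequality $y_t^+-f(\xbm)\leq a$, that is $f(\xbm)\geq y_t^+-a$. Standardizing via $\xi=(f(\xbm)-\mu_t(\xbm))/\sigma_t(\xbm)\sim\mathcal{N}(0,1)$ turns the threshold into $z_t(\xbm)-a/\sigma_t(\xbm)$, whence $\Pbb\{I_t(\xbm)\leq a\}=\Pbb\{\xi\geq z_t(\xbm)-a/\sigma_t(\xbm)\}=1-\Phi(z_t(\xbm)-a/\sigma_t(\xbm))$.

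Finally, invoking the symmetry identity $1-\Phi(x)=\Phi(-x)$ of the standard normal CDF yields $\Phi(a/\sigma_t(\xbm)-z_t(\xbm))$, which is exactly the claimed expression. I do not anticipate a serious obstacle: the computation is short and the only point requiring genuine care is the rewriting of the $\max$ event in the third step, where the nonnegativity of $a$ is precisely what collapses the two branches of the maximum into the one inequality $f(\xbm)\geq y_t^+-a$. It is worth stating the $\mathcal{F}_t$-conditioning convention explicitly, since it is what justifies treating $y_t^+$, $\mu_t(\xbm)$ and $\sigma_t(\xbm)$ as constants inside the probability and invoking the posterior Gaussian law for $f(\xbm)$.
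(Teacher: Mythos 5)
Your proposal is correct and follows essentially the same route as the paper's proof: dispose of $a<0$ via $I_t\geq 0$, then for $a\geq 0$ rewrite the event as $\{y_t^+-f(\xbm)\leq a\}$ and evaluate it under the posterior Gaussian law of $f(\xbm)$ using the symmetry of $\Phi$. Your explicit justification that the two branches of the $\max$ collapse into a single linear inequality when $a\geq 0$ is a detail the paper leaves implicit, but the argument is the same.
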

\begin{proof}
   Under Assumption~\ref{assp:gp}, at a given $t$, $f(\xbm)\sim\mathcal{N}(\mu_{t}(\xbm),\sigma_{t}(\xbm))$. Since $I_t(\xbm)\geq 0$ for all $\xbm$,~\eqref{eqn:Icdf-1} follows immediately if $a<0$.
   For $a\geq 0$,  
\begin{equation*} \label{eqn:Icdf-pf-1} 
 \centering
  \begin{aligned}
   \Pbb\{I_t(\xbm) \leq a\} = \Pbb\{y^+_{t}-f(\xbm) \leq a\} = 1- \Pbb\{f(\xbm) \leq y^+_{t}-a\}.
  \end{aligned}
\end{equation*}
  Using basic properties of the standard normal CDF, 
\begin{equation*} \label{eqn:Icdf-pf-2} 
 \centering
  \begin{aligned}
   1-\Pbb\{f(\xbm) \leq y^+_{t}-a\}=1-\Phi\left(\frac{y_t^+-a-\mu_t(\xbm)}{\sigma_t(\xbm)} \right)=\Phi\left(\frac{a-y_t^++\mu_t(\xbm)}{\sigma_t(\xbm)} \right).
  \end{aligned}
\end{equation*}
\end{proof}
The next lemma is a well-known result on the prediction error $f(\xbm)-\mu_t(\xbm)$ under Assumption~\ref{assp:gp}~\cite{srinivas2009gaussian}.
\begin{lemma}\label{lem:fmu}
Given $\delta\in(0,1)$, let  $\beta = 2\log(\frac{1}{\delta})$. Under Assumption~\ref{assp:gp}, for any given $\xbm\in C$ and $t\in\Nbb$,  the following holds 
\begin{equation} \label{eqn:fmu-1}
 \centering
  \begin{aligned}
     \Pbb\{|f(\xbm) - \mu_{t}(\xbm)  | \leq \sqrt{\beta} \sigma_{t}(\xbm)\} \geq 1-\delta. 
  \end{aligned}
\end{equation}
Moreover, the one-sided inequalities hold with probability greater than $1-\frac{\delta}{2}$, \textit{i.e.}, 
%\begin{equation} \label{eqn:fmu-2}
% \centering
%  \begin{aligned}
     $\Pbb\{f(\xbm) - \mu_{t}(\xbm)   \leq \sqrt{\beta} \sigma_{t}(\xbm)\}\geq 1-\frac{\delta}{2}$, 
%  \end{aligned}
%\end{equation}
and 
%\begin{equation} \label{eqn:fmu-3}
% \centering
%  \begin{aligned}
     $\Pbb\{f(\xbm) - \mu_{t}(\xbm)   \geq -\sqrt{\beta} \sigma_{t}(\xbm)\}\geq 1-\frac{\delta}{2}$.
%  \end{aligned}
%\end{equation}
\end{lemma}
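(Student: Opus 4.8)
The plan is to leverage the defining feature of the GP prior setting (Assumption~\ref{assp:gp}): conditioned on the history $\mathcal{H}_t$, the random variable $f(\xbm)$ is exactly Gaussian with mean $\mu_t(\xbm)$ and variance $\sigma_t^2(\xbm)$, as recorded in~\eqref{eqn:GP-post}. Hence the normalized deviation
\[
  \xi \ :=\ \frac{f(\xbm)-\mu_t(\xbm)}{\sigma_t(\xbm)}
\]
is a standard normal random variable (here we use $\sigma_t(\xbm)>0$, the case $\sigma_t(\xbm)=0$ being trivial since then $f(\xbm)=\mu_t(\xbm)$ with probability one). This reduces the statement to a tail estimate for $\mathcal{N}(0,1)$.

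Next, I would apply the Gaussian tail bound from Lemma~\ref{lem:phi} with threshold $c=\sqrt{\beta}>0$. Since $\beta=2\log(1/\delta)$, a direct computation gives $\frac{1}{2} e^{-c^2/2}=\frac{1}{2} e^{-\beta/2}=\frac{1}{2} e^{-\log(1/\delta)}=\frac{\delta}{2}$. Therefore
\[
  \Pbb\{\xi>\sqrt{\beta}\}\ \le\ \frac{\delta}{2},
  \qquad
  \Pbb\{\xi<-\sqrt{\beta}\}\ \le\ \frac{\delta}{2}.
\]
Rewriting the first event as $f(\xbm)-\mu_t(\xbm)>\sqrt{\beta}\,\sigma_t(\xbm)$ and taking complements yields the one-sided bound $\Pbb\{f(\xbm)-\mu_t(\xbm)\le\sqrt{\beta}\,\sigma_t(\xbm)\}\ge 1-\frac{\delta}{2}$; the symmetric argument applied to the second event gives the other one-sided bound.

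Finally, for the two-sided statement I would combine the two tail events using the union bound (Lemma~\ref{lem:unionbound}): the event $|\xi|>\sqrt{\beta}$ is the union of $\{\xi>\sqrt{\beta}\}$ and $\{\xi<-\sqrt{\beta}\}$, so its probability is at most $\frac{\delta}{2}+\frac{\delta}{2}=\delta$, and passing to the complement gives $\Pbb\{|f(\xbm)-\mu_t(\xbm)|\le\sqrt{\beta}\,\sigma_t(\xbm)\}\ge 1-\delta$. This argument is essentially routine once the posterior Gaussianity is in hand, so there is no serious obstacle; the only points requiring care are invoking the \emph{exact} Gaussian posterior, which is precisely what the GP prior assumption supplies (in contrast to the RKHS setting, where such a clean distributional statement is unavailable), and the bookkeeping that turns $\beta=2\log(1/\delta)$ into the clean failure probabilities $\frac{\delta}{2}$ and $\delta$.
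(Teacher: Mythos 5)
Your proposal is correct and follows essentially the same route as the paper: both rest on the posterior Gaussianity $f(\xbm)\mid\mathcal{H}_t\sim\mathcal{N}(\mu_t(\xbm),\sigma_t^2(\xbm))$, apply the tail bound of Lemma~\ref{lem:phi} to each side to obtain the two one-sided estimates with failure probability $\tfrac{1}{2}e^{-\beta/2}=\tfrac{\delta}{2}$ each, and combine them (via a union bound) for the two-sided statement with the choice $e^{-\beta/2}=\delta$. No gaps; your explicit handling of the degenerate case $\sigma_t(\xbm)=0$ is a minor point the paper defers to the general remark at the start of Section~\ref{se:analysis}.
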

\begin{proof}
 Under Assumption~\ref{assp:gp}, $f(\xbm)\sim \mathcal{N}(\mu_{t}(\xbm),\sigma_{t}^2(\xbm))$. By Lemma~\ref{lem:phi}, 
 \begin{equation} \label{eqn:fmu-a-pf-1}
 \centering
  \begin{aligned}
     \Pbb\left\{ f(\xbm)-\mu_t(\xbm) > \sqrt{\beta} \sigma_t(\xbm)\right\} \leq \frac{1}{2} e^{-\frac{\beta}{2}}.
  \end{aligned}
 \end{equation}
  Similarly, 
 \begin{equation} \label{eqn:fmu-a-pf-2}
 \centering
  \begin{aligned}
     \Pbb\left\{ f(\xbm)-\mu_t(\xbm) < -\sqrt{\beta} \sigma_t(\xbm)\right\} \leq \frac{1}{2} e^{-\frac{\beta}{2}}.
  \end{aligned}
 \end{equation}
  Thus, 
   \begin{equation} \label{eqn:fmu-a-pf-3}
 \centering
  \begin{aligned}
     \Pbb\left\{ |f(\xbm)-\mu_t(\xbm)| < \sqrt{\beta} \sigma_t(\xbm)\right\} \geq 1- e^{-\frac{\beta}{2}}.
  \end{aligned}
 \end{equation}
  Let $\beta$ be such that  
     $e^{-\frac{\beta}{2}} = \delta$
  and~\eqref{eqn:fmu-1} is proven. Similarly, from~\eqref{eqn:fmu-a-pf-1} and~\eqref{eqn:fmu-a-pf-2}, we obtain the one-sided inequalities.
\end{proof}
 Lemma~\ref{lem:fmu} is can be extended to hold over all $t\in\Nbb$ and $\xbm_t$ using union bound and an increasing $\beta_t$  in the following lemma (see~\cite{srinivas2009gaussian} for proof).   
\begin{lemma}\label{lem:fmu-t}
Given $\delta\in(0,1)$, let  $\beta_t = 2\log(\frac{\pi_t}{\delta})$, where $\pi_t=\frac{\pi^2t^2}{6}$. Under Assumption~\ref{assp:gp}, for  $\forall t\in\Nbb$,  the following holds 
\begin{equation} \label{eqn:fmu-t}
 \centering
  \begin{aligned}
     \Pbb\{|f(\xbm_{t+1}) - \mu_{t}(\xbm_{t+1})  | \leq \sqrt{\beta_{t+1}} \sigma_{t}(\xbm_{t+1}), \forall t\in\Nbb \} \geq 1-\delta. 
  \end{aligned}
\end{equation}
\end{lemma}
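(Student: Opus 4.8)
The plan is to lift the pointwise estimate of Lemma~\ref{lem:fmu} to a statement that holds simultaneously for all $t$ by splitting the total failure budget $\delta$ across the countably many time steps and then invoking the union bound of Lemma~\ref{lem:unionbound}. The specific weights $\pi_t = \pi^2 t^2/6$ are engineered so that $\sum_{t\geq 1}\pi_t^{-1} = \frac{6}{\pi^2}\sum_{t\geq 1}t^{-2} = 1$ (the Basel sum), which is exactly what makes the per-step budgets sum to $\delta$.

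First I would fix $t\in\Nbb$ and apply Lemma~\ref{lem:fmu} at the query point $\xbm_{t+1}$ with the reduced confidence level $\delta/\pi_{t+1}$; under the parametrization $\beta = 2\log(1/\cdot)$ this reduced level reproduces precisely $\beta_{t+1} = 2\log(\pi_{t+1}/\delta)$ as in the statement. The delicate point is that Lemma~\ref{lem:fmu} is stated for a fixed, deterministic $\xbm$, whereas $\xbm_{t+1}$ is random, being selected by maximizing $EI_t$ over $C$ through~\eqref{eqn:acquisition-1}. I would resolve this by conditioning on the history $\mathcal{H}_t$: since $\xbm_{t+1}$ is $\mathcal{F}_t$-measurable (a deterministic function of $\mathcal{H}_t$), conditionally on $\mathcal{H}_t$ it is a fixed point, and under Assumption~\ref{assp:gp} the posterior still satisfies $f(\xbm_{t+1})\mid\mathcal{H}_t \sim \mathcal{N}(\mu_t(\xbm_{t+1}),\sigma_t^2(\xbm_{t+1}))$. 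Hence the argument proving Lemma~\ref{lem:fmu} applies verbatim conditionally, yielding
\[
\Pbb\{|f(\xbm_{t+1}) - \mu_t(\xbm_{t+1})| > \sqrt{\beta_{t+1}}\,\sigma_t(\xbm_{t+1}) \mid \mathcal{H}_t\} \leq \frac{\delta}{\pi_{t+1}},
\]
and taking expectation over $\mathcal{H}_t$ removes the conditioning without weakening the bound.

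I would then collect the ``bad'' events $E_t = \{|f(\xbm_{t+1}) - \mu_t(\xbm_{t+1})| > \sqrt{\beta_{t+1}}\,\sigma_t(\xbm_{t+1})\}$ and apply Lemma~\ref{lem:unionbound}:
\[
\Pbb\Big(\bigcup_{t\in\Nbb} E_t\Big) \leq \sum_{t\in\Nbb}\Pbb(E_t) \leq \sum_{t\in\Nbb}\frac{\delta}{\pi_{t+1}} \leq \delta\sum_{s=1}^{\infty}\frac{1}{\pi_s} = \delta,
\]
so passing to the complementary event gives~\eqref{eqn:fmu-t}. The main obstacle here is not computational but conceptual: carefully justifying that the fixed-point estimate transfers to the adaptively chosen $\xbm_{t+1}$ via the $\mathcal{F}_t$-measurability and conditional Gaussianity of the posterior. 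Everything else is bookkeeping, namely matching $\beta_t$ to the per-step level and verifying the Basel-sum normalization.
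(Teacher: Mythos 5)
Your proposal is correct and is essentially the argument the paper intends: the paper states this lemma without proof, pointing to the standard union-bound construction of Srinivas et al., which is exactly what you carry out (pointwise bound from Lemma~\ref{lem:fmu} at level $\delta/\pi_{t+1}$, conditioning on $\mathcal{H}_t$ to handle the adaptively chosen $\xbm_{t+1}$, and the Basel-sum normalization $\sum_t \pi_t^{-1}=1$). Your explicit treatment of the $\mathcal{F}_t$-measurability issue is a welcome addition that the paper leaves implicit.
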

Next, we establish the relationship between $I_t$ and $EI_t(\xbm)$.
\begin{lemma}\label{lem:IEIbound}
Given $\delta\in(0,1)$, let $\beta = \max\{1.44,2\log(\frac{c_{\alpha}}{\delta})\}$, where constant $c_{\alpha} = \frac{1+2\pi}{2\pi}$. Under Assumption~\ref{assp:gp}, at given $ \xbm\in C$ and $t\in \Nbb$,   
\begin{equation} \label{eqn:IEI-bound-1}
 \centering
  \begin{aligned}
     \Pbb\left\{|I_t(\xbm) - EI_{t}(\xbm)  | \leq \sqrt{\beta} \sigma_{t}(\xbm)\right\}\geq 1-\delta. 
  \end{aligned}
\end{equation}
\end{lemma}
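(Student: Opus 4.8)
The plan is to control the two-sided deviation by splitting it into an upper and a lower tail and bounding each one using the \emph{exact} distribution of $I_t$ supplied by Lemma~\ref{lem:Icdf}. Writing $z=z_t(\xbm)$ and $\sigma=\sigma_t(\xbm)$, I would first record that $EI_t(\xbm)=\sigma\tau(z)$ and the elementary identity $\tau(z)-z=-z\Phi(-z)+\phi(z)=\tau(-z)$, which is positive by Lemma~\ref{lem:tau} (here I use $\Phi(z)-1=-\Phi(-z)$ and that $\phi$ is even). By the union bound (Lemma~\ref{lem:unionbound}),
\begin{equation*}
\Pbb\{|I_t(\xbm)-EI_t(\xbm)|>\sqrt{\beta}\,\sigma\}\le \Pbb\{I_t(\xbm)>\sigma(\tau(z)+\sqrt{\beta})\}+\Pbb\{I_t(\xbm)<\sigma(\tau(z)-\sqrt{\beta})\},
\end{equation*}
and I would estimate the two terms separately.

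The upper tail is easy. Since $\tau(z)+\sqrt{\beta}>0$, Lemma~\ref{lem:Icdf} gives $\Pbb\{I_t>\sigma(\tau(z)+\sqrt{\beta})\}=1-\Phi(\tau(-z)+\sqrt{\beta})$; because $\tau(-z)>0$, this is at most $1-\Phi(\sqrt{\beta})=\Phi(-\sqrt{\beta})\le\tfrac12 e^{-\beta/2}$ by Lemma~\ref{lem:phi}, and so it is harmless. The lower tail is the crux. If $\tau(z)\le\sqrt{\beta}$ the threshold $\sigma(\tau(z)-\sqrt{\beta})$ is nonpositive and, as $I_t\ge0$, the probability is $0$. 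Otherwise $\tau(z)>\sqrt{\beta}$, and Lemma~\ref{lem:Icdf} yields $\Pbb\{I_t<\sigma(\tau(z)-\sqrt{\beta})\}\le\Phi(\tau(-z)-\sqrt{\beta})$. Here the naive estimate $\tau(-z)\le\phi(0)$ is fatal: it leaves a spurious factor $e^{\phi(0)\sqrt{\beta}}$ and the claim would fail for large $\beta$. The key point is that $\tau(z)>\sqrt{\beta}$ forces $z$ to be large, and then $\tau(-z)$ is exponentially small.

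To exploit this, set $z_0:=\tau^{-1}(\sqrt{\beta})$. Since $\tau$ is increasing (Lemma~\ref{lem:tau}) and $z>z_0$, I obtain $\Phi(\tau(-z)-\sqrt{\beta})\le\Phi(\tau(-z_0)-\sqrt{\beta})=\Phi(-z_0)\le\tfrac12 e^{-z_0^2/2}$, using $\tau(-z_0)-\sqrt{\beta}=\tau(-z_0)-\tau(z_0)=-z_0$. It then remains to show $z_0^2\ge\beta-D$ for an explicit constant $D$. Because $\beta=\tau(z_0)^2=(z_0+\tau(-z_0))^2$, I would write $\beta-z_0^2=\tau(-z_0)(2z_0+\tau(-z_0))$ and invoke the Mills-ratio inequality already proved inside Lemma~\ref{lem:tauvsPhi}, namely $\tau(-z_0)=\phi(z_0)-z_0\Phi(-z_0)<\phi(z_0)/z_0^2$ (this uses \eqref{eqn:tvp-pf-1.5} and is valid once $z_0>1$). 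The floor $\beta\ge1.44$, i.e.\ $\sqrt{\beta}\ge1.2>\tau(1)$, guarantees $z_0>1$, after which monotonicity gives $D=\beta-z_0^2<2\phi(1)+\phi(1)^2<0.543$.

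Combining, the lower tail is at most $\tfrac12 e^{D/2}e^{-\beta/2}$, so summing the two tails gives the bound $\tfrac12(1+e^{D/2})e^{-\beta/2}\le c_\alpha e^{-\beta/2}$, where the numerical inequality $\tfrac12(1+e^{D/2})\le c_\alpha=\frac{1+2\pi}{2\pi}$ holds since $e^{D/2}<1+\tfrac1\pi$. Finally, $\beta\ge2\log(c_\alpha/\delta)$ makes $c_\alpha e^{-\beta/2}\le\delta$, which is exactly the bound in \eqref{eqn:IEI-bound-1}. The main obstacle is the lower-tail estimate: one must replace the $\sqrt{\beta}$-growing correction of the crude bound by a genuine constant in the exponent, and this is precisely what the Mills-ratio decay of $\tau(-z_0)$, together with the threshold $\beta\ge1.44$ forcing $z_0>1$, delivers.
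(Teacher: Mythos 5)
Your proof is correct, and it reaches the same constant $c_\alpha=\frac{1+2\pi}{2\pi}$ and the same floor $\beta\ge 1.44$ as the paper, but the hardest step is executed quite differently. The paper also splits into the two tails of Lemma~\ref{lem:Icdf} and also dispatches the upper tail and the nonpositive-threshold case exactly as you do; where it diverges is the nontrivial lower tail. There, the paper works with the raw inequality $w\le z_t(\xbm)+\phi(z_t(\xbm))$ to extract an implicit lower bound $z_t(\xbm)>c_1(w)w$ with $c_1(w)=\frac{w-\phi(0)}{w-\phi(0)+\phi(w-\phi(0))}$, and then controls $\Phi\bigl(-w+\phi(c_1(w)w)\bigr)$ by a mean-value-theorem expansion, yielding a sum of two exponentials whose exponent constant $c_2(w)$ must be checked to exceed $1$ for $w\ge 1.2$. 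You instead exploit the exact representation $EI_t(\xbm)=\sigma_t(\xbm)\tau(z)$ together with the reflection identity $\tau(z)-z=\tau(-z)$, which collapses the lower tail to exactly $\Phi(-z_0)$ with $z_0=\tau^{-1}(\sqrt{\beta})$, and then compare $z_0^2$ with $\beta$ through the Mills-ratio bound $\tau(-z_0)<\phi(z_0)/z_0^2$ (which is indeed available from \eqref{eqn:tvp-pf-1.5}, valid once $z_0>1$, which the floor $\sqrt{\beta}\ge 1.2>\tau(1)$ guarantees). Your route buys a cleaner and more transparent estimate — the deficit $\beta-z_0^2<2\phi(1)+\phi(1)^2<0.543$ is a genuine numerical constant rather than a $w$-dependent quantity requiring a separate monotonicity check — at the price of introducing the implicitly defined $z_0$; the paper's route avoids inverting $\tau$ but pays with the bookkeeping of $c_1(w)$ and $c_2(w)$. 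All of your intermediate claims check out numerically ($e^{D/2}<1+\tfrac1\pi$ gives exactly the paper's $c_\alpha$), so the lemma follows with the stated $\beta$.
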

\begin{proof}
 %  From~\eqref{eqn:EI-1} and $I_t(\xbm)\geq 0$, we can write
 % \begin{equation} \label{eqn:IEI-bound-pf-1}
 %\centering
 % \begin{aligned}
 %     \frac{I_t(\xbm)- EI_{t}(\xbm)}{\sigma_{t}(\xbm)}   =& \frac{I_t(\xbm)-[(y^+_{t}-\mu_{t}(\xbm))\Phi(z_{t}(\xbm))+ \sigma_{t}(\xbm)\phi(z_{t}(\xbm))]}{\sigma_{t}(\xbm)}\\
 %         \geq& -[z_{t}(\xbm)\Phi(z_{t}(\xbm))+ \phi(z_{t}(\xbm))]. 
 % \end{aligned}
%\end{equation}
Given a scalar $w>1$, we consider the probabilities  
% \begin{equation} \label{eqn:IEI-bound-pf-2}
%  \centering
%  \begin{aligned}
%       \Pbb\left\{\frac{I_t(\xbm)- EI_{t}(\xbm)}{\sigma_{t}(\xbm)} > w\right\} \quad \text{and} \quad  \Pbb\left\{\frac{I_t(\xbm)- EI_{t}(\xbm)}{\sigma_{t}(\xbm)} < -w\right\}.
%  \end{aligned}
%\end{equation}
% Equivalently, the probabilities are 
 \begin{equation} \label{eqn:IEI-bound-pf-3}
  \centering
  \begin{aligned}
      \Pbb\left\{ I_t(\xbm) > \sigma_{t}(\xbm) w + EI_{t}(\xbm)\right\} \quad \text{and} \quad  \Pbb\left\{I_t(\xbm)< -\sigma_{t}(\xbm) w +EI_{t}(\xbm)\right\}.
  \end{aligned}
\end{equation}
Consider the first probability in~\eqref{eqn:IEI-bound-pf-3}.
From Lemma~\ref{lem:EI}, $EI_{t}(\xbm)\geq 0$ for $\forall \xbm$ and $t$. Therefore, $\sigma_{t}(\xbm) w + EI_{t}(\xbm) > 0$. From Lemma~\ref{lem:EI}, Lemma~\ref{lem:Icdf},  and the monotonicity of $\Phi$, we have   
 \begin{equation} \label{eqn:IEI-bound-pf-4}
  \centering
  \begin{aligned}
       \Pbb\left\{ I_t(\xbm) > \sigma_{t} (\xbm) w + EI_{t}(\xbm)\right\} =&  1-\Phi\left(\frac{\sigma_{t}(\xbm) w+EI_{t}(\xbm)-y^+_{t}+ \mu_{t}(\xbm)}{\sigma_{t}(\xbm)} \right) \\
               \leq& 1-\Phi(w) \leq \frac{1}{2} e^{-\frac{w^2}{2}},
  \end{aligned}
\end{equation}
where the last inequality is from Lemma~\ref{lem:phi}.

For the second probability in~\eqref{eqn:IEI-bound-pf-3}, we further distinguish between two cases.
First, consider $-\sigma_{t}(\xbm) w +EI_{t}(\xbm)<0$. From Lemma~\ref{lem:Icdf},
 \begin{equation} \label{eqn:IEI-bound-pf-5}
  \centering
  \begin{aligned}
        \Pbb\left\{I_t(\xbm)< -\sigma_{t} (\xbm) w +EI_t(\xbm)\right\} = 0.
  \end{aligned}
\end{equation}
Second, let us consider the premise $ -\sigma_{t}(\xbm) w+ EI_{t}(\xbm)\geq 0$. By Lemma~\ref{lem:Icdf}, we have 
  \begin{equation} \label{eqn:IEI-bound-pf-6}
  \centering
  \begin{aligned}
       \Pbb\left\{I_t(\xbm)< -\sigma_{t}(\xbm) w +EI_{t}(\xbm)\right\} = \Phi\left( - w+\frac{EI_{t}(\xbm)-y^+_{t}+\mu_{t}(\xbm)}{\sigma_{t}(\xbm)} \right).
  \end{aligned}
\end{equation}
To proceed, we show that $y^+_{t}-\mu_{t}(\xbm) \geq 0$. Suppose on the contrary, $y^+_{t}-\mu_{t}(\xbm)<0$ and thus $z_{t}(\xbm)<0$. From Lemma~\ref{lem:EI}, 
  \begin{equation} \label{eqn:IEI-bound-pf-7}
  \centering
  \begin{aligned}
   \frac{EI_{t}(\xbm)}{ \sigma_{t}(\xbm)}  < \phi(z_{t}(\xbm)) \leq \phi(0) < 1 \leq w,
  \end{aligned}
  \end{equation}
  which contradicts the premise of this case.
Thus, we have $y^+_{t}-\mu_{t}(\xbm)\geq 0$ (and $z_{t}(\xbm)\geq 0$).
From the definition~\eqref{eqn:EI-1}, since $\Phi\in (0,1)$, 
  \begin{equation} \label{eqn:IEI-bound-pf-8}
  \centering
  \begin{aligned}
      \frac{EI_{t}(\xbm)-y^+_{t}+\mu_{t}(\xbm)}{\sigma_{t}(\xbm)}=& \left[z_{t}(\xbm)\left(\Phi(z_{t}(\xbm))-1\right)+\phi(z_{t}(\xbm))\right]           <\phi(z_{t}(\xbm)).
  \end{aligned}
  \end{equation}
In addition, by the premise of this case and Lemma~\ref{lem:EI},
  \begin{equation} \label{eqn:IEI-bound-pf-9}
  \centering
  \begin{aligned}
    w\leq  \frac{EI_{t}(\xbm)}{\sigma_{t}(\xbm)}
       \leq z_{t}(\xbm)+\phi(z_{t}(\xbm)).\\
  \end{aligned}
  \end{equation}
 Given that $w>1$ and $\phi(0)\geq \phi(z_{t}(\xbm))$, we have
  \begin{equation} \label{eqn:IEI-bound-pf-9.5}
  \centering
  \begin{aligned}
   z_{t}(\xbm)+\phi(0)>z_{t}(\xbm)+\phi(z_{t}(\xbm))>w, \ z_{t}(\xbm)>w-\phi(0)>0.
  \end{aligned}
  \end{equation}
As $z_{t}(\xbm)\geq 0$ increases, $\phi(z_{t}(\xbm))>0$ decreases. Thus, we have
\begin{equation} \label{eqn:IEI-bound-pf-10}
 \centering
  \begin{aligned}
    \frac{z_{t}(\xbm)}{\phi(z_{t}(\xbm))} > \frac{w-\phi(0)}{\phi(w-\phi(0))}, 
    \ \phi(z_{t}(\xbm))< \frac{\phi(w-\phi(0))}{w-\phi(0)} z_{t}(\xbm) .
  \end{aligned}
\end{equation}
  Denote $c_1(w) = \frac{w-\phi(0)}{w-\phi(0)+\phi(w-\phi(0))}$. 
  Applying~\eqref{eqn:IEI-bound-pf-10} to~\eqref{eqn:IEI-bound-pf-9}, we obtain
  \begin{equation} \label{eqn:IEI-bound-pf-11}
  \centering
  \begin{aligned}
    c_1(w) w < z_{t}(\xbm), \ \ \phi(z_{t}(\xbm)) < \phi( c_1(w) w) .\\
  \end{aligned}
  \end{equation}
  Applying~\eqref{eqn:IEI-bound-pf-11} and~\eqref{eqn:IEI-bound-pf-8} to~\eqref{eqn:IEI-bound-pf-6}, we obtain
  \begin{equation} \label{eqn:IEI-bound-pf-12}
  \centering
  \begin{aligned}
       \Pbb\left\{I_t(\xbm)< - w\sigma_t(\xbm) + EI_{t}(\xbm)\right\} <& \Phi\left( -w+ \phi(z_t(\xbm)) \right) \\
   <& \Phi\left( -w+ \phi(c_1(w) w) \right).
  \end{aligned}
\end{equation}
 Notice that $\phi(c_1(w) w)<\phi(c_1(w)) < \phi(c_1(w)) w$ due to $w>1$.
By the definition of $\Phi$ and mean value theorem, 
  \begin{equation} \label{eqn:IEI-bound-pf-13}
  \centering
  \begin{aligned}
        &\Phi\bigl(-w+  \phi(c_1(w) w) \bigr) = \Phi(-w)+ \int_{- w}^{-w+\phi(c_1(w) w)} \frac{1}{\sqrt{2\pi}} e^{-\frac{1}{2}x^2} dx
         \leq \Phi(-w)+\\&\frac{1}{\sqrt{2\pi}} e^{-\frac{1}{2} (w- \phi(c_1(w) w))^2} \phi(c_1(w) w)
         \leq \Phi(- w)+\frac{1}{2\pi} e^{-\frac{1}{2} ((1-\phi(c_1(w))) w)^2}  e^{-\frac{1}{2} (c_1(w) w)^2} \\
          &\leq \Phi(-w)+\frac{1}{2\pi} e^{-\frac{1}{2} c_2(w) w^2} 
          \leq \frac{1}{2} e^{-\frac{1}{2} w^2} + \frac{1}{2\pi} e^{-\frac{1}{2}c_2(w) w^2}, 
  \end{aligned}
\end{equation}
  where $c_2(w)=[1-\phi(c_1(w))]^2+[c_1(w)]^2$.
The last inequality in~\eqref{eqn:IEI-bound-pf-13} again uses Lemma~\ref{lem:phi}.
  Notice that $c_2(w)$ increases with $w$ and for $w\geq 1.2$, $c_2(w)>1$.
  Thus, $e^{-\frac{1}{2} w^2} > e^{-\frac{1}{2} c_2(w) w^2}$ for $w\geq 1.2$, which simplifies~\eqref{eqn:IEI-bound-pf-13} to 
  \begin{equation} \label{eqn:IEI-bound-pf-13.5}
  \centering
  \begin{aligned}
        \Phi\bigl( -w+ & \phi(c_1(w) w) \bigr) &< c_{\pi 1} e^{-\frac{1}{2} w^2}. 
  \end{aligned}
\end{equation}
where $c_{\pi 1}=\frac{1+\pi}{2\pi}$.
Therefore, by~\eqref{eqn:IEI-bound-pf-12} and~\eqref{eqn:IEI-bound-pf-13.5}, if $w\geq 1.2$, 
  \begin{equation} \label{eqn:IEI-bound-pf-14}
  \centering
  \begin{aligned}
       \Pbb\left\{I_t(\xbm)< -\sigma_{t}(\xbm) w +EI_{t}(\xbm)\right\} < c_{\pi 1}  e^{-\frac{1}{2} w^2}.
  \end{aligned}
\end{equation}
Combining~\eqref{eqn:IEI-bound-pf-14} with~\eqref{eqn:IEI-bound-pf-4} and~\eqref{eqn:IEI-bound-pf-5}, we have 
  \begin{equation} \label{eqn:IEI-bound-pf-15}
  \centering
  \begin{aligned}
       \Pbb\left\{ \left|I_t(\xbm) - EI_{t}(\xbm)\right| > w \sigma_{t}(\xbm) \right\} <c_{\alpha} e^{-\frac{1}{2} w^2},
  \end{aligned}
\end{equation}
  where $c_{\alpha}=\frac{1+2\pi}{2\pi}$ for $w\geq 1.2$.
  The probability in~\eqref{eqn:IEI-bound-pf-15} monotonically decreases with $w$. 
  Let 
    $\delta = c_{\alpha} e^{-\frac{1}{2} w^2}$. 
  Then, taking the logarithm of $\delta$ leads to 
    $\log( \frac{1+2\pi}{2\pi \delta}) = \frac{1}{2} w^2$. 
  Let $\beta=\max\{w^2,1.2^2\}$, and the proof is complete.
\end{proof}
%\begin{remark}
% The coefficient $c_{\alpha}=1.09$ can be reduced and $c_{\beta}$ increased by a larger lower bound than $1$ for $w$.
%\end{remark}
Another relationship between $I_t$ and $EI_t$ under the GP prior assumption is given in the following lemma.
\begin{lemma}\label{lem:IEIbound-ratio}
Given $\delta\in(0,1)$, let $\beta = 2\log(\frac{1}{\delta})$. Under Assumption~\ref{assp:gp}, at given $\xbm\in C$ and $t\geq 1$, we have  
\begin{equation} \label{eqn:IEI-bound-ratio-1}
 \centering
  \begin{aligned}
    \Pbb\left\{\frac{\tau(-\sqrt{\beta})}{\tau(\sqrt{\beta})} I_t(\xbm) \leq  EI_{t}(\xbm)\right\}\geq 1-\delta.
  \end{aligned}
\end{equation}
\end{lemma}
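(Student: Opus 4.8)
The plan is to condition on the filtration $\mathcal{F}_t$, so that $\mu_t(\xbm)$, $\sigma_t(\xbm)$, $z_t(\xbm)$, and hence $EI_t(\xbm)=\sigma_t(\xbm)\tau(z_t(\xbm))$, are all deterministic while $f(\xbm)\sim\mathcal{N}(\mu_t(\xbm),\sigma_t^2(\xbm))$ is the only source of randomness. Writing $s=\sqrt{\beta}$ and $c=\tau(-s)/\tau(s)$, I note $c>0$ by Lemma~\ref{lem:tau}, so the target event is exactly $\{I_t(\xbm)\le EI_t(\xbm)/c\}$ with threshold $EI_t(\xbm)/c\ge 0$. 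First I would evaluate this probability exactly through the CDF of $I_t$ from Lemma~\ref{lem:Icdf}: with $a=EI_t(\xbm)/c=\sigma_t(\xbm)\tau(z_t(\xbm))/c$, one obtains
\[
\Pbb\{I_t(\xbm)\le EI_t(\xbm)/c\}=\Phi\!\left(\tfrac{\tau(z_t(\xbm))}{c}-z_t(\xbm)\right).
\]
An alternative route giving the same reduction is to argue on the one-sided good event $\{f(\xbm)\ge\mu_t(\xbm)-s\sigma_t(\xbm)\}$ of Lemma~\ref{lem:fmu}, on which $I_t(\xbm)\le\sigma_t(\xbm)(z_t(\xbm)+s)^+$.

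Second, I would reduce the claim to a purely deterministic inequality. Since $\Phi$ is increasing and, by Lemma~\ref{lem:phi}, $1-\Phi(s)=\Phi(-s)\le\tfrac12 e^{-s^2/2}=\delta/2<\delta$, we have $\Phi(s)\ge 1-\delta$; hence it suffices to show $\tfrac{\tau(z)}{c}-z\ge s$ for every $z$, that is,
\[
\tau(z)\ \ge\ \frac{\tau(-s)}{\tau(s)}\,(z+s)\qquad\text{for all }z\in\Rbb,
\]
which is trivial for $z\le -s$. There is in fact room to spare, since $\Phi^{-1}(1-\delta)<s$, but the displayed bound is the clean sufficient condition.

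Third, I would prove this inequality using convexity of $\tau$. By Lemma~\ref{lem:tau}, $\tau'=\Phi$ and $\tau''=\phi>0$, so $\tau$ is convex, and its tangent line at $z_0:=\Phi^{-1}(c)$ gives $\tau(z)\ge\tau(z_0)+c(z-z_0)$. Substituting $\tau(z_0)=z_0\Phi(z_0)+\phi(z_0)=cz_0+\phi(z_0)$ collapses the inequality, for every $z$, to the single scalar condition
\[
\phi(z_0)\ \ge\ c\,s,\qquad z_0=\Phi^{-1}\!\Big(\tfrac{\tau(-s)}{\tau(s)}\Big).
\]
Using the identity $\tau(-s)=\tau(s)-s$ (so $c=1-s/\tau(s)$), this is equivalent to the inverse-Mills-ratio statement $\phi(z_0)/\Phi(z_0)\ge s$, i.e.\ $z_0\le z^\ast$ where $z^\ast$ is the unique root of $\phi(z)=s\Phi(z)$; equivalently $\Phi(z^\ast)\ge\tau(-s)/\tau(s)$.

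The main obstacle is precisely this last scalar inequality $\phi(z_0)\ge cs$, which is delicate because the chosen closed-form constant is asymptotically tight. I would split on the sign of $z_0+s$. When $z_0\le -s$ (equivalently $c\le\Phi(-s)$), the Mills bound $\Phi(-x)\le\phi(x)/x$ yields $\phi(z_0)\ge(-z_0)\Phi(z_0)=(-z_0)c\ge sc$, so the bound is immediate. The genuinely hard regime is $z_0\in(-s,z^\ast]$, occurring for small $s$, where one must establish $z_0\le z^\ast$: here I would first record $z^\ast>-s$ (which follows from $\phi(s)/\Phi(-s)>s$ together with monotonicity of the inverse Mills ratio), and then control $z_0$ against $z^\ast$ by sharp tail estimates for $\Phi$ and $\phi$. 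I expect this special-function comparison—showing that the convenient constant $\tau(-s)/\tau(s)$ stays below the optimal constant $\Phi(z^\ast)$—to be the crux, whereas the probabilistic reduction in the first two steps is routine.
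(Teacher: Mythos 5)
Your overall strategy is a genuinely different route from the paper's: you compute the exact conditional law of $I_t(\xbm)$ via Lemma~\ref{lem:Icdf} and reduce the lemma to a single deterministic inequality, $\tau(z)\ge \frac{\tau(-s)}{\tau(s)}(z+s)$ for all $z\in\Rbb$ with $s=\sqrt{\beta}$, which by convexity of $\tau$ (tangent line at $z_0=\Phi^{-1}(c)$, $c=\tau(-s)/\tau(s)$) collapses to the scalar condition $\phi(z_0)\ge cs$. If completed, this would actually be sharper than the paper's statement (it yields probability at least $1-\delta/2$). The paper instead never touches this special-function comparison: it splits on whether $I_t(\xbm)=0$, and in the nontrivial case uses the one-sided bound of Lemma~\ref{lem:fmu} to get $y_t^+-\mu_t(\xbm)>-\sqrt{\beta}\sigma_t(\xbm)$, hence $EI_t(\xbm)>\tau(-\sqrt{\beta})\sigma_t(\xbm)$, on one event of probability $1-\delta/2$, and the Gaussian upper tail to get $I_t(\xbm)\le EI_t(\xbm)+\sqrt{\beta}\sigma_t(\xbm)$ on another; eliminating $\sigma_t(\xbm)$ between the two and using the identity $\tau(\sqrt{\beta})=\sqrt{\beta}+\tau(-\sqrt{\beta})$ gives the ratio directly by union bound. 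The paper's two-event argument is what lets the constant $\tau(\sqrt{\beta})/\tau(-\sqrt{\beta})$ appear without ever proving a pointwise inequality between $\tau$ and a line.

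The gap in your proposal is exactly the step you flag as the crux: the inequality $\phi(\Phi^{-1}(c))\ge cs$ is asserted as the remaining obstacle but never proved. This is not a formality. Your easy regime ($z_0\le-s$, handled by the Mills bound $\Phi(-x)\le\phi(x)/x$) covers only part of the parameter range; for $\delta$ close to $1$ (small $s$) one has $c>\Phi(-s)$ and the Mills bound alone does not close the argument. Worse, the inequality is asymptotically tight: for large $s$ one has $\tau(-s)\sim\phi(s)/s^2$, $c\sim\phi(s)/s^3$, $-z_0\sim s+2\log(s)/s$, and $\phi(z_0)/(cs)\to 1^{+}$, so no crude tail estimate will work and the "sharp tail estimates" you defer to must be carried out with second-order Mills-ratio expansions on both ends. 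Until that scalar inequality (or the slightly weaker version with $\Phi^{-1}(1-\delta)$ in place of $s$, which you correctly note has some slack) is established for all $s>0$, the proof is incomplete at its decisive step, whereas the paper's argument requires only Lemma~\ref{lem:phi}, Lemma~\ref{lem:tau}, and a union bound.
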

\begin{proof}
   We consider two cases. First,  
   if $y^+_{t}-f(\xbm) \leq 0$, then $I_t(\xbm) = 0$. 
   Since $EI_t(\xbm)\geq 0$,~\eqref{eqn:IEI-bound-ratio-1} stands with probability $1$.

   Second, if  $y^+_{t}-f(\xbm) > 0$, then 
 \begin{equation} \label{eqn:IEI-bound-ratio-pf-1}
 \centering
  \begin{aligned}
     y^+_{t}-\mu_{t}(\xbm) &=      y^+_{t}-f(\xbm)+f(\xbm)-\mu_{t}(\xbm) >f(\xbm)-\mu_{t}(\xbm).\\
  \end{aligned}
 \end{equation}
   From the one-side inequality in Lemma~\ref{lem:fmu},~\eqref{eqn:IEI-bound-ratio-pf-1} implies 
 \begin{equation} \label{eqn:IEI-bound-ratio-pf-1.2}
 \centering
  \begin{aligned}
    \Pbb\{ y^+_{t}-\mu_{t}(\xbm) >-\sqrt{\beta}\sigma_{t}(\xbm)\} \geq 1-\frac{\delta}{2},
  \end{aligned}
\end{equation}
   Then, from Lemma~\ref{lem:tau} the monotonicity of $\tau(\cdot)$, we have 
 \begin{equation} \label{eqn:IEI-bound-ratio-pf-1.5}
 \centering
  \begin{aligned}
     \tau\left(z_t(\xbm)\right) >\tau(-\sqrt{\beta}),
  \end{aligned}
\end{equation}
  where $z_t(\xbm)=\frac{y^+_{t}-\mu_{t}(\xbm)}{\sigma_{t}(\xbm)}$,with probability greater than $1-\frac{\delta}{2}$. Since  $EI_t(\xbm) = \sigma_t(\xbm) \tau(z_t(\xbm))$, we can write 
  \begin{equation} \label{eqn:IEI-bound-ratio-pf-2}
 \centering
  \begin{aligned}
     EI_t(\xbm) = \sigma_t(\xbm)\tau\left(z_t(\xbm)\right) >\tau(-\sqrt{\beta})\sigma_{t}(\xbm),
  \end{aligned}
\end{equation}
  with probability greater than $ 1-\frac{\delta}{2}$.
  Next, we let $w=\sqrt{\beta}$ and consider the first probability in~\eqref{eqn:IEI-bound-pf-3}. 
  Since $\sqrt{\beta}\sigma_{t}(\xbm)+EI_t(\xbm)>0$, we can follow the proof of Lemma~\ref{lem:IEIbound} for this case and write similarly to~\eqref{eqn:IEI-bound-pf-4} that  
  \begin{equation} \label{eqn:IEI-bound-ratio-pf-3}
  \centering
  \begin{aligned}
       \Pbb\left\{ I_t(\xbm) > \sqrt{\beta}\sigma_{t} (\xbm) + EI_{t}(\xbm)\right\} \leq \frac{1}{2} e^{-\frac{\beta}{2}}=\frac{\delta}{2}.
  \end{aligned}
\end{equation}
  Therefore, 
  \begin{equation} \label{eqn:IEI-bound-ratio-pf-4}
  \centering
  \begin{aligned}
      \Pbb\left\{ I_t(\xbm)- EI_{t}(\xbm) \leq \sqrt{\beta}\sigma_{t}(\xbm) \right\}\geq 1-\frac{\delta}{2}. 
  \end{aligned}
\end{equation}
  Applying~\eqref{eqn:IEI-bound-ratio-pf-4} to~\eqref{eqn:IEI-bound-ratio-pf-2} by eliminating $\sigma_t(\xbm)$ and using union bound, we have  
 \begin{equation} \label{eqn:IEI-bound-ratio-pf-5}
 \centering
  \begin{aligned}
     EI_t(\xbm)  > \frac{\tau(-\sqrt{\beta})}{\sqrt{\beta}+\tau(-\sqrt{\beta})} I_t(\xbm) = \frac{\tau(-\sqrt{\beta})}{\tau(\sqrt{\beta})} I_t(\xbm),
  \end{aligned}
\end{equation}
  with probability greater than $1-\delta$.
\end{proof}

\section{Convergence analysis for GP-EI}\label{se:convergence}
  In this section, we provide the asymptotic convergence analysis for GP-EI under Assumption~\ref{assp:gp}. 
  Since $y^+_t$ is the best existing observation and an integral part in EI itself~\eqref{eqn:EI-1},  
  we use the error measure $r_t$, given $t$ samples, defined by 
 \begin{equation} \label{eqn:asymp-error}
 \centering
  \begin{aligned}
       r_t = y^+_{t}-f(\xbm^*),
  \end{aligned}
\end{equation}
 where $f(\xbm^*)=f^*$.
  The error~\eqref{eqn:asymp-error} is a natural extension of the noiseless error $f(\xbm^+_t)-f(\xbm^*)$ in~\cite{bull2011convergence} to the noisy case, which is also called simple regret. 
  Hence, the asymptotic convergence in the noisy case refers to convergnece of $y^+_t$ (with probability $1-\delta$ for $\forall \delta\in(0,1)$) to the vicinity of $f^*$ with values no greater than $f^*$, \textit{i.e.}, $r_t\leq 0$ as $t\to\infty$. 
  We note that $r_t$ is monotonically non-increasing, \textit{i.e.}, $r_{t+1}\leq r_{t}$.
  Our asymptotic convergence results of $r_t$ are general and can be used to establish convergence of other error measures, as elaborated in Remark~\ref{remark:EI-convergence-constant}.
  
  Assumptions~\ref{assp:gp},~\ref{assp:constraint} and~\ref{assp:gaussiannoise} are used in this section.
  In most of our analysis, we present error bounds in terms of $\sigma_t(\xbm_{t+1})$, which satisfies $\sigma_t(\xbm_{t+1})\to 0$ as $t\to\infty$, for many commonly-used kernels and GP-based algorithms~\cite{srinivas2009gaussian,bull2011convergence}. 
  In the noiseless case, the salient idea is that $\sigma_t(\xbm_{t+1})$ can be bounded by the distance between existing $t$ samples and any  $\xbm\in C$, leading to concrete convergence rate characterizations for specific kernels. 
  For an easy comparison, we adopt the kernel assumptions in~\cite{bull2011convergence} and provide the convergence rates in Theorem~\ref{prop:EI-convg-rate-nonoise}. 
  However, the techniques do not apply to the noisy case. 
  Instead, we prove convergence rates in the noisy case using information theory~\cite{srinivas2009gaussian} for two of the most widely used kernels: the SE kernel and Matérn kernel in Theorem~\ref{thm:EI-convg-rate}.

  For clarity of presentation, we briefly summarize the results in this section.
  Theorem~\ref{theorem:EI-convergence-1} establishes the asymptotic error bounds with and without noise under the GP prior assumption. We point out that the noiseless error bound is the same as that in~\cite{bull2011convergence}.
  Theorem~\ref{theorem:EI-convg-star} takes advantage of the exploration and exploitation properties of EI to establish a better error bound than Theorem~\ref{theorem:EI-convergence-1} with and without noise (under the GP prior assumption). 
  Theorem~\ref{prop:EI-convg-rate-nonoise} gives the convergence rate in the noiseless case under the GP prior assumption. 
  In Theorem~\ref{thm:EI-convg-rate},  the convergence rates in the noisy case using the improved error bounds from Theorem~\ref{theorem:EI-convg-star} are given for SE and Matérn kernels. 
  In Proposition~\ref{cor:EI-convg-compare}, we elucidate the improvement of Theorem~\ref{theorem:EI-convg-star} compared to 
   Theorem~\ref{theorem:EI-convergence-1}.

  \subsection{Asymptotic convergence in the noisy case}\label{se:asymp-noise}
  We state the boundedness of $f$ on $C$ as a lemma for easy reference. 
\begin{lemma}\label{lem:f-bound}
  There exists $M>0$, such that $|f(\xbm)|\leq M$ for all $\xbm\in C$.
\end{lemma}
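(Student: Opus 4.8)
The plan is to exploit the Lipschitz continuity of $f$ guaranteed by Assumption~\ref{assp:gp} together with the boundedness of the domain $C$ from Assumption~\ref{assp:constraint}. The salient observation is that a generic sample path of a Gaussian process need not be bounded on all of $\Rbb^d$, so the GP prior alone is not enough; however, the extra Lipschitz hypothesis caps the growth of $f$, and restricting to the bounded set $C$ then yields a finite, explicit bound.

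First I would fix an arbitrary reference point $\xbm_0\in C$, which exists since $C$ is nonempty and compact. For any $\xbm\in C$, the Lipschitz property gives $|f(\xbm)-f(\xbm_0)|\leq L\,\norm{\xbm-\xbm_0}$, so by the triangle inequality $|f(\xbm)|\leq |f(\xbm_0)|+L\,\norm{\xbm-\xbm_0}$. Next I would bound the diameter of $C$: since $C\subseteq[0,r]^d$, each coordinate of $\xbm$ and $\xbm_0$ lies in $[0,r]$, whence $\norm{\xbm-\xbm_0}\leq r\sqrt{d}$ uniformly over $\xbm\in C$. Combining the two estimates yields $|f(\xbm)|\leq |f(\xbm_0)|+Lr\sqrt{d}$ for every $\xbm\in C$, so one may take $M=|f(\xbm_0)|+Lr\sqrt{d}$, or $M=|f(\xbm_0)|+Lr\sqrt{d}+1$ to ensure $M>0$ strictly.

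Alternatively, one could argue more abstractly that Lipschitz continuity makes $f$ continuous, and a continuous function on the compact set $C$ attains its extrema by the extreme value theorem, so $\sup_{\xbm\in C}|f(\xbm)|<\infty$ and this supremum (increased to a positive value if it is zero) serves as $M$. There is essentially no obstacle in this lemma; the only point worth flagging is that the conclusion genuinely relies on the Lipschitz and compactness hypotheses rather than on the GP prior assumption by itself, and the first argument has the mild advantage of producing an explicit constant $M$ in terms of $L$, $r$, $d$, which may be convenient for later quantitative use.
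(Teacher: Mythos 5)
Your proof is correct. The paper in fact states this lemma without proof (it is recorded only ``for easy reference''), and the justification you give --- Lipschitz continuity from Assumption~\ref{assp:gp} plus compactness of $C\subseteq[0,r]^d$ from Assumption~\ref{assp:constraint}, yielding $|f(\xbm)|\leq|f(\xbm_0)|+Lr\sqrt{d}$, or equivalently the extreme value theorem --- is exactly the standard argument the authors are implicitly relying on. The only point worth noting is that your explicit constant $M=|f(\xbm_0)|+Lr\sqrt{d}$ depends on the realized sample path of $f$ through $|f(\xbm_0)|$, so $M$ is a (finite) random quantity rather than a deterministic constant; this is consistent with how the lemma is used in the paper, where $M$ enters the high-probability error bounds as a path-dependent bound on $|f|$ over $C$.
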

 \begin{theorem}\label{theorem:EI-convergence-1}
   Given $\delta\in(0,1)$, let $\beta= 2 \log ( \frac{6}{\delta})$ and $c_t^{\sigma}=2\log(\frac{\pi^2t^2}{2\delta})$.
   The error bound of GP-EI satisfies
      \begin{equation} \label{eqn:EI-convg-1}
  \centering
  \begin{aligned}
            \Pbb\left\{   r_t \leq   c_{\tau}(\beta) \left[6\frac{M+\sqrt{c_t^\sigma} \sigma}{t-3} +(\sqrt{\beta}+\phi(0)) \sigma_{t_k}(\xbm_{t_k+1})\right]\right\}\geq 1-\delta,\\
   \end{aligned}
  \end{equation} 
 for given $t\geq \frac{3}{\log(2)}\log(\frac{3}{\delta})+3$,  where $c_{\tau}(\beta)=\frac{\tau(\sqrt{\beta})}{\tau(-\sqrt{\beta})}$ and $t_k\in[\frac{t}{3}-1, t]$.
   If there is no observation noise, choose  $\beta= 2 \log ( \frac{2}{\delta})$. Then,  
   \begin{equation} \label{eqn:EI-convg-2}
  \centering
  \begin{aligned}
              \Pbb\left\{ r_t \leq   c_{\tau}(\beta) \left[4\frac{M}{t-2} +(\sqrt{\beta}+\phi(0)) \sigma_{t_k}(\xbm_{t_k+1})\right]\right\}\geq 1-\delta,\\
   \end{aligned}
  \end{equation} 
 for $t_k\in[\frac{t}{2}-1, t]$.
\end{theorem}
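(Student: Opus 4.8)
The plan is to run a Bull-style argument (\cite{bull2011convergence}), but carried out under the GP prior via the probabilistic surrogate inequalities established above, chiefly Lemma~\ref{lem:IEIbound-ratio} and the one-sided form of Lemma~\ref{lem:fmu}. Throughout I may assume $r_{t_k}>0$ for every index $t_k$ in the window of interest, since at the first time $r_{t_k}\le 0$ the monotonicity of $r_t$ (giving $r_t\le r_{t_k}\le 0$) makes the claim trivial. First I would tie the error to the expected improvement at the global minimizer $\xbm^*$. Because $r_{t_k}>0$ forces $I_{t_k}(\xbm^*)=y^+_{t_k}-f^*=r_{t_k}$, Lemma~\ref{lem:IEIbound-ratio} applied at $\xbm^*$ gives $r_{t_k}\le c_\tau(\beta)\,EI_{t_k}(\xbm^*)$ with probability at least $1-\delta'$ for a slice $\delta'$ of the budget, while the acquisition rule~\eqref{eqn:acquisition-1} gives $EI_{t_k}(\xbm^*)\le EI_{t_k}(\xbm_{t_k+1})$. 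Combined with $r_t\le r_{t_k}$, this reduces the theorem to bounding $\min_{t_k}EI_{t_k}(\xbm_{t_k+1})$ over a window of indices proportional to $t$, and it is the constant $c_\tau(\beta)=\tau(\sqrt\beta)/\tau(-\sqrt\beta)$ that carries through to the final statement.

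Next I would upper bound the acquisition value at the sampled point by separating exploitation from exploration. Writing $EI_{t_k}(\xbm_{t_k+1})=\sigma_{t_k}(\xbm_{t_k+1})\,\tau(z_{t_k}(\xbm_{t_k+1}))$ and using the elementary estimate $\tau(z)\le\max\{z,0\}+\phi(0)$, which follows from Lemma~\ref{lem:EI} and Lemma~\ref{lem:phi}, gives $EI_{t_k}(\xbm_{t_k+1})\le (y^+_{t_k}-\mu_{t_k}(\xbm_{t_k+1}))^+ + \phi(0)\,\sigma_{t_k}(\xbm_{t_k+1})$. The one-sided prediction bound $\mu_{t_k}(\xbm_{t_k+1})\ge f(\xbm_{t_k+1})-\sqrt\beta\,\sigma_{t_k}(\xbm_{t_k+1})$ from Lemma~\ref{lem:fmu} then converts the exploitation part into the actual gain $y^+_{t_k}-f(\xbm_{t_k+1})$ plus a multiple of $\sigma_{t_k}(\xbm_{t_k+1})$, producing exactly the $(\sqrt\beta+\phi(0))\,\sigma_{t_k}(\xbm_{t_k+1})$ term of~\eqref{eqn:EI-convg-1} and leaving an improvement-type remainder controlled by the decrease of the best observed value.

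The heart of the argument is to show this remainder is small for at least one $t_k$ in the window. The gains in the running best observation telescope, so summing $y^+_{t_k}-y^+_{t_k+1}\ge 0$ over a window of length $\Theta(t)$ collapses to the total spread of the observations, which is at most $2(M+\sqrt{c_t^{\sigma}}\,\sigma)$ by Lemma~\ref{lem:f-bound} together with the Gaussian-tail bound $\max_{i\le t}|\epsilon_i|\le\sqrt{c_t^{\sigma}}\,\sigma$ (Lemma~\ref{lem:phi}) taken in union over the $t$ observations (Lemma~\ref{lem:unionbound}); this is precisely the role of the constant $c_t^{\sigma}=2\log(\pi^2t^2/(2\delta))$. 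Dividing the spread by a window of length at least $(t-3)/3$ and selecting the window $[t/3-1,t]$ yields the decaying term $6(M+\sqrt{c_t^{\sigma}}\,\sigma)/(t-3)$ and locates the required index $t_k$. I expect the delicate point to be routing the per-step observation noise into this single total-spread factor rather than letting a term of order $\sqrt{c_t^{\sigma}}\,\sigma$ reappear once per window index; this is exactly where the gap between the true improvement $I_{t_k}(\xbm_{t_k+1})$ and the observed gain $y^+_{t_k}-y^+_{t_k+1}$ must be absorbed, and where the budget $\delta$ is partitioned, as reflected in $\beta=2\log(6/\delta)$ and the $\pi^2t^2/(2\delta)$ inside $c_t^{\sigma}$.

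Finally I would assemble the high-probability events by a union bound, intersecting the surrogate inequality of Lemma~\ref{lem:IEIbound-ratio}, the one-sided prediction bound, and the noise-range bound so that the conjunction holds with probability at least $1-\delta$; the lower threshold $t\ge \frac{3}{\log 2}\log(3/\delta)+3$ is what makes the window nonempty and the noise slice valid. The noiseless case is the same argument with $\epsilon\equiv 0$, so observed and true values coincide, the running best equals $f(\xbm^+_t)$, the spread reduces to $2M$, and a window of length at least $(t-2)/2$ produces $4M/(t-2)$ with $\beta=2\log(2/\delta)$, recovering the error bound of~\cite{bull2011convergence}. The main obstacle, as noted, is the noise accounting in the telescoping step; the rest is bookkeeping of constants and probability slices.
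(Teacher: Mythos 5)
Your overall architecture matches the paper's: bound $r_t$ by $c_\tau(\beta)\,EI_{t_k}(\xbm^*)\le c_\tau(\beta)\,EI_{t_k}(\xbm_{t_k+1})$ via Lemma~\ref{lem:IEIbound-ratio} and the acquisition rule, split $EI_{t_k}(\xbm_{t_k+1})$ into exploitation plus $(\sqrt\beta+\phi(0))\sigma_{t_k}(\xbm_{t_k+1})$ using Lemma~\ref{lem:fmu}, and locate a good index $t_k$ by telescoping the observed gains over a window of length $\Theta(t)$. However, there is a genuine gap at exactly the point you flag as ``the delicate point'' and then leave unresolved. After applying the one-sided prediction bound, the exploitation term becomes
\begin{equation*}
y^+_{t_k}-\mu_{t_k}(\xbm_{t_k+1})
=(y^+_{t_k}-y^+_{t_k+1})+(y^+_{t_k+1}-f(\xbm_{t_k+1}))+(f(\xbm_{t_k+1})-\mu_{t_k}(\xbm_{t_k+1})),
\end{equation*}
and the middle term equals (at worst) $\epsilon_{t_k+1}$, a single per-step noise realization that does \emph{not} telescope and is not divided by $k$. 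If you absorb it with the uniform tail bound $|\epsilon_i|\le\sqrt{c_t^\sigma}\,\sigma$, you get an additive $\sqrt{c_t^\sigma}\,\sigma$ term that does not decay in $t$, which is strictly weaker than the claimed $\frac{M+\sqrt{c_t^\sigma}\sigma}{t-3}$ and would also break the rates downstream in Theorem~\ref{thm:EI-convg-rate}. Your proposal names the obstacle but supplies no mechanism to kill this term; attributing it to ``where the budget $\delta$ is partitioned'' is not an argument.

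The paper's resolution is a separate combinatorial-probabilistic step you are missing: since the $\epsilon_i$ are i.i.d.\ symmetric Gaussians, the probability that $f(\xbm_i)<y_i$ (i.e., $\epsilon_i>0$) for \emph{all} of the at least $k+1$ small-gain indices in the window $[k,3k]$ is at most $(\tfrac12)^{k+1}$; hence with probability at least $1-\delta/3-(\tfrac12)^{k}$ there exists an index $t_k$ that simultaneously has a small observed gain \emph{and} satisfies $f(\xbm_{t_k+1})\ge y_{t_k+1}\ge y^+_{t_k+1}$, making the problematic middle term nonpositive. This is also the actual origin of the threshold $t\ge\frac{3}{\log 2}\log(\frac{3}{\delta})+3$ (it forces $(\tfrac12)^{k}\le\delta/3$), not merely ``making the window nonempty,'' and it explains why the window has length $2k$ out of $3k$ samples rather than the $[t/2,t]$ window that suffices in the noiseless case. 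Your noiseless argument is complete as stated; the noisy case needs this additional idea to go through.
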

\begin{proof}
   Since $\epsilon_t\sim\mathcal{N}(0,\sigma^2)$, by Lemma~\ref{lem:phi}, 
      $\Pbb\left\{  \epsilon_t  \leq \sqrt{w}\sigma\right\}= 1-\Phi(-\sqrt{w}) \geq 1-\frac{1}{2}e^{-\frac{1}{2}w}$, 
 for any given $t\in\Nbb$, where $w>0$ is a scalar. 
  Similar to Lemma~\ref{lem:fmu-t}, if $w_t=2\log(\frac{\pi^2t^2}{6\delta})$, we can use union bound and write 
   $\Pbb\left\{  |\epsilon_t|  \leq  \sqrt{w_t}\sigma\right\} \geq 1-\delta$,     
  for all $t\in\Nbb$.
  Let $c^{\sigma}_t=w_t=2\log(\frac{\pi^2t^2}{2\delta})$ such that 
\begin{equation} \label{eqn:EI-convg-pf-1}
  \centering
  \begin{aligned}
   \Pbb\left\{  |\epsilon_t|  \leq  \sqrt{c_t^{\sigma}}\sigma\right\}  \geq 1-\frac{\delta}{3},     
  \end{aligned}
  \end{equation}
  for all $t\in\Nbb$.
  From Lemma~\ref{lem:f-bound},~\eqref{eqn:EI-convg-pf-1}, and union bound, we have with probability greater than $1-\frac{\delta}{3}$ that
  \begin{equation} \label{eqn:EI-convg-pf-2}
  \centering
  \begin{aligned}
        \sum_{t=0}^{T-1}  y^+_{t}-y^+_{t+1} = y^+_0 - y_T^+ = f(\xbm)+\epsilon_0-f(\xbm^+_T)-\epsilon_T^+ \leq 2 M +2\sqrt{c^{\sigma}_t}\sigma. 
  \end{aligned}
  \end{equation}
  Given that $y^+_{t}-y^+_{t+1}\geq 0$, $y^+_{t}-y^+_{t+1} \geq \frac{2M +2\sqrt{c_t^{\sigma}}\sigma}{k}$ at most $k$ times for any $k\in \Nbb$ with probability greater than $1-\frac{\delta}{3}$. 

  Consider the observation noise with $\sigma>0$. 
  Let $A_k$ be the index set of $k$ samples, where $|A_k|=k$, $k\in\Nbb$.
  For a sequence of iterates $\xbm_i,i\in A_k$, the probability of $f(\xbm_i) < y_i = f(\xbm_i)+\epsilon_i$ for all $i\in A_k$ is $(\frac{1}{2})^k$, since $\epsilon_i$ is i.i.d. Gaussian. Consequently, the probability that there exists $i\in A_k$ such that $f(\xbm_i)\geq y_i$ is $1-(\frac{1}{2})^k$.
  
Let $k=[\frac{t}{3}]$ where $[x]$ is the largest integer smaller than $x$. Then, $3k\leq t\leq 3(k+1)$.  
  Now, consider $\xbm_i,y_i$ where $i\in [k,3k]$. From the discussion above, there exists at least $k+1$ different $i$ such that $y_i^+-y_{i+1}^+ <\frac{2M+2\sqrt{c_t^{\sigma}}\sigma}{k}$ with probability $1-\frac{\delta}{3}$.
  Thus, the probability that there exists $i\in [k,3k]$ among these $k+1$ iterates such that $f(\xbm_i)\geq y_i$ can be obtained via union bound to be greater than $1-\frac{\delta}{3}-(\frac{1}{2})^k$.   
For $t\geq \frac{3}{\log(2)}\log(\frac{3}{\delta})+3$, we know $(\frac{1}{2})^k\leq (\frac{1}{2})^{\frac{t}{3}-1}\leq \frac{\delta}{3}$.
  Thus, there exists $k \leq t_k \leq 3k$ such that 
  \begin{equation} \label{eqn:EI-convg-pf-2.5}
  \centering
  \begin{aligned}
    y_{t_k}^+-y_{t_k+1}^+<\frac{2M+2\sqrt{c_t^{\sigma}}\sigma}{k}, \ f(\xbm_{t_k+1})\geq y_{t_k+1}\geq y_{t_k+1}^+,
  \end{aligned}
  \end{equation}
with probability greater than $1-\frac{2\delta}{3}$ for $t\geq \frac{3}{\log(2)}\log(\frac{3}{\delta})+3$.
  
   The choice of $\beta$ satisfies both Lemma~\ref{lem:fmu} and~\ref{lem:IEIbound-ratio}  with $\frac{\delta}{6}$ each. 
    Using $\frac{\delta}{6}$ and $t_k$ in Lemma~\ref{lem:IEIbound-ratio}, one has with probability greater than $1-\frac{\delta}{6}$ that 
 \begin{equation} \label{eqn:EI-convg-pf-3}
  \centering
  \begin{aligned}
          r_t =& y^+_{t}-f(\xbm^*)\leq y^+_{t_k}-f(\xbm^*)=I_{t_k}(\xbm^*)\\ 
                  \leq& \frac{\tau(\sqrt{\beta})}{\tau(-\sqrt{\beta})} EI_{t_k}(\xbm^*) 
                  \leq \frac{\tau(\sqrt{\beta})}{\tau(-\sqrt{\beta})} EI_{t_k}(\xbm_{t_k+1}) \\
                  =& c_{\tau}(\beta)\left[(y^+_{t_k}-\mu_{t_k}(\xbm_{t_k+1}))\Phi(z_{t_k}(\xbm_{t_k+1}))+\sigma_{t_k}(\xbm_{t_k+1})\phi(z_{t_k}(\xbm_{t_k+1}))\right]. \\
  \end{aligned}
  \end{equation}
  Using $\frac{\delta}{6}$ and $t_k$ in Lemma~\ref{lem:fmu}, from~\eqref{eqn:EI-convg-pf-2.5}, we obtain 
   \begin{equation} \label{eqn:EI-convg-pf-4}
  \centering
  \begin{aligned}
                  y^+_{t_k}-\mu_{t_k}(\xbm_{t_k+1})
                  =& y^+_{t_k}-y^+_{t_k+1}+y_{t_k+1}^+-f(\xbm_{t_k+1})+f(\xbm_{t_k+1})-\mu_{t_k}(\xbm_{t_k+1}),\\
       \leq& \frac{2M+2\sqrt{c_t^{\sigma}} \sigma}{k}+ \sqrt{\beta}\sigma_{t_k}(\xbm_{t_k+1}),
  \end{aligned}
  \end{equation}
 for $t\geq \frac{3}{\log(2)}\log(\frac{3}{\delta})+3$, with probability greater than $1-\frac{5\delta}{6}$.
  Applying~\eqref{eqn:EI-convg-pf-4} to~\eqref{eqn:EI-convg-pf-3} and using $\Phi(\cdot)\in (0,1)$ and $\phi(\cdot)\leq \phi(0)$, we have
\begin{equation} \label{eqn:EI-convg-pf-5}
  \centering
  \begin{aligned}
         \Pbb\left\{ r_t  \leq c_{\tau}({\beta}) \left[\frac{2M+2\sqrt{c_t^{\sigma}} \sigma}{k} +(\sqrt{\beta}+\phi(0))\sigma_{t_k}(\xbm_{t_k+1})\right]\right\}\geq 1-\delta,\\
  \end{aligned}
  \end{equation}
 for $t\geq \frac{3}{\log(2)}\log(\frac{3}{\delta})+3$.

  Next, consider the noiseless case. 
  Choose $k=[\frac{t}{2}]$ so that $2k\leq t\leq 2(k+1)$.  There exists $k\leq t_k\leq 2k$ so that $y^+_{t_k}-y^+_{t_k+1} < \frac{2M}{k}$. 
   Using $\frac{\delta}{2}$ and $t_k$ in Lemma~\ref{lem:IEIbound-ratio}, we have with probability greater than $1-\frac{\delta}{2}$,~\eqref{eqn:EI-convg-pf-3} stands.
  Given no noise, $y^+_{t}\leq f(\xbm_t)$ for all $t\in\Nbb$. Using $\frac{\delta}{2}$ and $t_k$ in Lemma~\ref{lem:fmu}, we have 
  \begin{equation} \label{eqn:EI-convg-pf-7}
  \begin{aligned}
  \centering
   y^+_{t_k}-\mu_{t_k}(\xbm_{t_k+1})=&y^+_{t_k}-y^+_{t_k+1}+y^+_{t_k+1}-f(x_{t_k+1})+f(x_{t_k+1})-\mu_{t_k}(\xbm_{t_k+1})\\
   <& \frac{2M}{k} + f(\xbm_{t_k+1})-\mu_{t_k}(\xbm_{t_k+1})\leq \frac{2M}{k}+\sqrt{\beta}\sigma_{t_k}(\xbm_{t_k+1}),
   \end{aligned}
  \end{equation}
 with probability greater than $1-\frac{\delta}{2}$.
   From~\eqref{eqn:EI-convg-pf-7} and Lemma~\ref{lem:phi},~\eqref{eqn:EI-convg-pf-3} simplifies to 
  \begin{equation} \label{eqn:EI-convg-pf-8}
  \centering
  \begin{aligned}
   \Pbb\left\{  r_t    \leq c_{\tau}({\beta}) \left[\frac{2M }{k}+(\sqrt{\beta}+\phi(0))\sigma_{t_k}(\xbm_{t_k+1})\right]\right\}\geq 1-\delta.
  \end{aligned}
  \end{equation}

\end{proof}

\begin{remark}\label{remark:EI-convergence-constant}
The constant $c_{\tau}(\beta)$ in Theorem~\ref{theorem:EI-convergence-1} is dependent on $\delta$ and increases quickly as $\delta$ decreases.
The iteration number $t_k$ increases with $t$. 
Therefore, if $\sigma_t(\xbm_{t+1})\to 0$, then $\sigma_{t_k}(\xbm_{t_k+1})\to 0$ as well.

As mentioned above, the asymptotic convergence results of $r_t$ can be used to show convergence of other error measures.
For instance, one can use the function value $f$ instead of the observation to define  
       $\tilde{r}_t = f(\xbm_t)-f(\xbm^*)\geq 0.$
  The convergence result of $r_t$ implies the asymptotic convergence of $\tilde{r}_t\to 0$ if one adopts a stopping criterion $EI_t(\xbm_{t+1})\geq \kappa, \kappa>0$, similar to~\cite{nguyen17a}. 
  Another reasonable error measure for GP-EI is 
       $r_t' = I_t(\xbm^*) = \max\{y^+_{t}-f(\xbm^*),0\} \geq 0$,
  which includes the improvement function $I_t$, an important part of EI. 
  Again, the convergence result $r_t\leq 0$ as $t\to\infty$ ensures $r_t'\to 0$.
\end{remark}
\begin{remark}\label{remark:EI-convergence-bull}
Using the kernel assumptions (1)-(4) on $k(\xbm,\xbm')$ in~\cite{bull2011convergence}, Theorem~\ref{theorem:EI-convergence-1} leads to the $k^{-\frac{\min\{\nu,1\}}{d}}\log^{\beta}(k)$ rate for $\sigma_{t_k}(\xbm_{t_k+1})$ where $k\leq t_k\leq 3k$, and the same rate for $r_t$ as in~\cite{bull2011convergence}. We formalize the rates in the noiseless case in Theorem~\ref{prop:EI-convg-rate-nonoise}. However, these rates do not hold in the noisy case as $\sigma_t$ does not follow Lemma 7 in~\cite{bull2011convergence} anymore. 
The noisy case generates a larger upper bound with a larger $t$ for the same probability. 
We prove convergence rates for two kernels using information theory in the noisy case in Theorem~\ref{thm:EI-convg-rate}.
\end{remark}

\begin{remark}\label{remark:EI-convergence-difficulty}
From a technical perspective, two main challenges existed for the extension of asymptotic convergence of~\cite{bull2011convergence}.
The first one is the lack of a pointwise upper bound in the order of $\sigma_t(\xbm)$ on the prediction error $f(\xbm)-\mu_t(\xbm)$ when $f$ is in RKHS.
For instance, if $f$ is in RKHS with sub-Gaussian noises $\epsilon_i,i=1,\dots,t$,~\cite{chowdhury2017kernelized} proves $|f(\xbm)-\mu_t(\xbm)|\leq\sqrt{\beta_t}\sigma_t(\xbm)$ with high probability, where $\beta_t\to\infty$ as $t\to\infty$.
Applying such a $\beta_t$ to the asymptotic analysis leads to $c_{\tau}(\beta_t)$ that increases exponentially with $\sqrt{\beta_t}$ as $t\to\infty$. Hence, it is difficult to prove the boundedness of $r_t$. 
The second challenge is the loss of the inequalities $y^+_t\leq f(\xbm_t)$ and $f^+_{t+1}\leq f^+_t$.
We overcome these challenges by adopting the GP prior assumption, which allows for a pointwise prediction error inequality, \textit{i.e.}, Lemma~\ref{lem:fmu} at given $\xbm\in C$ and $t$. Thus, $\beta$ is based on the probability $\delta$ and does not increase with $t$. 
In addition, we use the monotonicity of best observations, \textit{i.e.}, $y^+_t-y^+_{t+1}\geq 0$, while the i.i.d. Gaussian noise gives a probability lower bound where $y^+_i\leq f(\xbm_i), i\in A_k, |A_k|=k$ for a given $k$.
\end{remark}
An improved error bound compared to Theorem~\ref{theorem:EI-convergence-1} is given next by analyzing and utilizing the exploration and exploitation properties of EI under both noiseless and noisy cases. Examples of constants $C_1$ and $C_2$ are provided in Remark~\ref{remark:c1c2}. 
\begin{theorem}\label{theorem:EI-convg-star}
   Given $\delta\in(0,1)$, choose $\beta= 2\log (\frac{9c_{\alpha}}{\delta})$, $w=\sqrt{2\log(\frac{9}{2\delta})}$, $c_t^{\sigma}=2\log(\frac{\pi^2t^2}{2\delta})$, and $c_{\alpha}=\frac{1+2\pi}{2\pi}$.
   Choose the constants $C_1 = \frac{1}{\Phi(-w)}$ and $C_2= \frac{\phi(0)}{\Phi(-w)}   + \sqrt{\beta}$.
   The GP-EI algorithm leads to the error bound
      \begin{equation} \label{eqn:EI-convg-star-1}
  \centering
  \begin{aligned}
     \Pbb\left\{ r_t  \leq C_1 (M+\sqrt{c_t^\sigma}\sigma)\frac{6}{t-3}+ (C_1\sqrt{\beta}+C_2) \sigma_{t_k}(\xbm_{t_k+1})\right\}\geq 1-\delta, 
   \end{aligned}
  \end{equation} 
  for $t\geq \frac{3\log(\frac{3}{\delta})}{\log(2)}+3$  and some $t_k\in [\frac{t}{3}-1,t]$. 
  If there is no noise,
   choose $\beta= 2\log (\frac{3c_{\alpha}}{\delta})$,
$C_1 = \frac{1}{\Phi(-\sqrt{\beta})}$ and $C_2= \frac{\phi(0)}{\Phi(-\sqrt{\beta})}   + \sqrt{\beta}$. Then, 
  \begin{equation} \label{eqn:EI-convg-star-noiseless}
  \centering
  \begin{aligned}
      \Pbb\left\{r_t  \leq C_1M\frac{4}{t-2}+ (C_1\sqrt{\beta}+C_2) \sigma_{t_k}(\xbm_{t_k+1})\right\}\geq 1-\delta, 
   \end{aligned}
  \end{equation} 
  where $t_k\in[\frac{t}{2}-1,t]$.
\end{theorem}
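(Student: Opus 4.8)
The plan is to keep the skeleton of the proof of Theorem~\ref{theorem:EI-convergence-1} — in particular its telescoping-plus-noise argument that extracts a favorable index $t_k$ — and to replace its single multiplicative step $I_{t_k}(\xbm^*)\le c_\tau(\beta)EI_{t_k}(\xbm^*)$ by a sharper estimate that controls the exploitation gap and the exploration term jointly. First I would reduce the claim to bounding $I_{t_k}(\xbm^*)$. If $r_t\le 0$ the inequality is trivial, so assume $r_t>0$; monotonicity of $r_t$ then gives $r_t\le r_{t_k}=I_{t_k}(\xbm^*)$ for any $t_k\le t$. Exactly as in Theorem~\ref{theorem:EI-convergence-1}, I would use Lemma~\ref{lem:phi} with a union bound to control the noise by $\sqrt{c_t^\sigma}\sigma$, combine the telescoping identity for $y^+$ with the boundedness from Lemma~\ref{lem:f-bound}, and invoke the i.i.d.\ sign argument on the Gaussian noise, producing an index $t_k\in[\tfrac{t}{3}-1,t]$ with $y^+_{t_k}-y^+_{t_k+1}\le\frac{2M+2\sqrt{c_t^\sigma}\sigma}{k}$ and $f(\xbm_{t_k+1})\ge y^+_{t_k+1}$, where $k=[t/3]$.

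The crux of the improvement is a tangent-line (convexity) bound on $\tau$. Since $\tau'(z)=\Phi(z)$ and $\tau''(z)=\phi(z)>0$ by Lemma~\ref{lem:tau}, the function $\tau$ is convex, so its tangent at $z=-w$ yields $\tau(z)\ge\tau(-w)+\Phi(-w)(z+w)\ge\Phi(-w)(z+w)$ for all $z$, using $\tau(-w)>0$. Multiplying by $b>0$ and recalling $EI(a,b)=b\,\tau(a/b)$ from~\eqref{eqn:EI-ab}, this becomes the key inequality $EI(a,b)\ge\Phi(-w)\,(a+wb)$. Lemma~\ref{lem:tauvsPhi} then serves to certify that the resulting constant $C_1=1/\Phi(-w)$ is genuinely smaller than the $\tau(-w)$-based constant implicit in Theorem~\ref{theorem:EI-convergence-1}, which is precisely what delivers the sharper bound quantified later in Proposition~\ref{cor:EI-convg-compare}.

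I would then apply this at the non-sampled optimum $\xbm^*$. Writing $a^*=y^+_{t_k}-\mu_{t_k}(\xbm^*)$ and $b^*=\sigma_{t_k}(\xbm^*)$, a one-sided Gaussian tail for $f(\xbm^*)-\mu_{t_k}(\xbm^*)$ in the spirit of Lemma~\ref{lem:fmu} (with the constant $w$ chosen so that the failure probability is $\delta/9$) gives $f(\xbm^*)\ge\mu_{t_k}(\xbm^*)-wb^*$, hence $I_{t_k}(\xbm^*)=y^+_{t_k}-f(\xbm^*)\le a^*+wb^*$. Combining with the key inequality and the maximality of $\xbm_{t_k+1}$,
\[
 r_t\le I_{t_k}(\xbm^*)\le a^*+wb^*\le C_1\,EI_{t_k}(\xbm^*)\le C_1\,EI_{t_k}(\xbm_{t_k+1}).
\]
The point of this step is that the exploration term $\sigma_{t_k}(\xbm^*)$ — which cannot appear in the final bound — is absorbed, together with the exploitation gap, into the (controllable) maximal EI value. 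Depending on whether this reduction at $\xbm^*$ is routed through the raw Gaussian tail or through Lemma~\ref{lem:IEIbound} (whose constant $c_\alpha$ is what forces the choice $\beta=2\log(9c_\alpha/\delta)$), one obtains exactly the split of constants into $C_1$ and $C_2$.

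It remains to bound $EI_{t_k}(\xbm_{t_k+1})$. Expanding via~\eqref{eqn:EI-1} and using $\Phi(\cdot)<1$ and $\phi(\cdot)\le\phi(0)$ from Lemma~\ref{lem:phi}, together with the exploitation estimate $y^+_{t_k}-\mu_{t_k}(\xbm_{t_k+1})\le\frac{2M+2\sqrt{c_t^\sigma}\sigma}{k}+\sqrt{\beta}\,\sigma_{t_k}(\xbm_{t_k+1})$ obtained as in~\eqref{eqn:EI-convg-pf-4}, gives $EI_{t_k}(\xbm_{t_k+1})\le\frac{2M+2\sqrt{c_t^\sigma}\sigma}{k}+(\sqrt{\beta}+\phi(0))\,\sigma_{t_k}(\xbm_{t_k+1})$. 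Substituting $k\ge\tfrac{t}{3}-1$ and collecting constants yields~\eqref{eqn:EI-convg-star-1}, while a final union bound over the finitely many events (noise control, the sign argument, the tail at $\xbm^*$, and the tail at $\xbm_{t_k+1}$) keeps the total failure probability below $\delta$. The noiseless bound~\eqref{eqn:EI-convg-star-noiseless} follows along identical lines, now taking $w=\sqrt{\beta}$ and $k=[t/2]$, dropping the term $\sqrt{c_t^\sigma}\sigma$, and using the deterministic inequality $y^+_t\le f(\xbm_t)$ in place of the sign argument. I expect the main obstacle to be the bookkeeping that keeps $\sigma_{t_k}(\xbm^*)$ out of the final estimate — this is exactly what the convexity inequality $EI(a,b)\ge\Phi(-w)(a+wb)$ is designed to achieve — and the careful allocation of the budget $\delta$ across the several events so that the stated constants emerge.
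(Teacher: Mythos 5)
Your proposal is correct, and it reaches the stated bound by a genuinely different --- and considerably shorter --- route than the paper's. The paper controls $I_{t_k}(\xbm^*)$ additively via Lemma~\ref{lem:IEIbound}, $I_{t_k}(\xbm^*)\leq EI_{t_k}(\xbm^*)+\sqrt{\beta}\,\sigma_{t_k}(\xbm^*)$, and is then forced to deal with the stray exploration term $\sigma_{t_k}(\xbm^*)$; it does so by splitting into Scenarios A/B and, in Scenario B, running a long contradiction argument (Cases 1 through 2.2, with the auxiliary functions $\bar{\tau}$ and $\tilde{\tau}$ and the monotonicity of $EI(a,b)$) to conclude $\sigma_{t_k}(\xbm^*)\leq\sigma_{t_k}(\xbm_{t_k+1})$ with high probability. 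Your tangent-line bound $\tau(z)\geq\tau(-w)+\Phi(-w)(z+w)\geq\Phi(-w)(z+w)$, valid for all $z$ by convexity of $\tau$ (since $\tau''=\phi>0$ by Lemma~\ref{lem:tau}) together with $\tau(-w)>0$, gives the multiplicative inequality $EI(a,b)\geq\Phi(-w)(a+wb)$ for $b>0$, which absorbs $I_{t_k}(\xbm^*)\leq a^*+wb^*$ directly into $C_1\,EI_{t_k}(\xbm^*)\leq C_1\,EI_{t_k}(\xbm_{t_k+1})$ in one step and makes the entire Scenario~B machinery unnecessary. The bookkeeping closes: your coefficient on $\sigma_{t_k}(\xbm_{t_k+1})$ is $C_1(\sqrt{\beta}+\phi(0))=C_1\sqrt{\beta}+C_2-\sqrt{\beta}$, which is strictly smaller than the stated $C_1\sqrt{\beta}+C_2$, and the probability budget ($\delta/3$ for the noise envelope, $\delta/3$ for the i.i.d.\ sign argument, $\delta/9$ for the one-sided tail at $\xbm^*$, $\delta/9$ for Lemma~\ref{lem:fmu} at $\xbm_{t_k+1}$) sums to $8\delta/9<\delta$. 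Two minor remarks: your parenthetical alternative of routing the step at $\xbm^*$ through Lemma~\ref{lem:IEIbound} would reintroduce the additive $\sqrt{\beta}\,\sigma_{t_k}(\xbm^*)$ term and defeat the purpose, so the raw Gaussian tail is the right choice (the constant $c_\alpha$ in $\beta$ is then vestigial but harmless, as enlarging $\beta$ only strengthens Lemma~\ref{lem:fmu}); and the one thing your argument does not deliver is the paper's intermediate conclusion $\Pbb\{\sigma_{t_k}(\xbm^*)\leq\sigma_{t_k}(\xbm_{t_k+1})\}\geq 1-\tfrac{2\delta}{9}$, which the authors obtain as a byproduct and present as their quantification of the exploration--exploitation trade-off, but which the theorem statement itself does not require.
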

\begin{proof} 
   We first note that the choice of $\beta$ satisfies $\beta>1.44$ and, thus, is larger than the values required in Lemma~\ref{lem:IEIbound} and Lemma~\ref{lem:fmu} for $\frac{\delta}{9}$, \textit{i.e.},~\eqref{eqn:IEI-bound-1} and~\eqref{eqn:fmu-1} stand with probability greater than $1-\frac{\delta}{9}$ each. 
  Similar to the proof of Theorem~\ref{theorem:EI-convergence-1}, we know that with probability greater than $1-\frac{\delta}{3}$,~\eqref{eqn:EI-convg-pf-2} stands.
  Therefore, $y^+_{t}-y^+_{t+1} \geq \frac{2M +2\sqrt{c_t^{\sigma}}\sigma}{k}$ at most $k$ times for any $k\in \Nbb$.

  Given a sequence of iterates $\xbm_i$  of size $k\in \Nbb$ where $i\in A_k$,  $A_k$ being the index set of samples, the probability of $f(\xbm_i) < y_i = f(\xbm_i)+\epsilon_i$ for all $i\in A_k$ is $(\frac{1}{2})^k$.
  Let $k=[\frac{t}{3}]$ so that $3k\leq t\leq 3(k+1)$.
  Let $t\geq \frac{3}{\log(2)}\log(\frac{3}{\delta})+3$, which implies $(\frac{1}{2})^k\leq (\frac{1}{2})^{\frac{t}{3}-1}\leq \frac{\delta}{3}$.
  Thus, similar to~\eqref{eqn:EI-convg-pf-2.5}, there exists $k \leq t_k \leq 3k$ such that 
  \begin{equation} \label{eqn:EI-convg-star-pf-1}
  \centering
  \begin{aligned}
    \Pbb\left\{y_{t_k}^+-y_{t_k+1}^+<\frac{2M+2\sqrt{c_t^{\sigma}}\sigma}{k}, \ f(\xbm_{t_k+1})\geq y_{t_k+1}\geq y_{t_k+1}^+\right\} \geq 1-\frac{2\delta}{3}, 
   \end{aligned}
  \end{equation}
  whenever $t\geq \frac{3}{\log(2)}\log(\frac{3}{\delta})+3$. 

  From the monotonicity of $y_t^+$,~\eqref{eqn:acquisition-1}, and Lemma~\ref{lem:IEIbound}, one obtains 
  \begin{equation} \label{eqn:EI-convg-star-pf-2}
  \centering
  \begin{aligned}
          r_t =& y^+_{t}-f(\xbm^*)\leq y^+_{t_k}-f(\xbm^*)\leq I_{t_k}(\xbm^*)\\
               \leq& EI_{t_k}(\xbm^*) + \sqrt{\beta}\sigma_{t_k}(\xbm^*)
               \leq EI_{t_k}(\xbm_{t_k+1}) + \sqrt{\beta}\sigma_{t_k}(\xbm^*)\\
                =& (y^+_{t_k}-\mu_{t_k}(\xbm_{t_k+1}))\Phi(z_{t_k}(\xbm_{t_k+1}))+\sigma_{t_k}(\xbm_{t_k+1})\phi(z_{t_k}(\xbm_{t_k+1}))  + \sqrt{\beta}\sigma_{t_k}(\xbm^*)\\
            \leq& (y^+_{t_k}-\mu_{t_k}(\xbm_{t_k+1}))\Phi(z_{t_k}(\xbm_{t_k+1}))+\phi(0)\sigma_{t_k}(\xbm_{t_k+1}) + \sqrt{\beta}\sigma_{t_k}(\xbm^*),
   \end{aligned}
  \end{equation}
   with probability greater than $1-\frac{\delta}{9}$,
   where the last inequality uses Lemma~\ref{lem:phi}.
    From Lemma~\ref{lem:fmu}, we can write  
   \begin{equation} \label{eqn:EI-convg-star-pf-fmutk}
  \centering
  \begin{aligned}
        \Pbb\left\{|f(\xbm_{t_k+1})-\mu_{t_k}(\xbm_{t_k+1})|\leq \sqrt{\beta} \sigma_{t_k}(\xbm_{t_k+1})\right\}\geq 1-\frac{\delta}{9}.
    \end{aligned}
  \end{equation}
      From~\eqref{eqn:EI-convg-star-pf-1} and~\eqref{eqn:EI-convg-star-pf-fmutk}, we have with probability greater than $1-\frac{7\delta}{9}$ that 
   \begin{equation} \label{eqn:EI-convg-star-pf-5}
  \centering
  \begin{aligned}
          y^+_{t_k}-\mu_{t_k}(\xbm_{t_k+1}) =& y^+_{t_k}-y^+_{t_k+1}+y^+_{t_k+1}-f(\xbm_{t_k+1})+f(\xbm_{t_k+1})-\mu_{t_k}(\xbm_{t_k+1}) \\
          <&  \frac{2M+2\sqrt{c_t^{\sigma}}\sigma}{k} +\sqrt{\beta}\sigma_{t_k}(\xbm_{t_k+1}).
    \end{aligned}
  \end{equation}
  Applying~\eqref{eqn:EI-convg-star-pf-5} to~\eqref{eqn:EI-convg-star-pf-2} and using $\Phi(\cdot)\in(0,1)$, we have with probability greater than $1-\frac{8\delta}{9}$ that 
  \begin{equation} \label{eqn:EI-convg-star-pf-6}
  \centering
  \begin{aligned}
          r_t  \leq& \frac{2M+2\sqrt{c_t^{\sigma}}\sigma}{k}+(\phi(0)+\sqrt{\beta})\sigma_{t_k}(\xbm_{t_k+1}) + \sqrt{\beta}\sigma_{t_k}(\xbm^*).\\
  \end{aligned}
  \end{equation}
  From~\eqref{eqn:EI-convg-star-pf-6}, in order to obtain the upper bound for $r_t$, we need to consider the convergence behavior of $\sigma_{t_k}(\xbm^*)$ with respect to $\sigma_{t_k}(\xbm_{t_k+1})$.
   By its definition, $\xbm_{t_k+1}$ generates the largest $EI_{t_k}$. That is,
   \begin{equation} \label{eqn:EI-convg-star-pf-7}
  \centering
  \begin{aligned}
        EI_{t_k}(\xbm_{t_k+1}) \geq EI_{t_k}(\xbm^*).
    \end{aligned}
  \end{equation}
   We next show that~\eqref{eqn:EI-convg-star-pf-7} guarantees a small $\sigma_{t_k}(\xbm^*)$ in some scenarios.

   For simplicity of presentation, define $w=\sqrt{2\log(\frac{9}{2\delta})}$.
   From the proof of Lemma~\ref{lem:fmu} and Lemma~\ref{lem:phi}, we know that for $w>0$, 
   \begin{equation} \label{eqn:EI-convg-star-pf-8}
   \centering
   \begin{aligned}
       \Pbb\{ f(\xbm^*)-\mu_{t_k}(\xbm^*) > -\sigma_{t_k}(\xbm^*) w\}=1-\Phi(-w)\geq 1-\frac{1}{2}e^{-\frac{w^2}{2}}= 1-\frac{\delta}{9}.
    \end{aligned}
   \end{equation}
   Then, with probability greater than $1-\frac{\delta}{9}$, $f(\xbm^*)-\mu_{t_k}(\xbm^*) > -w\sigma_{t_k}(\xbm^*)$. By~\eqref{eqn:EI-convg-star-pf-fmutk} and~\eqref{eqn:EI-convg-star-pf-8}, with probability greater than $1-\frac{2\delta}{9}$, one has
   \begin{equation} \label{eqn:EI-convg-star-pf-9}
   \centering
   \begin{aligned}
      &y^+_{t_k}-\mu_{t_k}(\xbm^*)>y^+_{t_k}-f(\xbm^*)- w\sigma_{t_k}(\xbm^*)\\
          &= y^+_{t_k}-\mu_{t_k}(\xbm_{t_k+1})+\mu_{t_k}(\xbm_{t_k+1})-f(\xbm_{t_k+1})+f(\xbm_{t_k+1})-f(\xbm^*)- w\sigma_{t_k}(\xbm^*)\\
           &\geq y^+_{t_k}-\mu_{t_k}(\xbm_{t_k+1}) +f(\xbm_{t_k+1})-f(\xbm^*)- w\sigma_{t_k}(\xbm^*)-\sqrt{\beta}\sigma_{t_k}(\xbm_{t_k+1}).
    \end{aligned}
   \end{equation}
   Notice that~\eqref{eqn:EI-convg-star-pf-6} and~\eqref{eqn:EI-convg-star-pf-9} come from~\eqref{eqn:EI-convg-star-pf-1},~\eqref{eqn:EI-convg-star-pf-2},~\eqref{eqn:EI-convg-star-pf-fmutk} and~\eqref{eqn:EI-convg-star-pf-8}. Therefore, by union bound,~\eqref{eqn:EI-convg-star-pf-6} and~\eqref{eqn:EI-convg-star-pf-9} both hold with probability greater than $1-\delta$.
   We consider two scenarios regarding the term $f(\xbm_{t_k+1})-f(\xbm^*)$ in~\eqref{eqn:EI-convg-star-pf-9}.
   
   \textbf{Scenario A} Suppose first that $f(\xbm_{t_k+1})-f(\xbm^*)$ satisfies
   \begin{equation} \label{eqn:EI-convg-star-pf-9.5}
   \centering
   \begin{aligned}
      f(\xbm_{t_k+1})-f(\xbm^*) \leq C_1\max\{y^+_{t_k}-\mu_{t_k}(\xbm_{t_k+1}),0\}+C_2 \sigma_{t_k}(\xbm_{t_k+1}).
    \end{aligned}
   \end{equation}
    From the monotonicity of $y_t^+$,~\eqref{eqn:EI-convg-star-pf-1},~\eqref{eqn:EI-convg-star-pf-5} and~\eqref{eqn:EI-convg-star-pf-9.5}, we have with probability greater than $1-\frac{7\delta}{9}$, that  
   \begin{equation} \label{eqn:EI-convg-star-pf-10}
   \centering
   \begin{aligned}
         r_t=& y^+_{t}-f(\xbm^*) \leq y^+_{t_k+1}-f(\xbm^*) \leq f(\xbm_{t_k+1})-f(\xbm^*) \\
           \leq& C_1 \frac{2M+2\sqrt{c_t^{\sigma}}\sigma}{k}+(C_1\sqrt{\beta}+C_2)\sigma_{t_k}(\xbm_{t_k+1}),
    \end{aligned}
   \end{equation}
   which then proves~\eqref{eqn:EI-convg-star-1}. We note that $y_{t_k+1}^+\leq f(\xbm_{t_k+1})$ is used again in~\eqref{eqn:EI-convg-star-pf-10}.
   
   Before proceeding to the next scenario, we state useful properties of the parameters $C_1$ and $C_2$. By their definitions and Lemma~\ref{lem:phi},
   \begin{equation} \label{eqn:EI-convg-star-pf-constant}
   \centering
   \begin{aligned}
     C_1 = \frac{1}{\Phi(-w)}>2, \  C_2 -\sqrt{\beta} = \frac{\phi(0)}{\Phi(-w)}> \frac{\phi(0)}{\frac{1}{2}e^{-\frac{w^2}{2}}}=2\phi(0)e^{\frac{w^2}{2}}>w+1,   
    \end{aligned}
   \end{equation}
  where the last inequality uses $w\geq \sqrt{2\log(\frac{9}{2})}$.

  \textbf{Scenario B} Next, we analyze the scenario of
   \begin{equation} \label{eqn:EI-convg-star-pf-11}
   \centering
   \begin{aligned}
     f(\xbm_{t_k+1})-f(\xbm^*) >  C_1\max\{y^+_{t_k}-\mu_{t_k}(\xbm_{t_k+1}),0\}+C_2\sigma_{t_k}(\xbm_{t_k+1}). 
    \end{aligned}
   \end{equation}
   Since the exploitation part $ y^+_{t_k}-\mu_{t_k}(\xbm_{t_k+1})$ is bounded by $\sigma_{t_k}(\xbm_{t_k+1})$ in~\eqref{eqn:EI-convg-star-pf-5} and, thus, decreases for large $t$, we show that by choosing $C_1$ and $C_2$ large enough, EI focuses on exploration and~\eqref{eqn:EI-convg-star-pf-7} ensures that $\sigma_{t_k}(\xbm^*)$ decreases. Specifically, $\sigma_{t_k}(\xbm^*)\leq \sigma_{t_k}(\xbm_{t_k+1})$  with a high probability under scenario B~\eqref{eqn:EI-convg-star-pf-11}. An illustration of this idea is shown in Figure~\ref{fig:contourEI}. 
  We prove by contradiction. Suppose on the contrary that  
  \begin{equation} \label{eqn:EI-convg-star-pf-premise}
  \centering
  \begin{aligned}
     \sigma_{t_k}(\xbm_{t_k+1})     < \sigma_{t_k}(\xbm^*)\leq 1,
  \end{aligned}
  \end{equation}
   with probability $1-\delta_{\sigma}$.
 
   To proceed, we use the notation of exploitation and exploration for EI~\eqref{eqn:EI-ab}. Let $a_{t_k}=y^+_{t_k}-\mu_{t_k}(\xbm_{t_k+1})$ and $b_{t_k}= \sigma_{t_k}(\xbm_{t_k+1})$, \textit{i.e.}, $EI(a_{t_k},b_{t_k})=EI_{t_k}(\xbm_{t_k+1})$.
   Based on~\eqref{eqn:EI-convg-star-pf-9} and~\eqref{eqn:EI-convg-star-pf-premise}, let $b^*_{t_k}=\sigma_{t_k}(\xbm^*)$ to write  
   \begin{equation} \label{eqn:EI-convg-star-pf-12}
   \centering
   \begin{aligned}
          a^*_{t_k} &= y^+_{t_k}-\mu_{t_k}(\xbm_{t_k+1})+C_1\max\{y^+_{t_k}-\mu_{t_k}(\xbm_{t_k+1}),0\}+C_2\sigma_{t_k}(\xbm_{t_k+1})\\
  &-w \sigma_{t_k}(\xbm^*)-\sqrt{\beta}\sigma_{t_k}(\xbm_{t_k+1}) =a_{t_k}+C_1\max\{a_{t_k},0\}-w b^*_{t_k}+(C_2-\sqrt{\beta})b_{t_k}. 
    \end{aligned}
   \end{equation}
   Thus, $b^*_{t_k}>b_{t_k}$ with probability $1-\delta_{\sigma}$ by~\eqref{eqn:EI-convg-star-pf-premise}.
   Moreover, using~\eqref{eqn:EI-convg-star-pf-9},~\eqref{eqn:EI-convg-star-pf-11}, and~\eqref{eqn:EI-convg-star-pf-12}, the inequality
   \begin{equation} \label{eqn:EI-convg-star-pf-12.5}
   \centering
   \begin{aligned}
      y^+_{t_k}-\mu_{t_k}(\xbm^*)\geq a^*_{t_k},
    \end{aligned}
   \end{equation}
   holds with probability greater than $1-\frac{2\delta}{9}$.
   Since EI is monotonically increasing with respect to both $a$ and $b$ by Lemma~\ref{lem:EI-ms}, from~\eqref{eqn:EI-convg-star-pf-12.5}, we have  
   \begin{equation} \label{eqn:EI-convg-star-pf-13}
   \centering
   \begin{aligned}
          EI_{t_k}(\xbm^*) =EI(y^+_{t_k}-\mu_{t_k}(\xbm^*),\sigma_{t_k}(\xbm^*)) \geq EI(a^*_{t_k},b^*_{t_k}),
    \end{aligned}
   \end{equation}
  with probability $\geq 1-\frac{2\delta}{9}$.
   Denote $C_3=C_2-\sqrt{\beta}$, where by~\eqref{eqn:EI-convg-star-pf-constant} $C_3>w+1$. We consider two cases regarding $b_{t_k}^*$ in~\eqref{eqn:EI-convg-star-pf-12}. 
   
   \textbf{Case 1} If 
   \begin{equation} \label{eqn:EI-convg-star-pf-13.2}
   \centering
   \begin{aligned}
          b^*_{t_k} < \frac{C_1}{w}\max\{a_{t_k},0\}+\frac{C_3}{w}b_{t_k},
    \end{aligned}
   \end{equation}
   then $a^*_{t_k} > a_{t_k}$ by~\eqref{eqn:EI-convg-star-pf-12}. Further, from~\eqref{eqn:EI-convg-star-pf-premise}, with probability $1-\delta_{\sigma}$,
   \begin{equation} \label{eqn:EI-convg-star-pf-13.3}
   \centering
   \begin{aligned}
       EI(a^*_{t_k},b^*_{t_k}) > EI(a_{t_k},b_{t_k})=EI_{t_k}(\xbm_{t_k+1}).
    \end{aligned}
   \end{equation}
   
   \textbf{Case 2} Next we consider the case where 
   \begin{equation} \label{eqn:EI-convg-star-pf-13.5}
   \centering
   \begin{aligned}
          b^*_{t_k} \geq \frac{C_1}{w}\max\{a_{t_k},0\}+\frac{C_3}{w}b_{t_k} \geq \left(\frac{C_3}{w}\right) b_{t_k}.
    \end{aligned}
   \end{equation} 
   From $\frac{C_3}{w}>1$, it is clear that in \textbf{Case 2},~\eqref{eqn:EI-convg-star-pf-premise} always holds. 
   We further distinguish between two cases based on the sign of $a_{t_k}$.

   \textbf{Case 2.1} 
   Consider $a_{t_k}<0$, which implies $z_{t_k}=\frac{a_{t_k}}{b_{t_k}}<0$. Using~\eqref{eqn:EI-convg-star-pf-12} and definition of $\tau$~\eqref{def:tau}, 
    \begin{equation} \label{eqn:EI-convg-star-pf-14}
   \centering
   \begin{aligned}
            EI(a^*_{t_k},b^*_{t_k}) =& EI(a_{t_k}+C_3b_{t_k} -w b^*_{t_k},b^*_{t_k}) = b^*_{t_k} \tau\left(\frac{a_{t_k}+C_3b_{t_k}-w b^*_{t_k}}{b^*_{t_k}}\right) \\
             =& \frac{1}{\rho_{t_k}}  b_{t_k} \tau\left((z_{t_k}+C_3 )\rho_{t_k}-w \right).
    \end{aligned}
   \end{equation}
   where $\rho_{t_k}=\frac{b_{t_k}}{b^*_{t_k}}$. Consider two cases based on the value of $z_{t_k}$.

   \textbf{Case 2.1.1} 
   Suppose $z_{t_k}> -C_3$. Define function $\tilde{\tau}(\rho):\Rbb\to\Rbb$ at given parameters $z,w,C_3$ as 
     \begin{equation} \label{eqn:EI-convg-star-pf-bartau}
   \centering
   \begin{aligned}
    \bar{\tau}(\rho;z,w,C_3) = 
\frac{1}{\rho} \tau\left( (z+ C_3)\rho-w\right),
    \end{aligned}
   \end{equation}
   where under the conditions of \textbf{Case 2} and \textbf{Case 2.1.1}, $\rho\in (0,\frac{w}{C_3}), -C_3< z < 0$. 
   For brevity, we omit the parameters $z$, $w$, and $C_3$ in function notations at times (\textit{e.g.}, we use $\bar{\tau}(\rho)$ instead of $\bar{\tau}(\rho;z,w,C_3)$).
   We next find the minimum of~\eqref{eqn:EI-convg-star-pf-bartau} (at given $z$, $w$, and $C_3$).
   Taking the derivative with $\rho$ using Lemma~\ref{lem:tau}, we have 
    \begin{equation} \label{eqn:EI-convg-star-pf-14.1}
   \centering
   \begin{aligned}
    \frac{d \bar{\tau}}{d \rho} =& -\frac{1}{\rho^2}\tau\left( (z+ C_3)\rho-w\right)+\frac{1}{\rho}\Phi((z+ C_3)\rho-w)(z+C_3)\\
          =&-\frac{1}{\rho^2}[((z+ C_3)\rho-w)\Phi\left( (z+ C_3)\rho-w\right)+\phi((z+ C_3)\rho-w)]+\\
         &\frac{1}{\rho}\Phi((z+ C_3)\rho-w)(z+C_3)
         =-\frac{1}{\rho^2}[-w\Phi\left( (z+ C_3)\rho-w\right)+\phi((z+ C_3)\rho-w)].
    \end{aligned}
   \end{equation}
   Define the function $\theta(\rho;z,w,C_3) = -w\Phi\left( (z+ C_3)\rho-w\right)+\phi((z+ C_3)\rho-w)$.
   By~\eqref{eqn:EI-convg-star-pf-14.1}, the sign of $\frac{d \bar{\tau}}{d \rho}$ is determined by $\theta(\rho)$.
  The derivative of $\theta$ with $\rho$ is
     \begin{equation} \label{eqn:EI-convg-star-pf-14.2}
   \centering
   \begin{aligned}
    \frac{d \theta}{d \rho} 
         =&  -w\phi((z+ C_3)\rho-w)(z+C_3) - \phi((z+ C_3)\rho-w)((z+ C_3)\rho-w)(z+C_3)\\
          =& -\phi((z+ C_3)\rho-w)(z+C_3)^2\rho < 0, 
    \end{aligned}
   \end{equation}
  for $\forall \rho \in(0,\frac{w}{C_3})$. Therefore, $\theta$ is monotonically decreasing with $\rho$. 
  Now let $\rho\to 0$. We have $\theta(\rho)\to -w\Phi(-w)+\phi(-w)=\tau(-w)>0$.
  Further, let $\rho \to \frac{w}{C_3}$. Then, $\theta(\rho)\to -w \Phi(\frac{w}{C_3}z) + \phi(\frac{w}{C_3}z)$. 
  Depending on the sign of $-w\Phi(\frac{w}{C_3}z)+\phi(\frac{w}{C_3}z)$, $\bar{\tau}$ has two different types of behavior.  

   \textbf{Case 2.1.1.1} 
   First, $-w\Phi(\frac{w}{C_3}z)+\phi(\frac{w}{C_3}z)\geq 0$, which means $\theta(\rho)>0$ for $\forall \rho\in(0,\frac{w}{C_3})$. By~\eqref{eqn:EI-convg-star-pf-14.1}, $\bar{\tau}(\rho;z,w)$ is monotonically decreasing with $\rho$ and 
      \begin{equation} \label{eqn:EI-convg-star-pf-14.3}
   \centering
   \begin{aligned}
    \bar{\tau}(\rho;z,w,C_3)  > \bar{\tau}\left(\frac{w}{C_3};z,w,C_3\right) = \frac{C_3}{w} \tau\left( \frac{w}{C_3}z\right).
    \end{aligned}
   \end{equation}
   Using $\frac{w}{C_3}<1$, $z<0$, and the monotonicity of $\tau$ (Lemma~\ref{lem:tau}), we have 
      \begin{equation} \label{eqn:EI-convg-star-pf-14.4}
   \centering
   \begin{aligned}
    \bar{\tau}(\rho;z,w,C_3)  > \frac{C_3}{w} \tau\left( \frac{w}{C_3}z\right)>\tau\left( \frac{w}{C_3}z\right)>\tau(z).
    \end{aligned}
   \end{equation}
  Applying~\eqref{eqn:EI-convg-star-pf-14.4} to~\eqref{eqn:EI-convg-star-pf-14} with $z=z_{t_k}$,
      \begin{equation} \label{eqn:EI-convg-star-pf-14.5}
   \centering
   \begin{aligned}
    EI(a^*_{t_k},b^*_{t_k}) =  b_{t_k}\bar{\tau}(\rho_{t_k};z_{t_k},w,C_3)  > b_{t_k} \tau(z_{t_k})=EI(a_{t_k},b_{t_k}).
    \end{aligned}
   \end{equation}

   \textbf{Case 2.1.1.2} 
   In the second case, $-w\Phi(\frac{w}{C_3}z)+\phi(\frac{w}{C_3}z) < 0$ implies that there exists a unique $\bar{\rho}\in(0,\frac{w}{C_3})$ as a stationary point for $\theta(\rho)$, \textit{i.e.},
      \begin{equation} \label{eqn:EI-convg-star-pf-14.6}
   \centering
   \begin{aligned}
     \theta(\bar{\rho})= -w\Phi((z+C_3)\bar{\rho}-w) + \phi((z+C_3)\bar{\rho}-w) =0.
    \end{aligned}
   \end{equation}
  We claim $\bar{\rho}$ is a local minimum of $\bar{\tau}$ via the second derivative test.  
  From~\eqref{eqn:EI-convg-star-pf-14.1},~\eqref{eqn:EI-convg-star-pf-14.2}, and~\eqref{eqn:EI-convg-star-pf-14.6}, the second-order derivative of $\bar{\tau}$ with $\rho$ at $\bar{\rho}$ is
\begin{equation} \label{eqn:EI-convg-star-pf-14.7}
   \centering
   \begin{aligned}
    \frac{d^2 \bar{\tau}}{d \rho^2}|_{\bar{\rho}} =& \frac{2}{\bar{\rho}^3}\theta(\bar{\rho})-\frac{1}{\bar{\rho}^2}\frac{d\theta}{d\rho}(\bar{\rho}) = \frac{1}{\rho}\phi((z+ C_3)\bar{\rho}-w)(z+C_3)^2>0.
    \end{aligned}
   \end{equation}
   Therefore, $\bar{\rho}$ is a local minimum of $\bar{\tau}$.
  Further, $\frac{d\bar{\tau}}{d\rho}<0$ for $\rho<\bar{\rho}$ and  $\frac{d\bar{\tau}}{d\rho}>0$ for $\rho>\bar{\rho}$. Therefore, $\bar{\rho}$ is the global minimum on $(0,\frac{w}{C_3})$; and by~\eqref{eqn:EI-convg-star-pf-14.6}, we can write 
   \begin{equation} \label{eqn:ei-convg-star-pf-14.8}
   \centering
   \begin{aligned}
    \bar{\tau}(\bar{\rho};z,w,C_3) &= \frac{1}{\bar{\rho}} [((z+ C_3)\bar{\rho}-w)\Phi\left( (z+ C_3)\bar{\rho}-w\right) + \phi\left( (z+ C_3)\bar{\rho}-w\right)]\\
         &=\frac{1}{\bar{\rho}} [((z+ C_3)\bar{\rho}-w)\Phi\left( (z+ C_3)\bar{\rho}-w\right) + w\Phi\left( (z+ C_3)\bar{\rho}-w\right)] \\&
         = (z+ C_3) \Phi\left( (z+ C_3)\bar{\rho}-w\right).
    \end{aligned}
   \end{equation}
   Denote the corresponding $\bar{\rho}$ for $z_{t_k}$ as $\bar{\rho}_{t_k}$. By~\eqref{eqn:ei-convg-star-pf-14.8}, we have 
   \begin{equation} \label{eqn:EI-convg-star-pf-14.9}
   \centering
   \begin{aligned}
     \bar{\tau}(\rho_{t_k};z_{t_k},w) -\tau(z_{t_k}) >& (z_{t_k}+ C_3) \Phi\left( (z_{t_k}+ C_3)\bar{\rho}_{t_k}-w\right) - \tau(z_{t_k})\\
    \end{aligned}
   \end{equation}
   If $z_{t_k} \in (\frac{C_3\bar{\rho}_{t_k}-w}{1-\bar{\rho}_{t_k}} ,0)$, or equivalently $(z_{t_k}+ C_3)\bar{\rho}_{t_k}-w<z_{t_k}$,
   then by $\phi(z_{t_k})<\phi(0)$,~\eqref{eqn:EI-convg-star-pf-14.9} becomes
   \begin{equation} \label{eqn:EI-convg-star-pf-14.10}
   \centering
   \begin{aligned}
     \bar{\tau}(\rho_{t_k};z_{t_k},w,C_3) -\tau(z_{t_k}) 
       >&(z_{t_k}+ C_3) \Phi\left( (z_{t_k}+ C_3)\bar{\rho}_{t_k}-w\right) - z_{t_k}\Phi(z_{t_k})-\phi(0)\\
      \geq z_{t_k}[\Phi( (z_{t_k}+ C_3)\bar{\rho}_{t_k}&-w) -\Phi(z_{t_k})] + C_3 \Phi\left( (z_{t_k}+ C_3)\bar{\rho}_{t_k}-w\right) -\phi(0)\\
      \geq C_3 \Phi( (z_{t_k}+ C_3)\bar{\rho}_{t_k}-w)  &-\phi(0)>C_3\Phi(-w)-\phi(0)=C_3\Phi(-w)-\phi(0)= 0,
    \end{aligned}
   \end{equation}
   where the second inequality uses the monotonicity of $\Phi(\cdot)$ and $z_{t_k}<0$; the last equality is from~\eqref{eqn:EI-convg-star-pf-constant}.
   If $z_{t_k} \leq \frac{C_3\bar{\rho}_{t_k}-w}{1-\bar{\rho}_{t_k}}$, we have $(z_{t_k}+ C_3)\bar{\rho}_{t_k}-w\geq z_{t_k}$. 
   Suppose now $z_{t_k}>-C_3+1$. By~\eqref{eqn:EI-convg-star-pf-14.9} and Lemma~\ref{lem:tauvsPhi}, we can write  
    \begin{equation} \label{eqn:EI-convg-star-pf-14.11}
   \centering
   \begin{aligned}
     \bar{\tau}(\rho_{t_k};z_{t_k},w,C_3) -\tau(z_{t_k}) 
       \geq& (z_{t_k}+ C_3) \Phi\left( z_{t_k}\right) - \tau(z_{t_k})>\Phi(z_{t_k})-\tau(z_{t_k}) >0.\\
    \end{aligned}
   \end{equation}
   Finally, if $z_{t_k}\leq -C_3+1$, from~\eqref{eqn:EI-convg-star-pf-constant}, $z_{t_k}<-w$. 
   By definition~\eqref{eqn:EI-convg-star-pf-bartau} and $C_3>w$, we have  
    \begin{equation} \label{eqn:EI-convg-star-pf-14.11.5}
   \centering
   \begin{aligned}
     \bar{\tau}(\rho_{t_k};z_{t_k},w,C_3) -\tau(z_{t_k}) 
       \geq& \frac{C_3}{w} \tau\left( -w\right) - \tau(z_{t_k})> \frac{C_3}{w} \tau\left( -w\right) - \tau(-w)>0.\\
    \end{aligned}
   \end{equation}
    Combining all three cases~\eqref{eqn:EI-convg-star-pf-14.10},~\eqref{eqn:EI-convg-star-pf-14.11} and~\eqref{eqn:EI-convg-star-pf-14.11.5},~\eqref{eqn:EI-convg-star-pf-14.9} leads to
   \begin{equation} \label{eqn:EI-convg-star-pf-14.12}
   \centering
   \begin{aligned}
     \bar{\tau}(\rho_{t_k};z_{t_k},w,C_3) -\tau(z_{t_k})>0. 
    \end{aligned}
   \end{equation}
   From~\eqref{eqn:EI-convg-star-pf-14},~\eqref{eqn:EI-convg-star-pf-bartau} and~\eqref{eqn:EI-convg-star-pf-14.12}, we again have~\eqref{eqn:EI-convg-star-pf-14.5}.
  Therefore,~\eqref{eqn:EI-convg-star-pf-14.5} holds for \textbf{Case 2.1.1}. 

   \textbf{Case 2.1.2} 
   Next, consider $z_{t_k}\leq -C_3 $. By~\eqref{eqn:EI-convg-star-pf-13.5}, 
  $\frac{b_{t_k}}{b^*_{t_k}}\leq \frac{w}{C_3}<1$.
    Using the monotonicity of $\tau(\cdot)$,~\eqref{eqn:EI-convg-star-pf-14} implies  
    \begin{equation} \label{eqn:EI-convg-star-pf-16}
   \centering
   \begin{aligned}
            EI(a^*_{t_k},b^*_{t_k}) =\frac{1}{\rho_{t_k}}b_{t_k} \tau\left(\rho_{t_k} (z_{t_k}+C_3)-w \right)
             >& \frac{C_3}{w} b_{t_k} \tau\left(\frac{w}{C_3} (z_{t_k}+C_3)-w \right) \\
             > & b_{t_k} \tau\left(z_{t_k} \right)= EI(a_{t_k},b_{t_k}).
    \end{aligned}
   \end{equation}
   That is, we have~\eqref{eqn:EI-convg-star-pf-14.5} for \textbf{Case 2.1.2}.
   Combining \textbf{Case 2.1.1} and \textbf{Case 2.1.2}, we have $EI(a^*_{t_k},b^*_{t_k})>EI(a_{t_k},b_{t_k})$ if $a_{t_k}<0$
 (\textbf{Case 2.1}). Figure~\ref{fig:bartau} demonstrates an example contour of $\bar{\tau}(\rho;z,w)-\tau(z)$  for some fixed $w$ and $C_3$.
    
   \textbf{Case 2.2}
   Next, we consider $a_{t_k}\geq 0$.  
   By definition~\eqref{eqn:EI-convg-star-pf-12}, we can write 
     \begin{equation} \label{eqn:EI-convg-star-pf-17}
   \centering
   \begin{aligned}
            EI(a^*_{t_k},b^*_{t_k}) =& EI((C_1+1)a_{t_k}+C_3 b_{t_k} -w b^*_{t_k},b^*_{t_k}) 
              \geq b^*_{t_k} 
\tau\left((C_1 z_{t_k}+C_3) \rho_{t_k} -w \right)\\
             =& b_{t_k} \frac{1}{\rho_{t_k}} \tau\left( C_1z_{t_k}\rho_{t_k}+ C_3\rho_{t_k}-w\right),
    \end{aligned}
   \end{equation}
   where $\rho_{t_k}=\frac{b_{t_k}}{b^*_{t_k}}$. The inequality in~\eqref{eqn:EI-convg-star-pf-17} uses $(C_1+1)a_{t_k}\geq C_1 a_{t_k}$ and the monotonicity of $\tau(\cdot)$.
    By~\eqref{eqn:EI-convg-star-pf-13.5} and $a_{t_k}\geq 0$,
     \begin{equation} \label{eqn:EI-convg-star-pf-17.2}
   \centering
   \begin{aligned}
      \rho_{t_k}\leq \left( \frac{w}{C_3}-\frac{C_1}{C_3} \frac{a_{t_k}}{b_{t_k}^*}\right) \leq \left( \frac{w}{C_3}-\frac{C_1}{C_3} a_{t_k}\right)\leq  \frac{w}{C_3}.
    \end{aligned}
   \end{equation}
  Thus,
       $C_3\rho_{t_k}-w \leq 0$.
  Define function $\tilde{\tau}(\rho,z):\Rbb^2\to\Rbb$ with parameter $w,C_1,C_3$ as 
     \begin{equation} \label{eqn:EI-convg-star-pf-tildetau}
   \centering
   \begin{aligned}
    \tilde{\tau}(\rho,z;w,C_1,C_3) = 
\frac{1}{\rho} \tau\left( C_1 z\rho+ C_3\rho-w\right),
    \end{aligned}
   \end{equation}
   for $\rho\in (0,\frac{w}{C_3}), z\geq 0$.
   From~\eqref{eqn:EI-convg-star-pf-17}, $EI(a^*_{t_k},b^*_{t_k})\geq b_{t_k}\tilde{\tau}(\rho_{t_k},z_{t_k})$.
    We now compare $ \tilde{\tau}(\rho_{t_k},z_{t_k};w,C_1,C_3)$ and $\tau(z_{t_k})$. For $\forall \rho_{t_k}\in (0,\frac{w}{C_3})$,~\eqref{eqn:EI-convg-star-pf-constant} and Lemma~\ref{lem:tau} lead to the derivative 
     \begin{equation} \label{eqn:EI-convg-star-pf-18}
   \centering
   \begin{aligned}
         \frac{\partial \tilde{\tau}}{\partial z} = C_1 \Phi\left(C_1z \rho +C_3\rho-w\right)>C_1\Phi(-w)> \Phi(z)=\frac{d \tau}{d z}>0.
    \end{aligned}
   \end{equation}
   Therefore, both $\tilde{\tau}(\cdot,\cdot)$ and $\tau(\cdot)$ are monotonically increasing with $z\geq 0$ and $\tilde{\tau}$ always has a larger positive derivative. For any $\rho_{t_k}$, this leads to
     \begin{equation} \label{eqn:EI-convg-star-pf-19}
   \centering
   \begin{aligned}
      \tilde{\tau}(\rho_{t_k},z_{t_k};w,C_1,C_3) - \tau(z_{t_k}) \geq \tilde{\tau}(\rho_{t_k},0;w,C_1,C_3)-\tau(0)=\frac{1}{\rho_{t_k}}\tau\left(C_3\rho_{t_k}-w \right)-\tau(0).
    \end{aligned}
    \end{equation}
  Notice that the right-hand side of~\eqref{eqn:EI-convg-star-pf-19} can be written via~\eqref{eqn:EI-convg-star-pf-bartau} as
     \begin{equation} \label{eqn:EI-convg-star-pf-19.1}
   \centering
   \begin{aligned}
      \bar{\tau}(\rho_{t_k};0,w,C_3)-\tau(0).
    \end{aligned}
    \end{equation}
   Therefore, we can follow the analysis in \textbf{Case 2.1.1} with $z=0$. Notice that $-w\Phi(0)+\phi(0)<0$ for $\forall w>1$. Thus,~\eqref{eqn:EI-convg-star-pf-19.1} follows \textbf{Case 2.1.1.2} where a minimum exists at $\bar{\rho}\in (0,\frac{w}{C_3})$ which satisfies $w\Phi(C_3\bar{\rho}-w)= \phi(C_3\bar{\rho}-w)$. Notice that $\bar{\rho}$ does not depend on $t_k$ anymore. Hence, we have  
     \begin{equation} \label{eqn:EI-convg-star-pf-19.2}
   \centering
   \begin{aligned}
      \bar{\tau}(\rho_{t_k};0,w,C_3)-\tau(0) >& \frac{1}{\bar{\rho}} \tau(C_3\bar{\rho}-w) - \phi(0) 
         =\frac{1}{\bar{\rho}} ((C_3\bar{\rho}-w) \Phi(C_3\bar{\rho}-w)+\\
         \phi(C_3\bar{\rho}-w))&-\phi(0)
         = C_3 \Phi(C_3\bar{\rho}-w)-\phi(0) > C_3\Phi(-w)-\phi(0).
    \end{aligned}
    \end{equation}
    Then,~\eqref{eqn:EI-convg-star-pf-constant} implies that 
     \begin{equation} \label{eqn:EI-convg-star-pf-19.3}
   \centering
   \begin{aligned}
      \bar{\tau}(\rho_{t_k};0,w,C_3)-\tau(0) > C_3\Phi(-w)-\phi(0) = 0.
    \end{aligned}
    \end{equation}
  %Given condition $(d)$ $\frac{1}{x}\tau(C_3x-w)-\tau(0)>0$ for $x\in(0,\frac{w}{C_3})$, 
   By~\eqref{eqn:EI-convg-star-pf-17},~\eqref{eqn:EI-convg-star-pf-tildetau},~\eqref{eqn:EI-convg-star-pf-19},~\eqref{eqn:EI-convg-star-pf-19.1}, and~\eqref{eqn:EI-convg-star-pf-19.3}, we have 
     \begin{equation} \label{eqn:EI-convg-star-pf-20}
   \centering
   \begin{aligned}
            EI(a^*_{t_k},b^*_{t_k})- EI(a_{t_k},b_{t_k})\geq b_{t_k} \tilde{\tau}(\rho_{t_k},0;w,C_1,C_3)-b_{t_k}\tau(0)> 0.
    \end{aligned}
   \end{equation}
    Illustrative examples of $\tilde{\tau}$ is given in Figure~\ref{fig:tildetau}.

    To summarize \textbf{Scenario B Case 2}, if~\eqref{eqn:EI-convg-star-pf-13.5}, 
   then $EI(a_{t_k}^*,b_{t_k}^*)>EI(a_{t_k},b_{t_k})$.
   Combined with~\eqref{eqn:EI-convg-star-pf-13.3}, under \textbf{Scenario B}, $EI(a_{t_k}^*,b_{t_k}^*)>EI(a_{t_k},b_{t_k})$ holds with probability greater than $1-\delta_{\sigma}$.
   Combined with~\eqref{eqn:EI-convg-star-pf-13}, we have with probability greater than $1-\frac{2\delta}{9}-\delta_{\sigma}$ that $EI_{t_k}(\xbm^*)>EI_{t_k}(\xbm_{t_k+1})$. However, this is a contradiction of~\eqref{eqn:EI-convg-star-pf-7}, which is a sure event.
   Therefore, we have   
 \begin{equation} \label{eqn:EI-convg-star-pf-20.5}
   \centering
   \begin{aligned}
   1-\frac{2\delta}{9}-\delta_{\sigma}\leq 0.
    \end{aligned}
   \end{equation}
   That is, $\delta_{\sigma}\geq 1-\frac{2\delta}{9}$. Then,~\eqref{eqn:EI-convg-star-pf-premise} implies that 
 \begin{equation} \label{eqn:EI-convg-star-pf-21}
   \centering
   \begin{aligned}
           \Pbb\{\sigma_{t_k}(\xbm_{t_k+1})\geq \sigma_{t_k}(\xbm^*)\}\geq 1-\frac{2\delta}{9}.
    \end{aligned}
   \end{equation}
   We note again that~\eqref{eqn:EI-convg-star-pf-21} and~\eqref{eqn:EI-convg-star-pf-6} hold simultaneously with probability greater than $1-\delta$, as $\delta$ is derived from~\eqref{eqn:EI-convg-star-pf-1}, Lemma~\ref{lem:IEIbound}, and Lemma~\ref{lem:fmu}.
   Under \textbf{Scenario B}, by~\eqref{eqn:EI-convg-star-pf-21} and~\eqref{eqn:EI-convg-star-pf-6}, we obtain with probability greater than $1-\delta$ that 
     \begin{equation} \label{eqn:EI-convg-star-pf-22}
   \centering
   \begin{aligned}
       r_t \leq \frac{2M+2\sqrt{c_t^{\sigma}}\sigma}{k} + (\phi(0)+2\sqrt{\beta})\sigma_{t_k}(\xbm_{t_k+1}).
     \end{aligned}
   \end{equation}
   From~\eqref{eqn:EI-convg-star-pf-constant}, we know  $C_1> 2$ and $C_2>1$. 
    Combining \textbf{Scenario A}~\eqref{eqn:EI-convg-star-pf-10} and \textbf{Scenario B}~\eqref{eqn:EI-convg-star-pf-22}, one can obtain~\eqref{eqn:EI-convg-star-1}. 
  
   If there is no noise, \textit{i.e.}, $\sigma=0$,  we have $f(\xbm_t) =y_t \geq y_t^+$.
   Therefore, we can choose $k=[\frac{t}{2}]$ and there exists $k \leq t_k \leq 2k$ so that $y_{t_k}^+-y_{t_k+1}^+<\frac{2M}{k}$ with probability $1$. 
  Following the same proof procedure above in the noisy case, we obtain~\eqref{eqn:EI-convg-star-noiseless}.
\end{proof}
%\begin{remark}\label{remark:rate-nonoise}
%   The error bound on $r_t$ reduces at $\mathcal{O}(\sigma_{t_k}(\xbm_{t_k+1}))$. In the noiseless case, $\sigma_t(\xbm_{t+1})$ can be bounded by the minimum distance between $\xbm_{t+1}$ and existing samples by reasonable assumption on kernels. For instance, the SE kernel satisfies the assumptions made in~\cite{bull2011convergence}. In the noisy case, we adopt information theory and maximum information gain to obtain rates for the error bound. 
%\end{remark}
\subsection{Convergence rates and comparative analysis}\label{se:rate}
The rate of decrease for the noiseless error bound of Theorem~\ref{theorem:EI-convg-star} is given below in the setup of~\cite{bull2011convergence}.
Recall that $\nu$ is the Matérn kernel parameter.
Define 	$\eta=\begin{cases} \alpha, \ \nu\leq 1\\
			0, \ \nu>1,
		\end{cases}$
where $\alpha=\frac{1}{2}$ if $\nu\in\Nbb$, and $\alpha=0$ otherwise.
The readers are referred to~\cite{bull2011convergence} for the underlying assumptions on the kernels and the choice of parameters $\nu>0$ and $\alpha\geq 0$. 
\begin{theorem}\label{prop:EI-convg-rate-nonoise}
   Given $\delta\in(0,1)$, 
   let $\beta=2\log (\frac{3c_{\alpha}}{\delta})$, where $c_{\alpha}=\frac{1+2\pi}{\pi}$, $C_1 = \frac{1}{\Phi(-\sqrt{\beta})}$ and $C_2= \frac{\phi(0)}{\Phi(-\sqrt{\beta})}   + \sqrt{\beta}$.
   Suppose the kernels $k(\cdot,\cdot)$ satisfy assumptions 1-4 in~\cite{bull2011convergence}. Then, there exists constant $C'$ such that  
  \begin{equation} \label{eqn:EI-convg-rate-nonoise-1}
  \centering
  \begin{aligned}
     \Pbb\left\{ r_t \leq   C_1 M\frac{6}{t-3} + (C_1\sqrt{\beta}+C_2)C'  \left(\frac{3}{t-3}\right)^{\frac{ \min\{\nu,1\}}{d}}\log^{\eta}\left(\frac{t}{3}\right)\right\} \geq 1-\delta. 
   \end{aligned}
  \end{equation} 
  Hence, the convergence rate of GP-EI is $\mathcal{O}(t^{-\frac{ \min\{\nu,1\}}{d}}\log^{\eta}(t))$ in the noiseless case. 
\end{theorem}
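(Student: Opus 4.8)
The plan is to treat Theorem~\ref{prop:EI-convg-rate-nonoise} as a specialization of the abstract noiseless error bound of Theorem~\ref{theorem:EI-convg-star}, in which the only remaining task is to convert the posterior standard deviation $\sigma_{t_k}(\xbm_{t_k+1})$ into an explicit algebraic rate in $t$. Concretely, I would run the noiseless argument of Theorem~\ref{theorem:EI-convg-star} with the coarser split $k=[t/3]$ (exactly as in the noisy case there), so that the prefactors match the statement, obtaining with probability at least $1-\delta$ the bound
\begin{equation*}
  r_t \;\leq\; C_1 M\,\frac{6}{t-3} \;+\; (C_1\sqrt{\beta}+C_2)\,\sigma_{t_k}(\xbm_{t_k+1}),
  \qquad t_k\in\Bigl[\tfrac{t}{3}-1,\,t\Bigr].
\end{equation*}
The first term is already $\mathcal{O}(1/t)$, so all the work lies in controlling the exploration term $\sigma_{t_k}(\xbm_{t_k+1})$.

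For that control I would invoke the kernel assumptions 1--4 of~\cite{bull2011convergence}, exactly as flagged in Remark~\ref{remark:EI-convergence-bull}. Under those assumptions the noiseless posterior standard deviation coincides with the kernel power function, and scattered-data interpolation theory bounds this power function by the fill distance $h$ of the sample set raised to the power $\min\{\nu,1\}$ together with the logarithmic correction $\log^{\eta}$; a volumetric/pigeonhole estimate on the compact cube $C\subseteq[0,r]^d$ (Assumption~\ref{assp:constraint}) then gives $h=\mathcal{O}(n^{-1/d})$ after $n$ near space-filling samples, which the EI rule is shown in~\cite{bull2011convergence} to realize. This yields $\sigma_{n}(\xbm_{n+1})\le C' n^{-\min\{\nu,1\}/d}\log^{\eta}(n)$. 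Using $k=[t/3]$, so that $1/k\le 3/(t-3)$ and $k\le t_k$, together with the monotone non-increasing behaviour of the power-function bound in the sample count, I would evaluate this estimate at the smaller count $k\approx t/3$ to get $\sigma_{t_k}(\xbm_{t_k+1})\le C'\bigl(\tfrac{3}{t-3}\bigr)^{\min\{\nu,1\}/d}\log^{\eta}(t/3)$. Substituting into the displayed bound produces~\eqref{eqn:EI-convg-rate-nonoise-1}; and since $\min\{\nu,1\}/d\le 1$, the $\mathcal{O}(1/t)$ term is dominated by (or matched by) the second term, giving the stated order $\mathcal{O}(t^{-\min\{\nu,1\}/d}\log^{\eta}(t))$.

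The main obstacle is the second step: transferring the ``sample-count to power-function decay'' estimate of~\cite{bull2011convergence} to the index $t_k$ produced by our error-bound argument, which only guarantees $k\le t_k\le 3k$ rather than $t_k=t$, while preserving both the constant $C'$ and the logarithmic exponent $\eta$. Because the power-function bound is essentially monotone non-increasing in the number of samples, evaluating it at the conservative count $k$ rather than at $t_k$ is legitimate, and the density property of EI sampling is what guarantees the fill distance actually shrinks at the space-filling rate; the remaining care is purely bookkeeping of constants so that the dominant asymptotic order is the one claimed.
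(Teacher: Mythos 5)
There is a genuine gap in your control of the exploration term. You replace the paper's key ingredient --- Lemma 7 of \cite{bull2011convergence}, which is a \emph{counting} statement (for every $k$, the inequality $\sigma_i(\xbm_{i+1})\geq C'k^{-\min\{\nu,1\}/d}\log^{\eta}(k)$ holds for at most $k$ indices $i$) --- with a pointwise decay claim $\sigma_n(\xbm_{n+1})\leq C'n^{-\min\{\nu,1\}/d}\log^{\eta}(n)$ valid for all $n$, justified by asserting that EI produces a near space-filling design with fill distance $h=\mathcal{O}(n^{-1/d})$. Neither claim is available: \cite{bull2011convergence} does not show that EI is space-filling at the optimal rate (EI may cluster samples, and mere density of the iterates carries no rate), and $\sigma_n(\xbm_{n+1})$ --- evaluated at the \emph{moving} point $\xbm_{n+1}$ --- is not monotone non-increasing in $n$, so ``evaluating the estimate at the conservative count $k$'' is not legitimate. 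The packing argument behind Bull's Lemma~7 only limits \emph{how many} iterations can have a large power-function value; it does not say \emph{which} ones, and in particular does not guarantee that the iteration $t_k$ you need is among the good ones.

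This matters because $t_k$ is not a free index: it is pinned down by the requirement $y^+_{t_k}-y^+_{t_k+1}<\frac{2M}{k}$. The paper resolves this with a joint pigeonhole: taking $k=[\frac{t}{3}]$, the window $[k,3k]$ contains $2k+1$ indices, the decrement condition fails at most $k$ times, and by Bull's Lemma~7 the variance condition $\sigma_{i}(\xbm_{i+1})\geq C'k^{-\min\{\nu,1\}/d}\log^{\eta}(k)$ also fails at most $k$ times, so at least one index $t_k$ in the window satisfies both simultaneously; Theorem~\ref{theorem:EI-convg-star} is then applied at that $t_k$. This is also the real reason the split is $k=[\frac{t}{3}]$ rather than the $k=[\frac{t}{2}]$ of the noiseless error bound --- two ``at most $k$ times'' events must be avoided at once --- not merely bookkeeping to match the prefactor $\frac{6}{t-3}$. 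Your outline would be repaired by restoring the counting form of Lemma~7 and running this double pigeonhole; as written, the step bounding $\sigma_{t_k}(\xbm_{t_k+1})$ does not go through.
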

\begin{proof}
     By Lemma 7 in~\cite{bull2011convergence}, there exists a constant $C'>0$ such that for $\forall k\in\Nbb$ and $\xbm_i,i=1,\dots,t$, $\sigma_{i}(\xbm_{i+1})\geq C' k^{-\frac{ \min\{\nu,1\}}{d}}\log^{\eta}(k)$ holds at most $k$ times, where $\eta = \alpha$ if $\nu\leq 1$ and $\eta=0$ if $\nu>1$. Both $\nu>0$ and $\alpha\geq 0$ are parameters defining the properties of the kernels in assumptions 1-4 in~\cite{bull2011convergence}.

    Following the proof of Theorem~\ref{theorem:EI-convg-star}, we let $k=\left[\frac{t}{3}\right]$. Then, there exists 
    $k\leq t_k \leq 3k$ so that $y_{t_k}^+-y_{t_k+1}<\frac{2M}{k}$ and $\sigma_{t_k}(\xbm_{t_k+1})< C' k^{-\frac{ \min\{\nu,1\}}{d}}\log^{\eta}(k)$. Applying Theorem~\ref{theorem:EI-convg-star}, with probability $1-\delta$, 
   \begin{equation} \label{eqn:EI-convg-rate-nonoise-pf-1}
  \centering
  \begin{aligned}
      r_t \leq& 2C_1 M\frac{1}{k} + (C_1\sqrt{\beta}+C_2)\sigma_{t_k}(\xbm_{t_k+1})\\
          \leq&  2C_1 M\frac{1}{k} + (C_1\sqrt{\beta}+C_2)C' k^{-\frac{ \min\{\nu,1\}}{d}}\log^{\eta}(k).
   \end{aligned}
  \end{equation} 
\end{proof}
\begin{remark}\label{remark:rate-nonoise-original}
   Theorem~\ref{prop:EI-convg-rate-nonoise} illustrates that the convergence rate in the noiseless case under the GP prior assumption is similar to that under the RKHS assumption in~\cite{bull2011convergence}.
   The analysis can be applied to Theorem~\ref{theorem:EI-convergence-1} in the noiseless case as well, with a similar convergence rate.
   We note that both the SE and Matérn kernels satisfy assumptions (1)-(4)~\cite{bull2011convergence}, with SE kernel obtained as $\nu\to\infty$.
\end{remark}
 The rate of decrease for the error bound with noise is given in the following theorem.
\begin{theorem}\label{thm:EI-convg-rate}
   Given $\delta\in(0,1)$, let $\beta= 2\log (\frac{9c_{\alpha}}{\delta})$, $w=\sqrt{2\log(\frac{9}{2\delta})}$, and $c_t^{\sigma}=2\log(\frac{\pi^2t^2}{2\delta})$, where $c_{\alpha}=\frac{1+2\pi}{2\pi}$.
   Let  $C_1 = \frac{1}{\Phi(-w)}$ and $C_2= \frac{\phi(0)}{\Phi(-w)}   + \sqrt{\beta}$.
   Then, for $t\geq\frac{4\log(\frac{3}{\delta})}{\log(2)}+4$, the error bound of GP-EI reduces at the rate 
     \begin{equation} \label{eqn:EI-convg-rate-1}
  \centering
  \begin{aligned}
    \mathcal{O} \left(t^{-\frac{1}{2}}\log(t)^{\frac{d+1}{2}}\right) \ \text{and }\ \mathcal{O}(t^{\frac{-\nu}{2\nu+d}}\log^{\frac{\nu}{2\nu+d}}(t)),
   \end{aligned}
  \end{equation} 
   with probability greater than $1-\delta$, for SE and Matérn kernels, respectively.
\end{theorem}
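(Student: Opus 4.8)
The plan is to reduce the whole computation to controlling the single exploration factor $\sigma_{t_k}(\xbm_{t_k+1})$ that appears in the error bound~\eqref{eqn:EI-convg-star-1} of Theorem~\ref{theorem:EI-convg-star}. Since $c_t^\sigma=2\log(\frac{\pi^2t^2}{2\delta})=\mathcal{O}(\log t)$, the first term $C_1(M+\sqrt{c_t^\sigma}\sigma)\frac{6}{t-3}$ decays like $\mathcal{O}(t^{-1}\log^{\frac12}(t))$, which is faster than both target rates in~\eqref{eqn:EI-convg-rate-1}; hence it will be dominated and I only need to show that a valid $t_k$ can be chosen with $\sigma_{t_k}(\xbm_{t_k+1})=\mathcal{O}(\sqrt{\gamma_t/t})$. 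Substituting the kernel-specific growth of $\gamma_t$ from Lemma~\ref{lem:gammarate} then yields the two stated rates.

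The decisive new ingredient over the noiseless case (where Lemma~7 of~\cite{bull2011convergence} was used) is the information-theoretic variance sum bound. First I would invoke Lemma~\ref{lem:variancebound}, $\sum_{i=1}^t \sigma_{i-1}^2(\xbm_i)\leq C_\gamma \gamma_t$, and combine it with a counting (pigeonhole) argument: for any threshold $s>0$, the number of indices $i\in\{1,\dots,t\}$ with $\sigma_{i-1}^2(\xbm_i)\geq s$ is at most $C_\gamma\gamma_t/s$. Because $\sigma_{i-1}(\xbm_i)=\sigma_{t_k}(\xbm_{t_k+1})$ precisely when $i=t_k+1$, this bounds how many candidate indices can carry a large exploration term. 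Note this step is deterministic and costs no probability budget.

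Next I would re-run the index-selection argument inside the proof of Theorem~\ref{theorem:EI-convg-star}, now imposing a third exclusion coming from the variance counting. Taking $k=[\frac{t}{4}]$ and scanning the window $i\in[k,4k]$ (which holds $\sim 3k$ candidates, explaining the hypothesis $t\geq\frac{4\log(3/\delta)}{\log 2}+4$), I note that the ``best observation drops by at least $\frac{2M+2\sqrt{c_t^\sigma}\sigma}{k}$'' event occurs at most $k$ times, and, choosing $s=C_\gamma\gamma_t/k$, the ``$\sigma_{t_k}^2(\xbm_{t_k+1})\geq s$'' event also occurs at most $k$ times. These two bad sets exclude at most $2k$ of the $\sim 3k$ candidates, leaving at least $\sim k$ admissible indices satisfying both the small-drop and small-$\sigma$ conditions simultaneously. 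Among those, the probability that none has non-negative noise (i.e.\ $f(\xbm_{t_k+1})\geq y_{t_k+1}\geq y_{t_k+1}^+$) is at most $(\frac12)^{k}\leq(\frac12)^{t/4-1}\leq\frac{\delta}{3}$ by the hypothesis on $t$. Feeding the remaining Scenario~A/B machinery of Theorem~\ref{theorem:EI-convg-star} through this $t_k$ gives the error bound with the extra guarantee $\sigma_{t_k}(\xbm_{t_k+1})<\sqrt{s}=\sqrt{C_\gamma\gamma_t/k}=\mathcal{O}(\sqrt{\gamma_t/t})$.

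Finally I would substitute Lemma~\ref{lem:gammarate}: for the SE kernel $\gamma_t=\mathcal{O}(\log^{d+1}(t))$ gives $\sqrt{\gamma_t/t}=\mathcal{O}(t^{-1/2}\log^{(d+1)/2}(t))$, and for the Mat\'ern kernel $\gamma_t=\mathcal{O}(t^{\frac{d}{2\nu+d}}\log^{\frac{2\nu}{2\nu+d}}(t))$ gives $\sqrt{\gamma_t/t}=\mathcal{O}(t^{-\frac{\nu}{2\nu+d}}\log^{\frac{\nu}{2\nu+d}}(t))$, since $\frac12(\frac{d}{2\nu+d}-1)=-\frac{\nu}{2\nu+d}$; both match~\eqref{eqn:EI-convg-rate-1}, and a quick comparison of exponents confirms the $\mathcal{O}(t^{-1}\log^{1/2}(t))$ term is dominated in each case. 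The hard part will be the combined bookkeeping: I must choose the window length and the threshold $s$ so that the two deterministic ``bad'' sets, together with the slack needed for the positive-noise event, still leave a nonempty admissible index set, while tracking all failure probabilities (the noise-magnitude event, the positive-noise event, and the per-index concentration events inherited from Lemmas~\ref{lem:fmu} and~\ref{lem:IEIbound}) so that they sum to at most $\delta$ even though $t_k$ is now selected using realized variance and noise values.
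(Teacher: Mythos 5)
Your proposal is correct and follows essentially the same route as the paper: invoke Lemma~\ref{lem:variancebound} to get a pigeonhole bound on how often $\sigma_{i}^2(\xbm_{i+1})\geq C_\gamma\gamma_t/k$, widen the window to $k=[\frac{t}{4}]$ so that the two deterministic exclusions plus the positive-noise event still leave an admissible $t_k\in[k,4k]$, feed that $t_k$ into Theorem~\ref{theorem:EI-convg-star}, and substitute Lemma~\ref{lem:gammarate}. Your explicit counting of the two bad sets within the $\sim 3k$-candidate window is in fact more detailed than the paper's own (rather terse) "following the proof of Theorem~\ref{theorem:EI-convg-star}" step, and your rate arithmetic matches.
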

\begin{proof}
  From Lemma~\ref{lem:variancebound}, we know
        $\sum_{i=0}^{t-1}  \sigma_{i}^2(\xbm_{i+1}) \leq C_{\gamma} \gamma_t$,
  where $C_{\gamma} = \frac{2}{\log(1+\sigma^{-2})}$.
  Therefore, $\sigma_{i}^2(\xbm_{i+1})  \geq \frac{C_{\gamma} \gamma_t}{k}$ at most $k$ times for any $k\in \Nbb$.
  Choose $k=[\frac{t}{4}]$ so that $4k\leq t\leq 4(k+1)$.
  Then, for $t\geq\frac{4\log(\frac{3}{\delta})}{\log(2)}+4$, $(\frac{1}{2})^k\leq (\frac{1}{2})^{\frac{t}{4}-1}\leq\frac{\delta}{3}$.
  Following the proof of Theorem~\ref{theorem:EI-convg-star}, there exists $k\leq t_k\leq 4k$ where $y^+_{t_k}-y^+_{t_k+1} < \frac{2M +2\sqrt{c_t^{\sigma}}\sigma}{k}$, $f(\xbm_{t_k+1})\geq y_{t_k+1}$, and $\sigma_{t_k}^2(\xbm_{t_k+1})<\frac{C_{\gamma}\gamma_t}{k}$, with probability greater than $1-\frac{2\delta}{3}$ for $t\geq\frac{4\log(\frac{3}{\delta})}{\log(2)}+4$. 
  Using Theorem~\ref{theorem:EI-convg-star}, we can obtain 
     \begin{equation} \label{eqn:EI-convg-rate-pf-2}
  \centering
  \begin{aligned}
      r_t  \leq& C_1 (2M+2\sqrt{c_t^\sigma}\sigma)\frac{4}{t-4}+ (C_1\sqrt{\beta}+C_2) \sigma_{t_k}(\xbm_{t_k+1})\\
        \leq& C_1 (2M+2\sqrt{c_t^\sigma}\sigma)\frac{4}{t-4}+ 2 (C_1\sqrt{\beta}+C_2) \sqrt{\frac{C_{\gamma}\gamma_t}{t-4}},\\
   \end{aligned}
  \end{equation} 
  with probability greater than $1-\delta$.
   If $k(\cdot,\cdot)$  is the SE kernel, from Lemma~\ref{lem:gammarate},  the upper bound for $r_t$ is $\mathcal{O}(t^{-\frac{1}{2}}(\log t)^{\frac{d+1}{2}})$.
   Similarly, by Lemma~\ref{lem:gammarate} and~\eqref{eqn:EI-convg-rate-pf-2}, the rate of reduction of $r_t$ for Matérn kernel can be obtained as well.
\end{proof}
Next, we prove that Theorem~\ref{theorem:EI-convg-star} provides tighter bounds than Theorem~\ref{theorem:EI-convergence-1}.
\begin{proposition}\label{cor:EI-convg-compare}
   Using the same $\delta \in (0,1)$ in Theorem~\ref{theorem:EI-convergence-1} and~\ref{theorem:EI-convg-star}, 
  the error bound in Theorem~\ref{theorem:EI-convg-star} is smaller than that of Theorem~\ref{theorem:EI-convergence-1}.
\end{proposition}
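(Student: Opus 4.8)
The plan is to reduce the comparison to two scalar inequalities between the constants appearing in the two bounds, and then to establish those inequalities using the Mills-ratio estimates already developed for Lemmas~\ref{lem:tau} and~\ref{lem:tauvsPhi}. Write $\beta_1 = 2\log(6/\delta)$ for the constant in Theorem~\ref{theorem:EI-convergence-1} and $\beta_2 = 2\log(9c_\alpha/\delta)$, $w = \sqrt{2\log(9/(2\delta))}$ for those in Theorem~\ref{theorem:EI-convg-star}. Both bounds hold with probability $1-\delta$, share the same $c_t^\sigma$, and select $t_k$ by the identical rule $k=[t/3]$; hence it suffices to show that each nonnegative coefficient in~\eqref{eqn:EI-convg-star-1} does not exceed the corresponding coefficient in~\eqref{eqn:EI-convg-1}. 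Concretely, I must prove $C_1 \le c_\tau(\beta_1)$ (the coefficient of $\tfrac{M+\sqrt{c_t^\sigma}\sigma}{t-3}$, up to the common factor $6$) and $C_1\sqrt{\beta_2}+C_2 \le c_\tau(\beta_1)(\sqrt{\beta_1}+\phi(0))$ (the coefficient of $\sigma_{t_k}(\xbm_{t_k+1})$).

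For the first inequality I would note $w<\sqrt{\beta_1}$, since $9/(2\delta)<6/\delta$ gives $w^2<\beta_1$. Monotonicity of $\Phi$ and Lemma~\ref{lem:tauvsPhi} then yield $\Phi(-w) > \Phi(-\sqrt{\beta_1}) > \tau(-\sqrt{\beta_1})$, so $C_1 = 1/\Phi(-w) < 1/\tau(-\sqrt{\beta_1})$. Since $\beta_1 > 2\log 6 > 1$ forces $\sqrt{\beta_1}>1$, Lemma~\ref{lem:tau} gives $\tau(\sqrt{\beta_1}) \ge \tau(1) = \Phi(1)+\phi(1) > 1$, whence $c_\tau(\beta_1) = \tau(\sqrt{\beta_1})/\tau(-\sqrt{\beta_1}) \ge 1/\tau(-\sqrt{\beta_1}) > C_1$, as required.

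The second inequality is the crux, because $9c_\alpha>6$ makes $\beta_2>\beta_1$, so a term-by-term comparison of $\sqrt{\beta_2}$ against $\sqrt{\beta_1}$ fails; the slack must instead come from the fact that $c_\tau(\beta_1)$ exceeds $C_1$ by a factor of order $\beta_1$. Rewriting the left side as $C_1(\sqrt{\beta_2}+\phi(0))+\sqrt{\beta_2}$, I would lower-bound the right side by estimating $\tau(-\sqrt{\beta_1})$. The Mills-ratio inequality behind~\eqref{eqn:tvp-pf-1.5} gives $\tau(-\sqrt{\beta_1}) = \phi(\sqrt{\beta_1}) - \sqrt{\beta_1}\,\Phi(-\sqrt{\beta_1}) < \phi(\sqrt{\beta_1})/\beta_1$, hence $c_\tau(\beta_1) > \beta_1/\phi(\sqrt{\beta_1})$ and $B_1 := c_\tau(\beta_1)(\sqrt{\beta_1}+\phi(0)) > \beta_1^{3/2}/\phi(\sqrt{\beta_1})$. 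For the upper bound on $C_1$ I would use $\Phi(-w) > \phi(w)(w^2-1)/w^3$ (valid as $w>1$) together with the exact identity $\phi(w)=\tfrac43\phi(\sqrt{\beta_1})$, which follows from $e^{-w^2/2}=2\delta/9$ and $e^{-\beta_1/2}=\delta/6$; this yields $C_1 < 3w^3/\bigl(4(w^2-1)\phi(\sqrt{\beta_1})\bigr)$. Substituting and multiplying through by $\phi(\sqrt{\beta_1})$ reduces the claim to the elementary inequality $\tfrac{3w^3(\sqrt{\beta_2}+\phi(0))}{4(w^2-1)} + \sqrt{\beta_2}\,\phi(\sqrt{\beta_1}) \le \beta_1^{3/2}$, in which $w,\sqrt{\beta_1},\sqrt{\beta_2}$ are all $\Theta(\sqrt{\log(1/\delta)})$ while the right-hand side carries one additional power of $\sqrt{\log(1/\delta)}$.

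I expect this last step to be the main obstacle: the reduced inequality is comfortably true for small $\delta$ (the right side grows one order faster), but establishing it for every $\delta\in(0,1)$ requires a uniform argument, the binding case being $\delta\to1$ where $\beta_1\to2\log6$, $w^2\to2\log(9/2)$, and $\beta_2\to2\log(9c_\alpha)$; there one checks the gap directly and confirms it only widens as $\delta$ decreases, e.g.\ by monotonicity in $\log(1/\delta)$. The noiseless claim, comparing~\eqref{eqn:EI-convg-2} with~\eqref{eqn:EI-convg-star-noiseless}, follows by the same two-step scheme with $\beta_1=2\log(2/\delta)$, $\beta_2=2\log(3c_\alpha/\delta)$, and $w$ replaced by $\sqrt{\beta_2}$ in the definitions of $C_1$ and $C_2$.
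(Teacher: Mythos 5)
Your proposal is correct and follows the same overall strategy as the paper's proof: rewrite both bounds in the common form with a coefficient on $\tfrac{M+\sqrt{c_t^\sigma}\sigma}{t-3}$ and a coefficient on $\sigma_{t_k}(\xbm_{t_k+1})$, and compare the two pairs of constants. Your first comparison ($C_1<c_\tau(\beta_1)$ via $\Phi(-w)>\Phi(-\sqrt{\beta_1})>\tau(-\sqrt{\beta_1})$ and $\tau(\sqrt{\beta_1})>1$) is essentially the paper's chain in~\eqref{eqn:EI-convg-star-1-bound-3}, which uses Lemma~\ref{lem:tauvsPhi} in the same place but extracts slightly more slack from $\tau(\sqrt{\beta_1})>\sqrt{\beta_1}$. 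Where you diverge is the second comparison: the paper lower-bounds $c_\tau(\beta_1)(\sqrt{\beta_1}+\phi(0))$ by $\sqrt{\beta_1}(\sqrt{\beta_1}+\phi(0))/\Phi(-\sqrt{\beta_1})$ and upper-bounds $C_5^{4.6}$ using the numerical estimate $\Phi(-\sqrt{w_{4.6}})<0.042$, reducing to the scalar check $1.042\sqrt{\beta_2}+\phi(0)<\sqrt{\beta_1}(\sqrt{\beta_1}+\phi(0))$; you instead invoke the sharper Mills-ratio bound $\tau(-\sqrt{\beta_1})<\phi(\sqrt{\beta_1})/\beta_1$ together with the exact ratio $\phi(w)=\tfrac43\phi(\sqrt{\beta_1})$, arriving at a different but equally verifiable scalar inequality (I confirm your binding case: at $\delta\to1$ the left side is about $5.1$ against $\beta_1^{3/2}\approx6.8$, and the gap grows as $\delta$ decreases since the right side carries an extra factor of $\sqrt{\beta_1}$). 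Both proofs leave that final scalar check at the level of ``verify directly,'' so your rigor is comparable. One caveat: your closing sentence that the noiseless case ``follows by the same two-step scheme with $w$ replaced by $\sqrt{\beta_2}$'' is too quick, because there $\sqrt{\beta_2}=\sqrt{2\log(3c_\alpha/\delta)}>\sqrt{\beta_1}=\sqrt{2\log(2/\delta)}$, so the first link $\Phi(-w)>\Phi(-\sqrt{\beta_1})$ reverses; the comparison still holds (e.g.\ at $\delta=0.99$ one gets $C_1\approx17.7$ against $c_\tau(\beta_1)\approx21.4$) but only because of the multiplicative factor $\tau(\sqrt{\beta_1})>1$, and the margin is thin, so that case needs its own explicit estimate rather than a transplant of the noisy chain. (The paper waves at the noiseless case with ``similar'' as well, so this is a shared, not uniquely yours, gap.)
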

\begin{proof}
   We prove the noisy case, as the noiseless case is similar.
   To distinguish the parameters, we denote the $\beta$ in Theorem~\ref{theorem:EI-convg-star} as $\beta_{4.6}$
   and $\beta$ in Theorem~\ref{theorem:EI-convergence-1} as $\beta_{4.2}$.
   Hence, given the same probability $\delta$, $\beta_{4.6}=2\log(\frac{9c_{\alpha}}{\delta})$ and the $\beta_{4.2} = 2\log(\frac{6}{\delta})$.
    To better compare the two theorems, we can rewrite both ~\eqref{eqn:EI-convg-1} and~\eqref{eqn:EI-convg-star-1} as 
    \begin{equation} \label{eqn:EI-convg-compare-bound}
  \centering
  \begin{aligned}
               r_t \leq  3C_{4} \frac{2M+2\sqrt{c_t^\sigma} \sigma}{t-3} +  C_5 \sigma_{t_k}(\xbm_{t_k+1}).\\
   \end{aligned}
  \end{equation} 
    The bounds in~\eqref{eqn:EI-convg-1} leads to
    \begin{equation} \label{eqn:EI-convg-1-bound}
  \centering
  \begin{aligned}
               C_4^{4.2} =   \frac{\tau(\sqrt{\beta_{4.2}})}{\tau(-\sqrt{\beta_{4.2}} )},\ C_5^{4.2}= \frac{\tau(\sqrt{\beta_{4.2}})}{\tau(- \sqrt{\beta_{4.2}})}( \sqrt{\beta_{4.2}}+\phi(0)),\\
   \end{aligned}
  \end{equation} 
  where the superscript indicates the corresponding theorems.
  Let $w_{4.6}=2\log(\frac{9}{2\delta})$.  From~\eqref{eqn:EI-convg-star-1}, we have that 
      \begin{equation} \label{eqn:EI-convg-star-1-bound-2}
  \centering
  \begin{aligned}
       C_4^{4.6} = \frac{1}{\Phi(-\sqrt{w_{4.6}})}, \ C_5^{4.6} = \frac{\sqrt{\beta_{4.6}}}{\Phi(-\sqrt{w_{4.6}})} + \frac{\phi(0)}{\Phi(-\sqrt{w_{4.6}})} + \sqrt{\beta_{4.6}}.
   \end{aligned}
  \end{equation}
  Using $\beta_{4.2}>w_{4.6}$, we have that
      \begin{equation} \label{eqn:EI-convg-star-1-bound-3}
  \centering
  \begin{aligned}
    C_4^{4.2} = \frac{\tau(\sqrt{\beta_{4.2}})}{\tau(-\sqrt{\beta_{4.2}})}>  \frac{\tau(\sqrt{\beta_{4.2}})}{\Phi(-\sqrt{\beta_{4.2}})}> \frac{\sqrt{\beta_{4.2}}}{\Phi(-\sqrt{w_{4.6}})}> C_4^{4.6}, 
   \end{aligned}
  \end{equation}
  where the first inequality uses Lemma~\ref{lem:tauvsPhi}, the second one uses Lemma~\ref{lem:EI}, and the last one uses $\beta_{4.2}>1$. 
  Notice that $w_{4.6}>2\log(\frac{9}{2})>3$.
  Then, using $\delta\in(0,1)$, $w_{4.6}<\beta_{4.2}$, and inequalities from~\eqref{eqn:EI-convg-star-1-bound-3}, we have 
      \begin{equation} \label{eqn:EI-convg-star-1-bound-5}
  \centering
  \begin{aligned}
      C_5^{4.6} =& \frac{1}{\Phi(-\sqrt{w_{4.6}})} (\sqrt{\beta_{4.6}}+\phi(0)+ \sqrt{\beta_{4.6}}\Phi(-\sqrt{w_{4.6}}) )<
          \frac{1.042\sqrt{\beta_{4.6}}+\phi(0)}{\Phi(-\sqrt{\beta_{4.2}})} .\\
   \end{aligned}
  \end{equation}
   It is easy to verify that $1.042\sqrt{\beta_{4.6}} +\phi(0)< \beta_{4.2}+\phi(0)\sqrt{\beta_{4.2}}$ by taking the square on both sides. Hence, by  Lemma~\ref{lem:tauvsPhi} and~\ref{lem:EI}, we have 
      \begin{equation} \label{eqn:EI-convg-star-1-bound-6}
  \centering
  \begin{aligned}
      C_5^{4.6} <& \frac{1}{\Phi(-\sqrt{\beta_{4.2}})} (\beta_{4.2}+\phi(0)\sqrt{\beta_{4.2}})<\frac{\tau(\sqrt{\beta_{4.2}})(\sqrt{\beta_{4.2}}+\phi(0))}{\tau(-\sqrt{\beta_{4.2}})} = C_5^{4.2}.
   \end{aligned}
  \end{equation}
\end{proof}
\begin{remark}\label{remark:c1c2}
   An example of Proposition~\ref{cor:EI-convg-compare} is $\delta=0.1$. Then, $\beta_{4.2}=8.19$, $C_4^{4.2}=4632$ and $C_5^{4.2} = 15103$. Meanwhile, $\beta_{4.6}=9.17$, $C_{1}=345$, and $C_{2}=141$. Thus, $C_4^{4.6}=345$, and $C_5^{4.6}=1187$.
  Additional values of $\delta$ are illustrated in Figure~\ref{fig:coeff}.
\end{remark}

\subsection{Improved error bound for RKHS objectives}\label{se:rkhs}
In this section, we demonstrate how the analysis and techniques from Section~\ref{se:asymp-noise} and~\ref{se:rate} can be applied to RKHS objectives to improve existing best-known error bounds and convergence rates from~\cite{bull2011convergence} in the noiseless case.
Formally, Assumptions~\ref{assp:rkhs},~\ref{assp:constraint} are valid in this section. 
The following lemma can be found in~\cite{chowdhury2017kernelized}.
\begin{lemma}\label{lem:rkhs-f-bound}
  In the noiseless scenario, we have at $\forall \xbm\in C$ and $t\in\Nbb$ that
\begin{equation} \label{eqn:rkhs-f-bound-finite-1}
  \centering
  \begin{aligned}
          |f(\xbm) - \mu_t(\xbm)| \leq B\sigma_t(\xbm).
  \end{aligned}
\end{equation}
\end{lemma}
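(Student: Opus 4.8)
The plan is to reduce the statement to the classical reproducing-kernel estimate that controls the interpolation error by the power function. First I would use the noiseless hypothesis to note that $y_i = f(\xbm_i)$ for every sample, so that $\ybm_{1:t} = \fbm_{1:t}$ and the posterior mean in~\eqref{eqn:GP-post} reads $\mu_t(\xbm) = \sum_{i=1}^t \beta_i(\xbm)\, f(\xbm_i)$ with coefficient vector $\beta(\xbm) = (\Kbm_t + \sigma^2 \Ibm)^{-1}\kbm_t(\xbm)$. Since $f \in \mathcal{H}_k$, the reproducing property gives $f(\xbm) = \langle f, k(\cdot,\xbm)\rangle_{\mathcal{H}_k}$ and $f(\xbm_i) = \langle f, k(\cdot,\xbm_i)\rangle_{\mathcal{H}_k}$, so that, writing $g_\xbm = \sum_{i=1}^t \beta_i(\xbm)\, k(\cdot,\xbm_i) \in \mathcal{H}_k$, both the target and its estimate are linear functionals of $f$:
\begin{equation*}
  f(\xbm) - \mu_t(\xbm) = \langle f,\; k(\cdot,\xbm) - g_\xbm \rangle_{\mathcal{H}_k}.
\end{equation*}

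Next I would apply the Cauchy--Schwarz inequality together with the norm bound $\norm{f}_{\mathcal{H}_k}\leq B$ from Assumption~\ref{assp:rkhs}, obtaining
\begin{equation*}
  |f(\xbm) - \mu_t(\xbm)| \leq \norm{f}_{\mathcal{H}_k}\,\norm{k(\cdot,\xbm)-g_\xbm}_{\mathcal{H}_k} \leq B\,\norm{k(\cdot,\xbm)-g_\xbm}_{\mathcal{H}_k}.
\end{equation*}
It then remains to show $\norm{k(\cdot,\xbm)-g_\xbm}_{\mathcal{H}_k} \leq \sigma_t(\xbm)$. For this I would expand the squared norm using $\langle k(\cdot,\xbm),k(\cdot,\xbm')\rangle_{\mathcal{H}_k} = k(\xbm,\xbm')$, which gives
\begin{equation*}
  \norm{k(\cdot,\xbm)-g_\xbm}_{\mathcal{H}_k}^2 = k(\xbm,\xbm) - 2\beta(\xbm)^T\kbm_t(\xbm) + \beta(\xbm)^T\Kbm_t\beta(\xbm),
\end{equation*}
and substitute $\beta(\xbm) = (\Kbm_t+\sigma^2\Ibm)^{-1}\kbm_t(\xbm)$ to compare against $\sigma_t^2(\xbm)$ from~\eqref{eqn:GP-post}.

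The single step that needs care is this last comparison. Carrying out the algebra, the linear and quadratic terms collapse to
\begin{equation*}
  \norm{k(\cdot,\xbm)-g_\xbm}_{\mathcal{H}_k}^2 = \sigma_t^2(\xbm) - \sigma^2\,\kbm_t(\xbm)^T(\Kbm_t+\sigma^2\Ibm)^{-2}\kbm_t(\xbm),
\end{equation*}
and since $\Kbm_t+\sigma^2\Ibm$ is positive definite the subtracted quantity is nonnegative, so $\norm{k(\cdot,\xbm)-g_\xbm}_{\mathcal{H}_k}^2 \leq \sigma_t^2(\xbm)$ (with equality when $\sigma=0$, where $g_\xbm$ is exactly the minimum-norm interpolant and the residual norm is the power function). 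Combining the three displays yields~\eqref{eqn:rkhs-f-bound-finite-1}. I do not expect a genuine obstacle: this is the standard RKHS interpolation bound and coincides with the estimate established in~\cite{chowdhury2017kernelized}. The only bookkeeping to watch is the regularizer $\sigma^2\Ibm$, which can only decrease the residual norm relative to the pure-interpolation case and hence never weakens the inequality.
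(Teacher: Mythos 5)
Your proof is correct: the reduction of $f(\xbm)-\mu_t(\xbm)$ to an inner product with the residual $k(\cdot,\xbm)-g_\xbm$, the Cauchy--Schwarz step, and the algebraic identity $\norm{k(\cdot,\xbm)-g_\xbm}_{\mathcal{H}_k}^2=\sigma_t^2(\xbm)-\sigma^2\kbm_t(\xbm)^T(\Kbm_t+\sigma^2\Ibm)^{-2}\kbm_t(\xbm)$ all check out, and the noiseless hypothesis is used exactly where it is needed (to ensure $\ybm_{1:t}=\fbm_{1:t}$ so that $\mu_t$ is a linear functional of $f$). The paper does not prove this lemma itself but merely cites~\cite{chowdhury2017kernelized}; your argument is the standard one underlying that reference, so it is essentially the same approach.
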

Next, we state the relationship between $I_t$ and $EI_t(\xbm)$, similar to that of Lemma~\ref{lem:IEIbound}.
\begin{lemma}\label{lem:rkhs-bound-EI}
   At $\forall \xbm\in C,  t\in\Nbb$, the following inequality holds 
 \begin{equation} \label{eqn:rkhs-bound-2}
  \centering
  \begin{aligned}
     |I_t(\xbm) - EI_{t}(\xbm)|\leq (B+1) \sigma_{t}(\xbm).
  \end{aligned}
\end{equation}
\end{lemma}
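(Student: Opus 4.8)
The plan is to exploit the key structural advantage of the noiseless RKHS setting, namely that Lemma~\ref{lem:rkhs-f-bound} supplies a \emph{deterministic} pointwise bound $|f(\xbm)-\mu_t(\xbm)|\le B\sigma_t(\xbm)$, so the entire argument becomes a deterministic comparison of two scalar quantities rather than a probabilistic one as in Lemma~\ref{lem:IEIbound}. First I would divide through by $\sigma_t(\xbm)$ (recall we may assume $\sigma_t(\xbm)>0$) and introduce the standardized quantities $g=\frac{y^+_t-f(\xbm)}{\sigma_t(\xbm)}$ and $z_t(\xbm)=\frac{y^+_t-\mu_t(\xbm)}{\sigma_t(\xbm)}$, so that $I_t(\xbm)=\sigma_t(\xbm)\max\{g,0\}$ and $EI_t(\xbm)=\sigma_t(\xbm)\tau(z_t(\xbm))$. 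Lemma~\ref{lem:rkhs-f-bound} then reads simply as $|g-z_t(\xbm)|\le B$, and the target inequality~\eqref{eqn:rkhs-bound-2} reduces to showing $|\max\{g,0\}-\tau(z_t(\xbm))|\le B+1$.

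I would then split this through the intermediate quantity $\max\{z_t(\xbm),0\}$ via the triangle inequality, $|\max\{g,0\}-\tau(z_t(\xbm))|\le |\max\{g,0\}-\max\{z_t(\xbm),0\}|+|\max\{z_t(\xbm),0\}-\tau(z_t(\xbm))|$. The first term is controlled because the positive-part map $x\mapsto\max\{x,0\}$ is $1$-Lipschitz, giving a bound of $|g-z_t(\xbm)|\le B$. For the second term I would establish the clean auxiliary fact that $|\max\{z,0\}-\tau(z)|\le\phi(0)<1$ for every $z\in\Rbb$: using the identity $\tau(z)-z=\tau(-z)$ (a direct consequence of $1-\Phi(z)=\Phi(-z)$ and the evenness of $\phi$), the difference equals $-\tau(-z)$ when $z\ge 0$ and equals $-\tau(z)$ when $z<0$, and in either case the monotonicity and positivity of $\tau$ from Lemma~\ref{lem:tau} bound $|\max\{z,0\}-\tau(z)|$ above by $\tau(0)=\phi(0)$. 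Combining the two terms yields $|\max\{g,0\}-\tau(z_t(\xbm))|\le B+\phi(0)<B+1$, and multiplying back by $\sigma_t(\xbm)$ gives~\eqref{eqn:rkhs-bound-2}.

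The main obstacle is isolating the right decomposition: a naive two-case split on the sign of $I_t(\xbm)$ (mimicking the proof of Lemma~\ref{lem:IEIbound}) forces one to bound $\tau(-z_t(\xbm))$ directly, which is only of order $B$ rather than of order $1$ and therefore produces a constant near $2B$ instead of $B+1$. The insight that makes the constant sharp is to route through $\max\{z_t(\xbm),0\}$, so that the $B$-dependence is absorbed entirely by the $1$-Lipschitz positive-part map, leaving only the universal order-$1$ gap $|\max\{z,0\}-\tau(z)|$ to be estimated. I expect verifying that this gap is bounded by $\phi(0)$---the one genuinely function-analytic step---to be the crux, and it follows cleanly from the identity $\tau(z)-z=\tau(-z)$ together with the monotonicity already recorded in Lemma~\ref{lem:tau}.
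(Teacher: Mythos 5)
Your argument is correct, and it is essentially the standard proof: the paper itself gives no proof but defers to Lemma 8 of Bull (2011), which rests on exactly the same two ingredients you use --- the deterministic bound $|g-z_t(\xbm)|\le B$ from Lemma~\ref{lem:rkhs-f-bound} together with the universal gap $0\le \tau(z)-\max\{z,0\}\le\tau(0)=\phi(0)$, combined through a 1-Lipschitz map (Bull routes the Lipschitz step through $\tau$, whose derivative is $\Phi\in[0,1]$, while you route it through $\max\{\cdot,0\}$; the two decompositions are interchangeable and both yield the slightly sharper constant $B+\phi(0)$).
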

The proof of Lemma~\ref{lem:rkhs-bound-EI} can be found in Lemma 8 in~\cite{bull2011convergence}
%A similar result to Lemma~\ref{lem:IEIbound-ratio} under the RKHS assumption is next.
%\begin{lemma}\label{lem:IEIbound-ratio-rkhs}
%At given $ \xbm\in C$ and $t\geq 1$,   
%\begin{equation} \label{eqn:IEI-bound-ratio-rkhs-1}
% \centering
%  \begin{aligned}
%    \frac{\tau(-B)}{\tau(B)} I_t(\xbm) \leq  EI_{t}(\xbm).
%  \end{aligned}
%\end{equation}
%\end{lemma}
%The proof of Lemma~\ref{lem:IEIbound-ratio-rkhs} can be seen in Lemma 8 in~\cite{bull2011convergence}
We state the asymptotic convergence theorem under RKHS assumptions from~\cite{bull2011convergence} as a lemma.
 \begin{lemma}\label{theorem:EI-convergence-rkhs-1}
   The error of noiseless GP-EI satisfies
   \begin{equation} \label{eqn:EI-convg-rkhs-2}
  \centering
  \begin{aligned}
             r_t \leq c_{\tau}(B) \left[4\frac{M}{t-2} +(B+\phi(0)) \sigma_{t_k}(\xbm_{t_k+1})\right],\\
   \end{aligned}
  \end{equation} 
 where $c_{\tau}(B)=\frac{\tau(B)}{\tau(-B)}$ and $t_k\in[\frac{t}{2}-1, t]$.
\end{lemma}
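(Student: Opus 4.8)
The plan is to mirror the noiseless branch of Theorem~\ref{theorem:EI-convergence-1} almost verbatim, replacing each probabilistic GP-prior tool by its deterministic RKHS counterpart so that the final bound holds surely, with no $\delta$. The two ingredients to swap out are Lemma~\ref{lem:fmu} (the pointwise prediction-error inequality, which holds only with probability $1-\delta$) and Lemma~\ref{lem:IEIbound-ratio} (the improvement-versus-EI ratio bound). Under Assumption~\ref{assp:rkhs}, Lemma~\ref{lem:rkhs-f-bound} already supplies the deterministic replacement $|f(\xbm)-\mu_t(\xbm)|\le B\sigma_t(\xbm)$, so the role of $\sqrt{\beta}$ is taken throughout by the fixed constant $B$, which is why $c_\tau(B)=\tau(B)/\tau(-B)$ replaces $c_\tau(\beta)$.

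The first genuine step is to establish the deterministic RKHS analog of Lemma~\ref{lem:IEIbound-ratio}, namely $I_t(\xbm)\le c_\tau(B)\,EI_t(\xbm)$ for all $\xbm,t$. I would argue exactly as in Lemma~\ref{lem:IEIbound-ratio}: when $y_t^+-f(\xbm)\le 0$ the claim is trivial since $EI_t(\xbm)\ge 0$; when $y_t^+-f(\xbm)>0$, combine $y_t^+-\mu_t(\xbm)>f(\xbm)-\mu_t(\xbm)\ge -B\sigma_t(\xbm)$ (Lemma~\ref{lem:rkhs-f-bound}) with the monotonicity of $\tau$ (Lemma~\ref{lem:tau}) to get $EI_t(\xbm)=\sigma_t(\xbm)\tau(z_t(\xbm))>\tau(-B)\sigma_t(\xbm)$, and separately bound $I_t(\xbm)-EI_t(\xbm)\le B\sigma_t(\xbm)$. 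The latter follows from $I_t(\xbm)=y_t^+-f(\xbm)\le y_t^+-\mu_t(\xbm)+B\sigma_t(\xbm)$ (again Lemma~\ref{lem:rkhs-f-bound}) together with $EI_t(\xbm)\ge y_t^+-\mu_t(\xbm)$ (Lemma~\ref{lem:EI}). Eliminating $\sigma_t(\xbm)$ between the two estimates and using the elementary identity $\tau(B)=\tau(-B)+B$ (which follows from $\Phi(B)+\Phi(-B)=1$) yields the ratio bound. Securing the clean constant $B$ here, rather than the $B+1$ of Lemma~\ref{lem:rkhs-bound-EI}, is the point on which the sharpness of the final bound hinges, and I expect it to be the main obstacle; it is precisely the place where $EI_t(\xbm)\ge y_t^+-\mu_t(\xbm)$ must be invoked.

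With this ratio bound in hand, the rest is bookkeeping identical to the noiseless case of Theorem~\ref{theorem:EI-convergence-1}. Using $|f|\le M$ (Lemma~\ref{lem:f-bound}) and the telescoping sum $\sum_{t}(y_t^+-y_{t+1}^+)=y_0^+-y_T^+\le 2M$, the nonnegative increments exceed $2M/k$ at most $k$ times, so taking $k=[\frac{t}{2}]$ produces an index $k\le t_k\le 2k$ with $y_{t_k}^+-y_{t_k+1}^+<2M/k$, where in the noiseless setting $y_{t_k+1}^+\le f(\xbm_{t_k+1})$ holds automatically. I would then chain $r_t=y_t^+-f(\xbm^*)\le y_{t_k}^+-f(\xbm^*)=I_{t_k}(\xbm^*)\le c_\tau(B)\,EI_{t_k}(\xbm^*)\le c_\tau(B)\,EI_{t_k}(\xbm_{t_k+1})$, the last inequality being the definition~\eqref{eqn:acquisition-1} of $\xbm_{t_k+1}$ as the EI maximizer.

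Finally, I would expand $EI_{t_k}(\xbm_{t_k+1})$ via~\eqref{eqn:EI-1} and control its exploitation term by writing $y_{t_k}^+-\mu_{t_k}(\xbm_{t_k+1})=(y_{t_k}^+-y_{t_k+1}^+)+(y_{t_k+1}^+-f(\xbm_{t_k+1}))+(f(\xbm_{t_k+1})-\mu_{t_k}(\xbm_{t_k+1}))<2M/k+B\sigma_{t_k}(\xbm_{t_k+1})$, combining the three facts just assembled with Lemma~\ref{lem:rkhs-f-bound}. Applying $\Phi(\cdot)\in(0,1)$ and $\phi(\cdot)\le\phi(0)$ (Lemma~\ref{lem:phi}) gives $r_t\le c_\tau(B)[2M/k+(B+\phi(0))\sigma_{t_k}(\xbm_{t_k+1})]$, and $k=[\frac{t}{2}]\ge (t-2)/2$ converts $2M/k$ into $4M/(t-2)$ with $t_k\in[\frac{t}{2}-1,t]$, which is exactly~\eqref{eqn:EI-convg-rkhs-2}. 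No union bound enters, since every inequality used is deterministic, explaining the absence of any probability qualifier in the statement.
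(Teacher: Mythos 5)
Your proposal is correct and follows essentially the same route as the paper, which proves this lemma by running the noiseless branch of Theorem~\ref{theorem:EI-convergence-1} with Lemma~\ref{lem:rkhs-f-bound} in place of Lemma~\ref{lem:fmu} and a deterministic RKHS analog of Lemma~\ref{lem:IEIbound-ratio}; your derivation of that analog (including the one-sided bound $I_t-EI_t\leq B\sigma_t$ via $EI_t\geq y_t^+-\mu_t$ and the identity $\tau(B)=\tau(-B)+B$) matches the intended argument exactly.
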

  The proof of Lemma~\ref{theorem:EI-convergence-rkhs-1} is the same as that of Theorem~\ref{theorem:EI-convergence-1} with corresponding versions of Lemma~\ref{lem:IEIbound-ratio} and~\ref{lem:fmu} for objectives in RKHS. 
  Readers can also find the proof in Theorem 2 of~\cite{bull2011convergence}.
  The improved error bound is given by the next theorem.
\begin{theorem}\label{theorem:EI-convg-star-rkhs}
   Let $C_1 = \frac{1}{\Phi(-B)}$, $C_2= B + \frac{\phi(0)}{\Phi(-B)}$.
   The GP-EI error bound is charaterized by
  \begin{equation} \label{eqn:EI-convg-star-rkhs-noiseless}
  \centering
  \begin{aligned}
      r_t  \leq 4C_1\frac{M}{t-2}+ (C_1B+C_2) \sigma_{t_k}(\xbm_{t_k+1}), 
   \end{aligned}
  \end{equation} 
  where $t_k\in[\frac{t}{2}-1,t]$. Compared to~\eqref{eqn:EI-convg-rkhs-2}, the error bound is smaller.
  \end{theorem}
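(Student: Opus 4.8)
The plan is to retrace the \emph{noiseless} branch of the proof of Theorem~\ref{theorem:EI-convg-star}, replacing every probabilistic GP-prior estimate by its deterministic RKHS counterpart and collapsing the three separate constants ($\sqrt{\beta}$ from the $I_t$--$EI_t$ gap, $\sqrt{\beta}$ from the prediction error, and $w$ from the $\sigma(\xbm^*)$ control) to the single value $B$. Two deterministic facts drive everything. Lemma~\ref{lem:rkhs-f-bound} gives $|f(\xbm)-\mu_t(\xbm)|\le B\sigma_t(\xbm)$ pointwise; and combining this with $EI_t(\xbm)\ge y_t^+-\mu_t(\xbm)$ and $EI_t(\xbm)\ge0$ from Lemma~\ref{lem:EI} yields the one-sided gap $I_t(\xbm)-EI_t(\xbm)\le B\sigma_t(\xbm)$ (checking the two cases $y_t^+-f(\xbm)\ge0$ and $<0$ separately), which is the deterministic analogue of Lemma~\ref{lem:IEIbound}. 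Since in the noiseless setting $y_t=f(\xbm_t)$, choosing $k=[\tfrac{t}{2}]$ gives an index $k\le t_k\le 2k$ with $y_{t_k}^+-y_{t_k+1}^+<\tfrac{2M}{k}$ and $f(\xbm_{t_k+1})=y_{t_k+1}\ge y_{t_k+1}^+$ \emph{with certainty}, so every probabilistic step of Theorem~\ref{theorem:EI-convg-star} becomes an almost-sure one.

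First I would assemble the baseline inequality exactly as in~\eqref{eqn:EI-convg-star-pf-2}--\eqref{eqn:EI-convg-star-pf-6}: from $r_t\le y_{t_k}^+-f(\xbm^*)\le I_{t_k}(\xbm^*)\le EI_{t_k}(\xbm^*)+B\sigma_{t_k}(\xbm^*)\le EI_{t_k}(\xbm_{t_k+1})+B\sigma_{t_k}(\xbm^*)$ (the last step by EI-maximality), expanding $EI_{t_k}(\xbm_{t_k+1})$ with $\Phi\le1$, $\phi\le\phi(0)$, and bounding the exploitation term by $y_{t_k}^+-\mu_{t_k}(\xbm_{t_k+1})\le\tfrac{2M}{k}+B\sigma_{t_k}(\xbm_{t_k+1})$ via the telescoped gap and Lemma~\ref{lem:rkhs-f-bound}, I obtain
\begin{equation*}
  r_t\le \frac{2M}{k}+(\phi(0)+B)\,\sigma_{t_k}(\xbm_{t_k+1})+B\,\sigma_{t_k}(\xbm^*).
\end{equation*}
I then split on whether $f(\xbm_{t_k+1})-f(\xbm^*)\le C_1\max\{y_{t_k}^+-\mu_{t_k}(\xbm_{t_k+1}),0\}+C_2\sigma_{t_k}(\xbm_{t_k+1})$ (Scenario A) or not (Scenario B), with $C_1=\tfrac{1}{\Phi(-B)}$, $C_2=B+\tfrac{\phi(0)}{\Phi(-B)}$. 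In Scenario A the chain $r_t\le f(\xbm_{t_k+1})-f(\xbm^*)$ together with the exploitation bound yields~\eqref{eqn:EI-convg-star-rkhs-noiseless} directly, using $\tfrac{2M}{k}\le\tfrac{4M}{t-2}$.

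The main obstacle is Scenario B, where one must show $\sigma_{t_k}(\xbm^*)\le\sigma_{t_k}(\xbm_{t_k+1})$ so that the $B\sigma_{t_k}(\xbm^*)$ term is absorbed into the $\sigma_{t_k}(\xbm_{t_k+1})$ coefficient. I would import essentially verbatim the contradiction argument of Theorem~\ref{theorem:EI-convg-star} (its Cases~1, 2.1.1.1/2, 2.1.2, 2.2), since it is purely analytic: assuming $\sigma_{t_k}(\xbm_{t_k+1})<\sigma_{t_k}(\xbm^*)$ and invoking the maximality $EI_{t_k}(\xbm_{t_k+1})\ge EI_{t_k}(\xbm^*)$ against the monotonicity of $EI(a,b)$ (Lemma~\ref{lem:EI-ms}), the auxiliary functions $\bar\tau,\tilde\tau$, and Lemmas~\ref{lem:tau}--\ref{lem:tauvsPhi}, one forces $EI_{t_k}(\xbm^*)>EI_{t_k}(\xbm_{t_k+1})$, a contradiction. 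The substitution $w=\sqrt\beta=B$ and $C_3:=C_2-B=\tfrac{\phi(0)}{\Phi(-B)}$ carries through provided $C_1>2$ and $C_3>B+1$; the former is immediate from $\Phi(-B)<\tfrac12$, while the latter must be checked for all $B\ge1$ — here the crude estimate $\Phi(-B)\le\tfrac12 e^{-B^2/2}$ of Lemma~\ref{lem:phi} is too weak near $B=1$, so I would instead note that $h(B):=\tfrac{\phi(0)}{\Phi(-B)}-(B+1)$ satisfies $h(1)>0$ and $h'(B)=\tfrac{\phi(0)\phi(B)}{\Phi(-B)^2}-1>0$ for $B\ge1$. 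Once $\sigma_{t_k}(\xbm^*)\le\sigma_{t_k}(\xbm_{t_k+1})$ holds, the baseline inequality becomes $r_t\le\tfrac{2M}{k}+(\phi(0)+2B)\sigma_{t_k}(\xbm_{t_k+1})$, which is dominated by~\eqref{eqn:EI-convg-star-rkhs-noiseless} since $C_1>2$ and $C_1B+C_2>2B+\phi(0)$; combining the two scenarios proves~\eqref{eqn:EI-convg-star-rkhs-noiseless}.

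Finally, for the comparison with~\eqref{eqn:EI-convg-rkhs-2} I would show $C_1<c_\tau(B)$ and $C_1B+C_2<c_\tau(B)(B+\phi(0))$ termwise. Both rest on the identity $\tau(B)=\tau(-B)+B$ (from $\Phi(B)+\Phi(-B)=1$) and on $\Phi(-B)>\tau(-B)$ from Lemma~\ref{lem:tauvsPhi}. The first follows from $c_\tau(B)=\tfrac{\tau(B)}{\tau(-B)}>\tfrac{1}{\tau(-B)}>\tfrac{1}{\Phi(-B)}=C_1$, using $\tau(B)>1$ for $B\ge1$ (Lemma~\ref{lem:tau}). For the second, writing $c_\tau(B)=1+\tfrac{B}{\tau(-B)}$ reduces the claim to $(B+\phi(0))\bigl(\tfrac{B}{\tau(-B)}-\tfrac{1}{\Phi(-B)}\bigr)>-\phi(0)$, whose left-hand side is already nonnegative for $B\ge1$ because $\tfrac{B}{\tau(-B)}\ge\tfrac{1}{\tau(-B)}>\tfrac{1}{\Phi(-B)}$. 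This establishes that~\eqref{eqn:EI-convg-star-rkhs-noiseless} is strictly smaller than~\eqref{eqn:EI-convg-rkhs-2}, completing the proof.
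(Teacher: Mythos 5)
Your proposal is correct and follows essentially the same route as the paper, which itself only sketches this argument (it states that the proof of \eqref{eqn:EI-convg-star-rkhs-noiseless} mirrors the noiseless branch of Theorem~\ref{theorem:EI-convg-star} with $\sqrt{\beta}$ and $w$ replaced by $B$, using Lemmas~\ref{lem:rkhs-f-bound} and~\ref{lem:rkhs-bound-EI}, and omits the details); your explicit verification that $C_3=\phi(0)/\Phi(-B)>B+1$ for all $B\ge 1$ via monotonicity of $h(B)$ is a worthwhile addition, since the crude tail bound $\Phi(-w)\le\tfrac12 e^{-w^2/2}$ used in Theorem~\ref{theorem:EI-convg-star} only covers $w\ge\sqrt{2\log(9/2)}\approx 1.73$ and would not handle $B=1$. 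The only place you genuinely diverge is the comparison with \eqref{eqn:EI-convg-rkhs-2}: you check $C_1<c_\tau(B)$ and $C_1B+C_2<c_\tau(B)\,(B+\phi(0))$ termwise using the identity $\tau(B)=\tau(-B)+B$ and Lemma~\ref{lem:tauvsPhi}, whereas the paper forms the single ratio $c_r(B)$ and concludes $r_t^{4.16}<r_t^{4.15}/c_r(B)$ with $1/c_r(B)\to 0$ as $B\to\infty$; your version suffices for the stated claim but forgoes the quantitative improvement factor the paper highlights.
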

\begin{proof}
  The proof of~\eqref{eqn:EI-convg-star-rkhs-noiseless} is similar to that of Theorem~\ref{theorem:EI-convg-star} by using Lemma~\ref{lem:rkhs-f-bound} and~\ref{lem:rkhs-bound-EI} and replacing $\sqrt{\beta}$ with $B$. We note that the probability $\delta=0$. 
  Details of the proof are omitted for brevity.
  
  We now show that the error bound in~\eqref{eqn:EI-convg-star-rkhs-noiseless} is better than that in~\eqref{eqn:EI-convg-rkhs-2}.
  Similar to Proposition~\ref{cor:EI-convg-compare}, the constants for~\eqref{eqn:EI-convg-rkhs-2} are  
  \begin{equation} \label{eqn:EI-convg-star-rkhs-pf-1}
  \centering
  \begin{aligned}
     C_4^{4.15} = \frac{\tau(B)}{\tau(-B)},\ C_5^{4.15} =  \frac{\tau(B)}{\tau(-B)}(B+\phi(0)),
   \end{aligned}
  \end{equation}
  while the constants for~\eqref{eqn:EI-convg-star-rkhs-noiseless} are
   \begin{equation} \label{eqn:EI-convg-star-rkhs-pf-2}
  \centering
  \begin{aligned}
     C_4^{4.16} = \frac{1}{\Phi(-B)},\ C_5^{4.16} =  B+\frac{B+\phi(0)}{\Phi(-B)}.
   \end{aligned}
  \end{equation}
  Now define     
\begin{equation} \label{eqn:EI-convg-star-rkhs-pf-3}
  \centering
  \begin{aligned}
     c_r(B) = \frac{\tau(B)(B+\phi(0))}{\Phi(-B)B+B+\phi(0)} <\frac{\tau(B)\Phi(-B)(B+\phi(0))}{\tau(-B)(\Phi(-B)B+B+\phi(0))}= \frac{C_5^{4.15}}{C_5^{4.16}},
   \end{aligned}
  \end{equation}
    where the inequality uses Lemma~\ref{lem:tauvsPhi}.
   Similarly, $\frac{C_4^{4.15}}{C_4^{4.16}}>c_r(B)$.
   Therefore, 
    \begin{equation} \label{eqn:EI-convg-star-rkhs-pf-4}
  \centering
  \begin{aligned}
     r_t^{4.16} < \frac{1}{c_r(B)} r_t^{4.15}.
   \end{aligned}
  \end{equation}
   Notice that $\frac{1}{c_r(B)} \to 0$ as $B\to\infty$. That is, significant improvement in error bound is achieved particularly for a large $B$.

\end{proof}
\begin{remark}\label{remark:EI-convg-rkhs-compare}
   Compared to~\cite{bull2011convergence}, under the same assumption, Theorem~\ref{theorem:EI-convg-star-rkhs} provides a provably smaller bound (see Proposition~\ref{cor:EI-convg-compare}). 
   The convergence rate from~\cite{bull2011convergence} can also be proven,similar to Theorem~\ref{prop:EI-convg-rate-nonoise}, if assumptions (1)-(4) for $k(\xbm,\xbm')$ in~\cite{bull2011convergence} are satisfied. 
\end{remark}

\section{Illustrations of theory}\label{se:examples}
The effectiveness of GP-EI is well-known and well-documented in literature. Readers can find examples in~\cite{jones1998efficient,frazier2018,jiao2019complete,wang2023multifidelity}, etc. 
In this section, we provide illustrations of key steps and features of the convergence analysis and error bounds.
First, we show the illustration of $\tau(z)$ and $\Phi(z)$ for Lemma~\ref{lem:phi} and~\ref{lem:tauvsPhi} in Figure~\ref{fig:tauandphi}.
\begin{figure}
  \centering
  \includegraphics[width=0.55\textwidth]{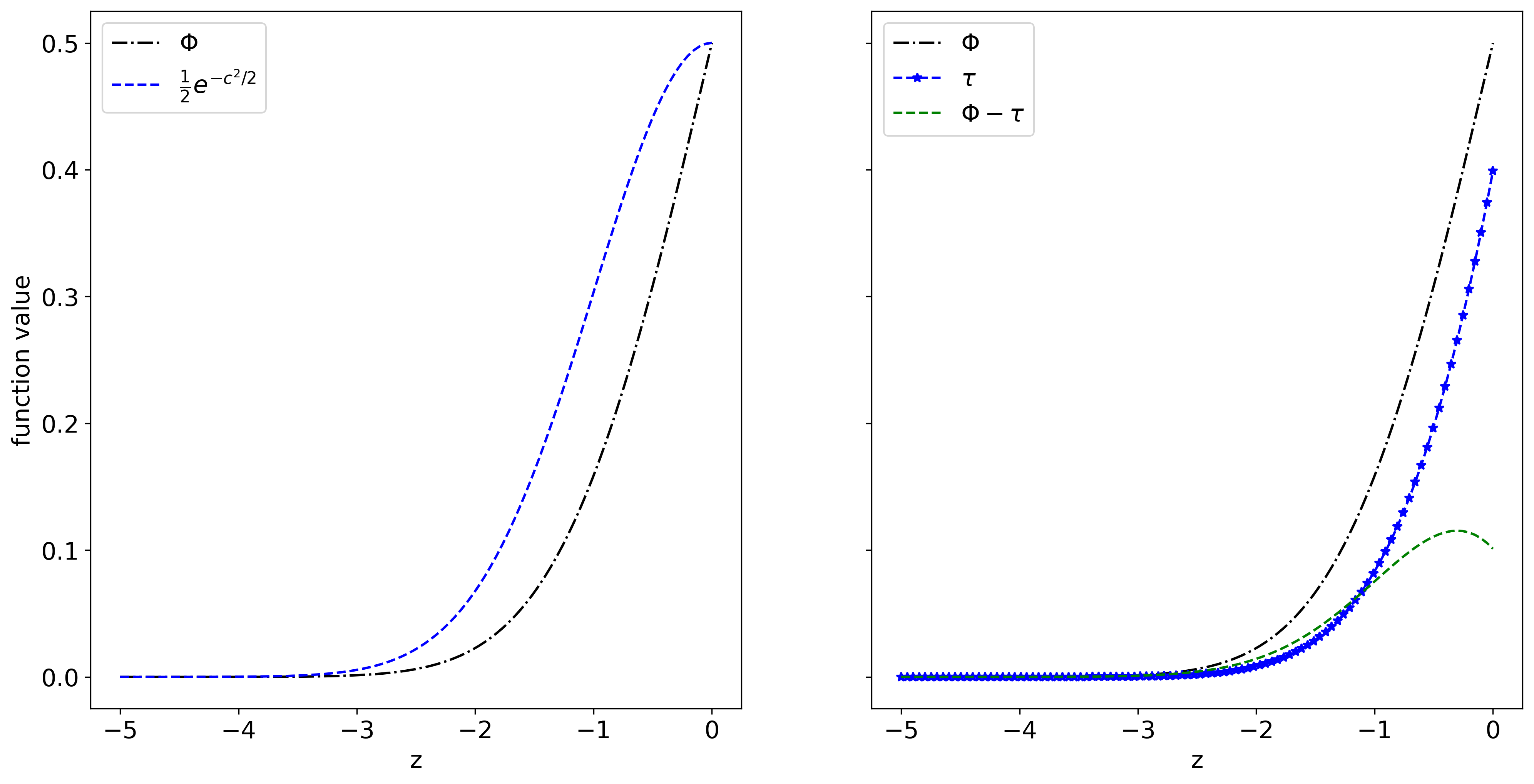}
	\caption{Left: the relationship between $\Phi(c)$ and $\frac{1}{2}e^{-\frac{1}{2}c^2}$ for $c<0$. Right: $\Phi(z)$ \textit{v.s.} $\tau(z)$ for $z<0$.}
\label{fig:tauandphi}
\end{figure}
Next, we plot the contour of EI based on its exploration and exploitation values~\eqref{eqn:EI-ab} in Figure~\ref{fig:contourEI}.
\begin{figure}
  \centering
  \includegraphics[width=0.55\textwidth]{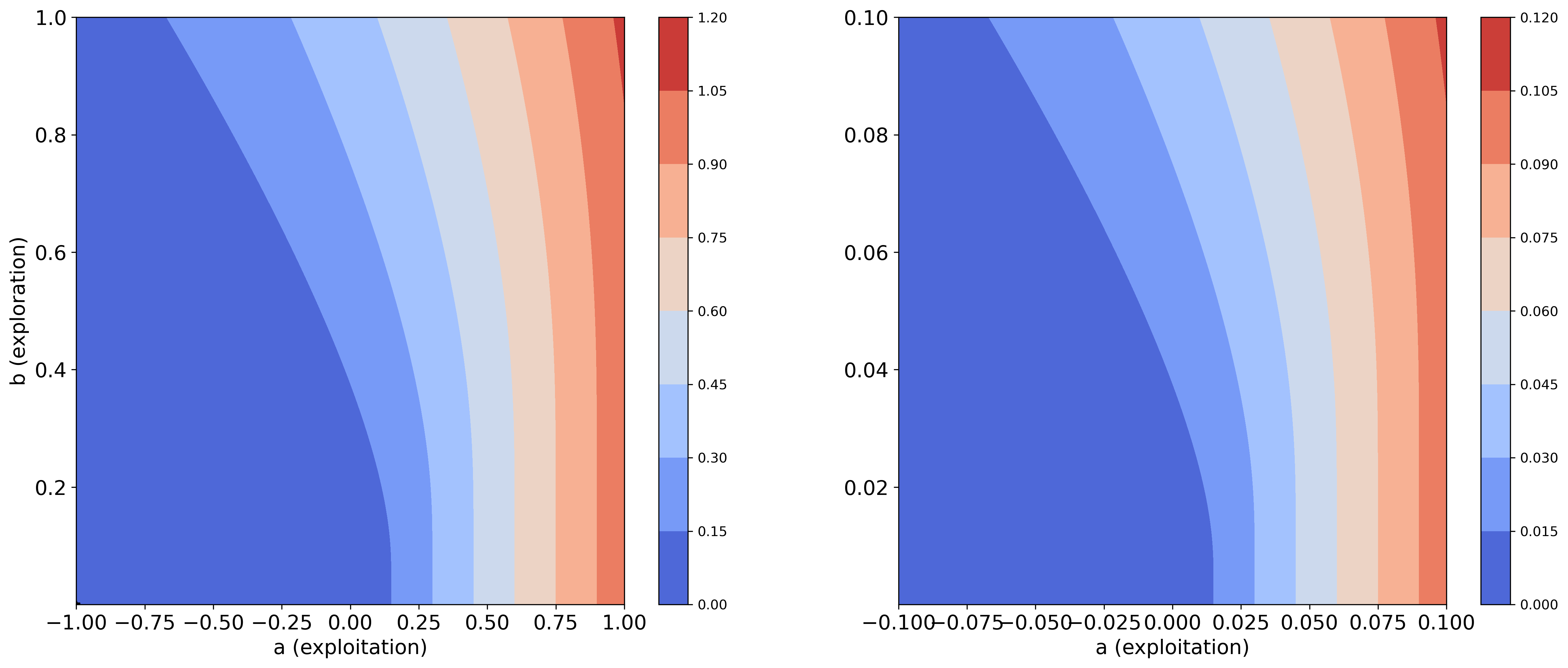}
	\caption{Contour plot for EI using its exploration and exploitation form~\eqref{eqn:EI-ab}.
Zoomed in view of a small exploitation ($a$) is given on the right. It is clear that EI contains intrinsic trade-off between exploration and exploitation.}
\label{fig:contourEI}
\end{figure}

Illustrative examples for functions $\bar{\tau}$ and $\tilde{\tau}$ are given next in Figure~\ref{fig:bartau} and Figure~\ref{fig:tildetau}, respetively. The plots match our analysis in Theorem~\ref{theorem:EI-convg-star}.
\begin{figure}
  \centering
  \includegraphics[width=0.55\textwidth]{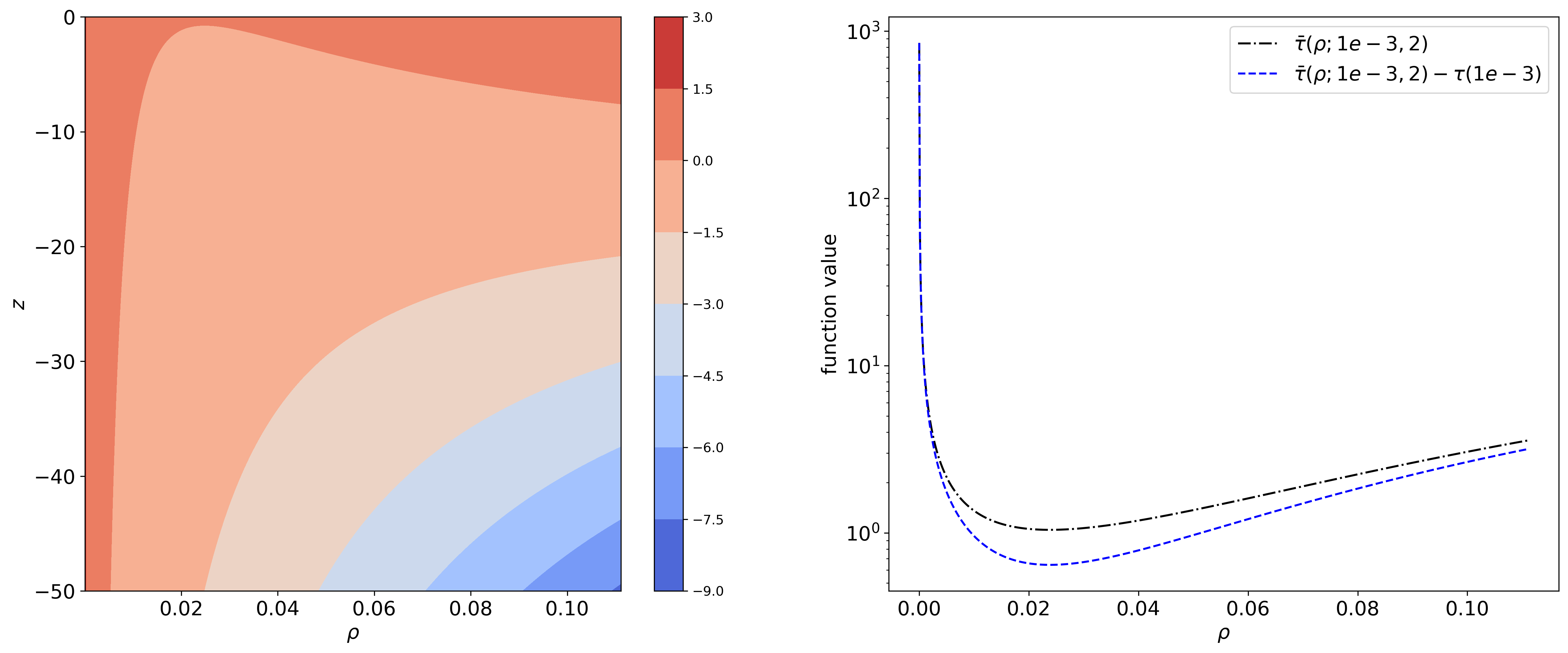}
	\caption{Left: contour plot for $\log_{10}(\bar{\tau})$ with varying $\rho\in (0,\frac{w}{C_3})$ and $z<0$.
Here, $w=2$, $C_1=44$ and $C_3=18$. Right: log-scale comparison of $\bar{\tau}$ with $\tau$ with fixed $z=10^{-3}$. 
  It is clear that $\bar{\tau}(\rho;10^{-3},2,18)-\tau(10^{-3})>0$ and takes a minimum around $\rho=0.02$.}
\label{fig:bartau}
\end{figure}

\begin{figure}
  \centering
  \includegraphics[width=0.55\textwidth]{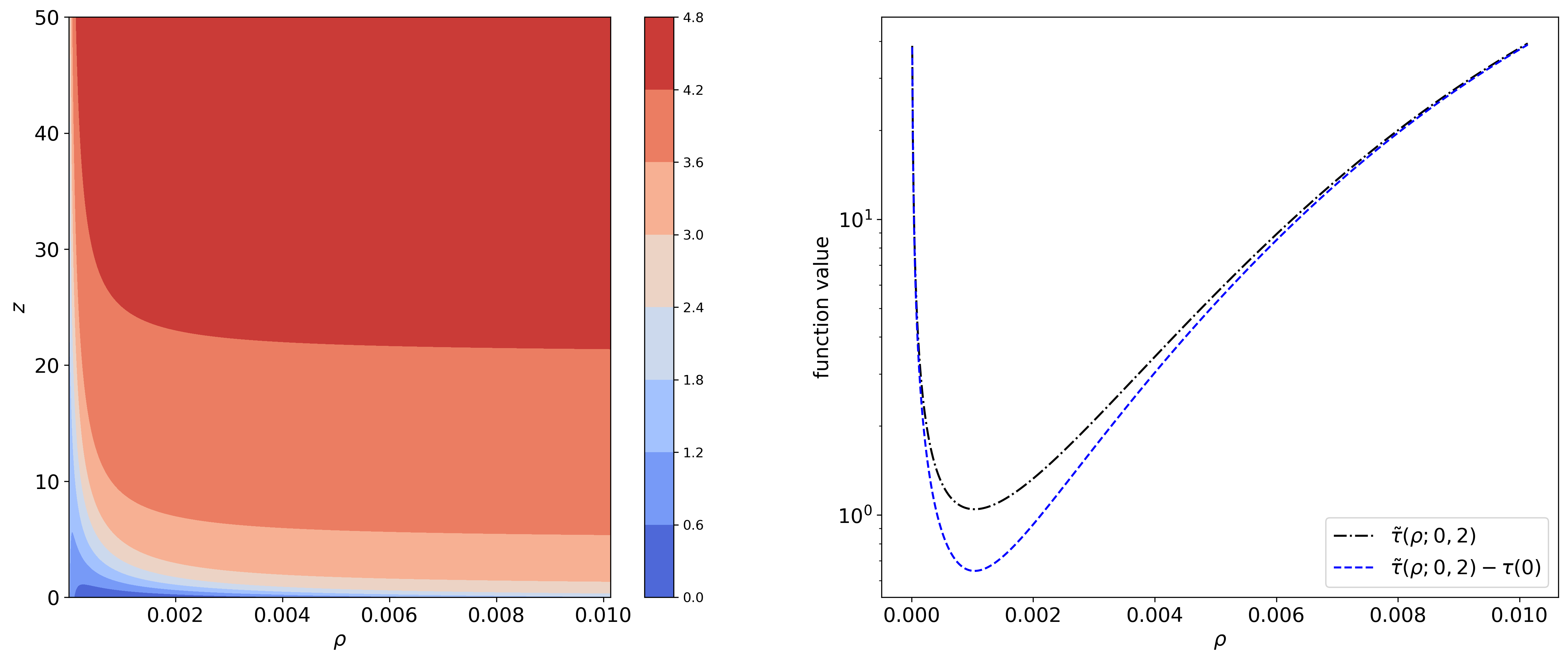}
	\caption{Left: contour plot for $\log_{10}(\tilde{\tau})$ with varying $\rho\in (0,\frac{w}{C_3})$ and $z>0$, which confirms the monotonicity $\tilde{\tau}$ with $z$.
Here, $w=3$, $C_1=741,$ and $C_3=296$. Right: log-scale comparison of $\tilde{\tau}$ with $\tau$ with fixed $z=0$, showing that  
 $\tilde{\tau}(\rho;0,3,741,296)-\tau(0)>0$.}
\label{fig:tildetau}
\end{figure}

Finally, we compare the constant parameters in Proposition~\ref{cor:EI-convg-compare} in Figure~\ref{fig:coeff}.
Given the same probability, it is obvious that the error bound, reflected as the constant parameters, is improved for at least an order of magnitude for most $\delta$ plotted. 
\begin{figure}[H]
  \centering
  \includegraphics[width=0.45\textwidth]{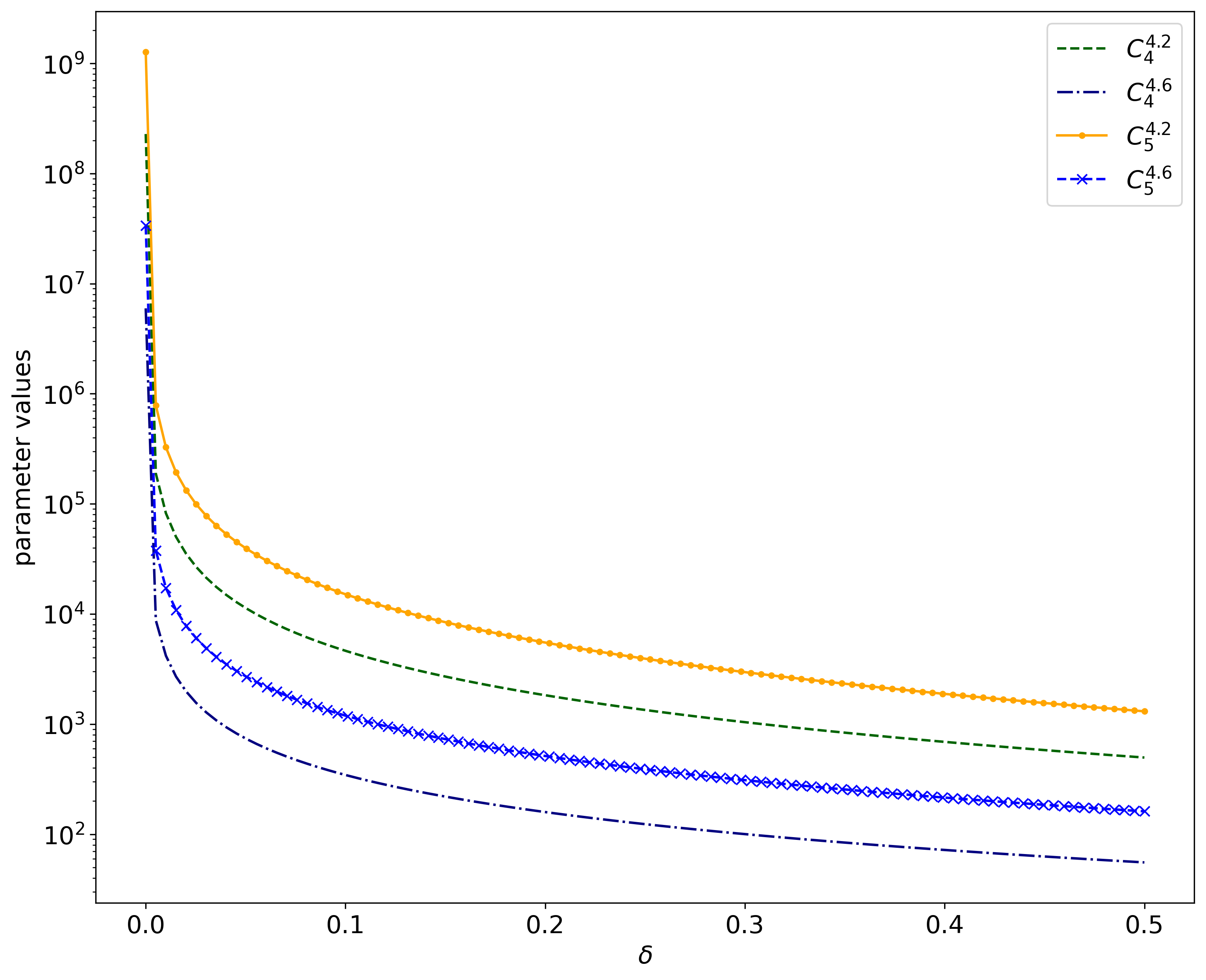}
	\caption{The constant parameters $C_4^{4.2}$, $C_4^{4.6}$, $C_5^{4.2}$, and $C_5^{4.6}$ are plotted in log-scale with respect to $\delta$. It is clear that Theorem~\ref{theorem:EI-convg-star} offers an improved bound, often of at least an order of magnitude.} 
\label{fig:coeff}
\end{figure}

\section{Conclusions}\label{se:conclusion}
This paper addresses gaps in the convergence theory of EI, one of the most widely used BO algorithms. 
We extend the asymptotic convergence results of~\cite{bull2011convergence} to the Bayesian setting where $f$ is sampled from a GP prior.
Further, we establish for the first time the asymptotic convergence results for GP-EI in the noisy case under the GP prior assumption for $f$. 
Last but not least, we use the exploration and exploitation properties introduced here to improve the previously best-known error bounds.
The theoretical insights can guide the design of new EI-based algorithms. 
For instance, given the importance of the standard deviation at the optimal point in our analysis, 
one modification of EI is to add explicit control of the global standard deviation, similar to upper confidence bound (UCB).
Such a modification could lead to improvement in the cumulative regret behavior of GP-EI, a topic for future research.

%\section{Numerical experiments}\label{se:example}
%We test extensively BO-EI with varying $\alpha$.
\section*{Acknowledgments}
This work was performed under the auspices of the U.S. Department of Energy by Lawrence Livermore National Laboratory under contract DE--AC52--07NA27344.  Release number LLNL-JRNL-870576. 

\medskip
\clearpage
\bibliography{bibliography}
\bibliographystyle{unsrt}

%%%%%%%%%%%%%%%%%%%%%%%%%%%%%%%%%%%%%%%%%%%%%%%%%%%%%%%%%%%%

\end{document}